\newtheorem{theorem}{Theorem}
\newtheorem{assumption}{Assumption}
\newtheorem{lemma}{Lemma}
\newtheorem{remark}{Remark}
\newcommand{\rank}{\textnormal{rank}}
\newcommand{\diag}{\textnormal{diag}}
\title{A Spectral Approach to Off-Policy Evaluation for POMDPs}
\author{
Yash Nair$^1$\footnote{Contact Author}\and
Nan Jiang$^2$\\
\affiliations
$^1$Harvard University\\
$^2$University of Illinois Urbana-Champaign\\
\emails
yashnair@college.harvard.edu,
nanjiang@illinois.edu
}
\begin{document}

\maketitle

\begin{abstract}
We consider off-policy evaluation (OPE) in Partially Observable Markov Decision Processes, where the evaluation policy depends only on observable variables but the behavior policy depends on latent states \citep{Tennenholtz_Shalit_Mannor_2020}. Prior work on this problem uses a causal identification strategy based on one-step observable proxies of the hidden state, which relies on the invertibility of  certain one-step moment matrices. In this work, we relax this requirement by using spectral methods and extending one-step proxies both into the past and future. We empirically compare our OPE methods to existing ones and demonstrate their improved prediction accuracy and greater generality. Lastly, we derive a separate Importance Sampling (IS) algorithm which relies on rank, distinctness, and positivity conditions, and not on the strict sufficiency conditions of observable trajectories with respect to the reward and hidden-state structure required by \citet{Tennenholtz_Shalit_Mannor_2020}.
\end{abstract}

\section{Introduction}

We consider the problem of estimating the value of a policy---called the \textit{evaluation policy}---given access only to a batch of trajectories obtained from a different policy---called the \textit{behavior policy}---known as the \textit{off-policy evaluation} (OPE) problem. The OPE problem is well-motivated by its application to real-world scenarios in which the deployment of the evaluation policy is potentially too costly for its performance to be determined by directly intervening with it, but rather must be inferred from previously existing data under a different policy. As an example, in healthcare settings, it may be unsafe to directly test a new experimental treatment with potentially unknown harms and side-effects on patients, and thus reasoning based on previous treatment strategies from electronic health records (EHRs) to infer the new treatment's efficacy is necessary. Alternatively, when designing a new personalized educational curriculum, which could potentially have harmful effects on students' learning \citep{mandel2016offline}, the use of historical curriculum data is required. 

OPE has received much attention in the setting of Markov Decision Processes (MDPs) in which the underlying state is fully observable, and both evaluation and behavior policies are (potentially non-deterministic and/or non-stationary) functions of state \citep{sutton2018reinforcement,doubly_robust,precup}. 
We, however, consider the more challenging regime of Partially Observable MDPs (POMDPs), in which one has access only to \textit{observable} trajectories formed under the behavior policy, which do not include underlying latent states (i.e. hidden confounders), and must infer the value of the evaluation policy given only these observables. Following \citet{Tennenholtz_Shalit_Mannor_2020}, we consider the setting in which the behavior policy depends on the latent state, whereas the evaluation policy at any given time step depends only on the observed history up to that time step. This choice is motivated, for example, by the medical treatment problem and autonomous driving scenarios described by \citet{Tennenholtz_Shalit_Mannor_2020}. 

In this setup, \citet{Tennenholtz_Shalit_Mannor_2020} builds on the work of \citet{miao2018identifying} and gives 
a causal effect identification scheme, given two conditionally independent views of the confounder. 
Their estimator uses one-step observable proxies of the hidden state, and crucially relies on the invertibility of certain \textit{one-step} moment matrices, which is a strong assumption and limits the applicability of their method. Furthermore, they use moment matrices whose entries are \textit{conditional} probabilities, which can be difficult to estimate when the condition event is rare and impossible if measure zero.

In this work we extend  \citet{Tennenholtz_Shalit_Mannor_2020} and address the above issues, inspired by techniques for solving similar problems in the literature of spectral learning of  Predictive State Representations (PSRs) and Hidden Markov Models (HMMs) \citep{hsu20121460, boots,insufficient_stat}. Our work extends and improves upon \citet{Tennenholtz_Shalit_Mannor_2020} in several aspects:
\begin{itemize}
\item Our estimators work for a broader family of POMDPs, specifically those in which the observation space is larger than hidden state space, as is the case in many real-world settings.
\item Our estimators estimate matrices of \textit{joint} probabilities rather than \textit{conditionals}, and thus improve generality by not having to condition on probability zero events. 
\item We are able to further relax a crucial assumption in their work regarding the invertibility of one-step moment matrices by working with \textit{multi-step proxies} for the hidden state; for example, our estimator uses the entire extended history preceding the confounder as one of the proxies, rather than just the previous time step.  
\end{itemize}

While we are able to successfully use multi-step histories as the past proxy, we show that there is difficulty in directly using extended futures as the future proxy (which would further relax rank assumptions). However, by employing the eigendecomposition technique of \citet{kuroki}, we are able---with minimal additional assumptions---to give an estimator which also incorporates extended futures and thus applies even more generally. 

We then empirically validate our estimators, comparing them to prior estimators \citep{Tennenholtz_Shalit_Mannor_2020} and demonstrate their superior predictive accuracy. Finally, by again employing the eigendecomposition technique of \citet{kuroki}, we provide a separate Importance Sampling (IS) algorithm which requires only rank, positivity, and distinctness conditions on certain probability matrices, and not the sufficiency assumptions required by the IS algorithm of \citet{Tennenholtz_Shalit_Mannor_2020}.

\section{Related Work}
\paragraph{OPE in POMDPs} Most closely related to our work is that of \citet{Tennenholtz_Shalit_Mannor_2020}, who consider the finite-horizon OPE problem in tabular POMDPs with equal-sized hidden state and observation spaces by extending  \citet{miao2018identifying} and writing the probability of seeing reward $r_t$, in terms of observable moment matrices under the behavior measure. These matrices have dimensions which scale with the size of the observation space, $\mathcal{Z}$, and are required to be invertible. They further develop a Decoupled POMDP model which factors the state space, $\mathcal{U}$, into both observed and unobserved variables, allowing for a modified OPE algorithm whose moment matrices scale with $|\mathcal{U}|$, which is often much smaller than $|\mathcal{Z}|$. Our work mitigates difficulty in the more-general POMDP setting by allowing for rich observation spaces with greater cardinality than state space, requiring that the observable matrices have rank $|\mathcal{U}|$, rather than  $|\mathcal{Z}|$, and further relaxes assumptions and attains superior performance by extending both histories and futures. Additionally, \citet{oberst} consider counterfactuals, asking about the outcome of a counterfactual trajectory under the evaluation policy, given the observable trajectory's outcome under the behavior policy. Their work differs from ours in that, just as in \citet{Tennenholtz_Shalit_Mannor_2020}, we do not ask about the evaluation policy's counterfactual performance compared to the behavior policy's, but rather are concerned with determining the value of the evaluation policy given observable trajectories under the behavior.

\paragraph{OPE with unobserved confounding} The OPE problem has also been studied in the setting of the MDP with Unmeasured Confounding (MDPUC) \citep{Zhang2016MarkovDP}. The MDPUC framework is orthogonal to POMDPs, with the local confounders assumed to be i.i.d. across time steps. The OPE problem is considered in an infinite-horizon, non-tabular MDPUC by \citet{bennett2020offpolicy}. Critically, however, their work assumes access to a latent variable model for confounders, which we do not assume. Specifically, in the spirit of \citet{Tennenholtz_Shalit_Mannor_2020} and \citet{miao2018identifying}, our initial relaxations of rank assumptions do not even depend on non-parametric identification of the latent variable model. Our extended-future estimator and IS results, however, do rest on the identification of aspects of the latent variable model, but obtain estimates from observable data instead of assuming access to a model.

\paragraph{Causal effect identification} Our work is closely related to unobserved confounding in the causal inference literature, and, specifically, the use of negative controls to minimize confounding bias. Of particular relevance are \citet{miao2018identifying}, \citet{kuroki} and \citet{greenland}, which all consider the static problem of identifying causal effects, assuming multiple observable proxies of the latent confounder. \citet{kuroki} first identify the confounder's error mechanism via eigenvalue analysis and then use the matrix adjustment method \citep{greenland} to determine the causal effect, while \citet{miao2018identifying} directly determine the causal effect without identifying any of the aspects of the confounding model, allowing for weaker assumptions. Our work extends the latter's result to the setting of POMDPs just as in \citet{Tennenholtz_Shalit_Mannor_2020}; our spectral result in the setting of POMDPs also applies to the static setting and hence relaxes rank assumptions in \citet{miao2018identifying}. Additionally, we extend \citet{kuroki}'s analysis to identify causal effects of actions on futures in our extended-future estimator as well as to recover the behavior policy and reward distribution in our IS algorithm.

\section{Preliminaries}
\paragraph{POMDPs} A POMDP is a $7$-tuple $\langle \mathcal{U}, \mathcal{Z}, \mathcal{A}, O, T, R, H\rangle$, where $\mathcal{U}$ and $\mathcal{Z}$ denote the hidden state and observation spaces, respectively, $\mathcal{A}$ denotes the set of actions, $O: \mathcal{U}\times \mathcal{Z} \rightarrow [0,1]$ gives observation emission probabilities, with $O(z|u) := \Pr[z|u]$, $T: \mathcal{U}\times \mathcal{U}\times \mathcal{A}$ governs the hidden state transition dynamics, with $T(u'| u, a) := \Pr[u'|u,a]$, $R$ is a (potentially non-deterministic) function of the hidden state and action giving the reward which is supported in the set $\mathcal{R}$, and $H$ denotes the horizon. At each time step, the agent selects an action, $a$, transitions under $T$, to a new hidden state, $u'$, based on the current hidden state, $u$, and $a$ and then observes an observation, $z$, drawn according to $O(\cdot|u')$, and reward, $r$, drawn from its conditional distribution over the hidden state and action. In this work, we restrict ourselves to finite $\mathcal{U}$, $\mathcal{Z}$, and $\mathcal{A}$. Furthermore, we assume that $|\mathcal{U}| \leq |\mathcal{Z}|$, reflecting the fact that in most real-world settings, the hidden state space is often much smaller than the rich observation space.

Using the notation of \citet{Tennenholtz_Shalit_Mannor_2020}, a trajectory $\tau$ of length $t$ denotes a sequence $(u_0, z_0, a_0, \ldots, u_{t}, z_{t}, a_{t})$ while $\tau^o$ denotes the observable trajectory $(z_0, a_0, \ldots, z_t, a_t)$. We use $\mathcal{T}_t$ to denote the set of trajectories of length $t$, and, correspondingly, $\mathcal{T}^o_t$ to denote the set of observable trajectories of length $t$. Additionally, an observable history of length $t$ is given by a sequence of the form $(z_0, a_0, \ldots, a_{t-1}, z_t)$, and is denoted by $h^o_t$. Finally, we assume the existence of an observation $z_{-1}$ preceding the initial time step which is conditionally independent of $z_0$ and $a_0$ given $u_0$.

\paragraph{Policies} Let $\pi_b$ and $\pi_e$ denote the behavior and evaluation policies, respectively. We consider the setting in which the behavior policy at time step $t$, $\pi_b^{(t)}$, depends only on the hidden state, $u$, whereas the evaluation policy at time $t$, $\pi_e^{(t)}$, depends on the observable history, $h^o_t$. In particular, $\pi_b^{(t)}(a_t|u_t)$ denotes the probability that an agent (with access to hidden state) following $\pi_b$ selects action $a_t$ at time step $t$ given that it is at $u_t$, and $\pi_e^{(t)}(a_t|h^o_t)$ denotes the probability that an agent (without access to hidden state) following $\pi_e$ selects action $a_t$ at time step $t$ given that it has seen the observable history $h^o_t$. Additionally, we consider the finite-horizon undiscounted setting so that for any policy $\pi$ (either depending on observable histories or hidden state), its value $v_H(\pi)$ is given by $\mathbb{E}_\pi\left[\sum_{t=0}^Hr_t\right]$, the expected sum of rewards gotten by following $\pi$. Finally, let $P^b$ and $P^e$ denote measures over trajectories induced by the behavior and evaluation policies, respectively.

In addition to the above, we use a double vertical bar notation to indicate intervening \citep{boots}. For example, $P(z_0, \ldots, z_t||a_0,\ldots, a_{t-1})$ denotes the probability of seeing observations $(z_0, \ldots, z_t)$, given that we intervened with actions $a_0,\ldots, a_{t-1}$. Given a policy, $\pi$, $P^\pi(z_0, \ldots, z_t|a_0, \ldots, a_{t-1})$ is in general not equal to $P(z_0, \ldots, z_t||a_0,\ldots, a_{t-1})$, as in the former, actions are random variables, which may convey information about hidden state, whereas in the latter, they are non-random interventions. This can also be written using the \textit{do} operator of \citet{pearl}.

We use matrix notations similar to \citet{Tennenholtz_Shalit_Mannor_2020}. Let random variables $\mathbf{x}, \mathbf{y}, \mathbf{z}$ be supported in $X = \{x_1, \ldots, x_{n_1}\}, Y = \{y_1, \ldots, y_{n_2}\}, Z = \{z_1, \ldots, z_{n_3}\}$. Then $P(y|X), P(X), P(Y,x, Z)$ denote a row vector, column vector, and matrix, with $(P(y|X))_i = \Pr[y|x_i], (P(X))_i = \Pr[x_i], (P(Y,x,Z))_{i,j} = \Pr[y_i,x, z_j]$. For joint probabilities, the first (second) set listed denotes vectorization across rows (columns); for conditionals, the set listed before (after) the conditioning bar indicates vectorization across rows (columns).

\paragraph{Problem} Formally, the OPE problem asks, given a batch of observable trajectories collected under the behavior policy, to estimate the value of the evaluation policy. In other words, we must estimate $v_H(\pi_e)$ given only observable trajectories $\tau^o$ drawn from the behavior measure $P^b$.

\subsection{One-Step Proxies}
We now briefly review the OPE algorithm of \citet{Tennenholtz_Shalit_Mannor_2020}, who approach the problem by estimating $P^e(r_t)$, which can then be used to approximate $v_H(\pi_e)$ by estimating the expected reward at each timestep. (In Sections~\ref{sec:relax} and \ref{sec:multi-future} we also focus on estimating $P^e(r_t)$.) They make the following assumptions on observable probability matrices.

\begin{assumption}\label{invertibility}
$P^b(Z_i|a_i, Z_{i-1})$ is invertible $\forall i \leq H, a_i \in \mathcal{A}$.
\end{assumption}

Defining $\Pi^e_t(\tau^o) = \prod_{i=0}^t\pi_e^{(i)}(a_i|h^o_i)$, they give the following estimator:
\begin{theorem}\label{original}
Under Assumption \ref{invertibility}, $P^e(r_t)$ can be written as \begin{multline} \label{eq:5}
\sum_{\tau^o \in \mathcal{T}^o_t} \Pi_e(\tau^o)P^b(r_t, z_t|a_t, Z_{t-1}) \\ \cdot \prod_{i=t-1}^0\left(P^b(Z_{i+1}|a_{i+1}, Z_i)^{-1}P^b(Z_{i+1}, z_i|a_i, Z_{i-1})\right) \\ \cdot P^b(Z_0|a_0, Z_{-1})^{-1}P^b(Z_0).
\end{multline}
\end{theorem}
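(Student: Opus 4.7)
The plan is to reduce both sides of the claimed identity to the same matrix product in latent-state space and then match them via a telescoping argument. First, I would rewrite the target as
\[
P^e(r_t) \;=\; \sum_{\tau^o \in \mathcal{T}^o_t} \Pi_e(\tau^o)\, P(z_0,\ldots,z_t, r_t \,\|\, a_0,\ldots,a_t),
\]
using the fact that the emission, transition and reward kernels are shared between the behavior and evaluation measures, so only the policy-dependent action probabilities $\Pi_e(\tau^o)$ distinguish $P^e$ from the interventional distribution. Using the Markov factorization of the POMDP, the interventional probability admits the representation
\[
P(z_0,\ldots,z_t,r_t \,\|\, a_0,\ldots,a_t) \;=\; \mathbf{w}_{r_t, z_t, a_t}^\top\, A_{a_{t-1}}^{z_{t-1}} \cdots A_{a_0}^{z_0}\, P(U_0),
\]
where $A_a^z \in \mathbb{R}^{|\mathcal{U}|\times|\mathcal{U}|}$ has entries $T(u'|u,a)\,O(z|u)$ and $\mathbf{w}_{r,z,a}(u) = O(z|u)\,R(r|u,a)$. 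The proof then reduces to producing this latent-space product from the observable matrices appearing in (\ref{eq:5}).

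Next, I would establish the two key observable-to-latent decompositions, valid for every $i$,
\begin{align*}
P^b(Z_{i+1}\mid a_{i+1}, Z_i) &= O\cdot P^b(U_{i+1}\mid a_{i+1}, Z_i),\\
P^b(Z_{i+1}, z_i\mid a_i, Z_{i-1}) &= O\cdot A_{a_i}^{z_i}\cdot P^b(U_i\mid a_i, Z_{i-1}),
\end{align*}
where $O\in\mathbb{R}^{|\mathcal{Z}|\times|\mathcal{U}|}$ is the emission matrix. Both follow by inserting the latent state via the tower rule and applying conditional independences read off the POMDP DAG (notably $z_j \perp (a_j, Z_{j-1})\mid u_j$ and $u_{i+1}\perp (z_i, z_{i-1})\mid (u_i, a_i)$). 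Assumption~\ref{invertibility} combined with the standing hypothesis $|\mathcal{U}|\le|\mathcal{Z}|$ forces $|\mathcal{U}|=|\mathcal{Z}|$ and makes both $O$ and each factor $P^b(U_\cdot\mid\cdot)$ square and invertible, since the observable LHS is invertible. Hence
\[
P^b(Z_{i+1}\mid a_{i+1}, Z_i)^{-1}\, P^b(Z_{i+1}, z_i\mid a_i, Z_{i-1}) \;=\; P^b(U_{i+1}\mid a_{i+1}, Z_i)^{-1}\, A_{a_i}^{z_i}\, P^b(U_i\mid a_i, Z_{i-1}).
\]

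Then I would telescope: in the product $\prod_{i=t-1}^{0}$, consecutive inner factors $P^b(U_j\mid a_j, Z_{j-1})\, P^b(U_j\mid a_j, Z_{j-1})^{-1}$ cancel, leaving $P^b(U_t\mid a_t, Z_{t-1})^{-1}\, A_{a_{t-1}}^{z_{t-1}}\cdots A_{a_0}^{z_0}\, P^b(U_0\mid a_0, Z_{-1})$. The two boundary factors then clean up the remaining latent-conditional matrices: an analogous decomposition $P^b(r_t, z_t\mid a_t, Z_{t-1}) = \mathbf{w}_{r_t,z_t,a_t}^\top\, P^b(U_t\mid a_t, Z_{t-1})$ cancels the inverse on the left to give $\mathbf{w}_{r_t,z_t,a_t}^\top$, while on the right $P^b(U_0\mid a_0, Z_{-1})\, P^b(Z_0\mid a_0, Z_{-1})^{-1} = O^{-1}$ and $O^{-1}\,P^b(Z_0) = P(U_0)$ (using that the $Z_0$ marginal equals $O\,P(U_0)$ and that the initial latent distribution is policy-independent). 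Substituting back recovers exactly the latent-space expression for the interventional probability, summed against $\Pi_e(\tau^o)$, which equals $P^e(r_t)$.

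The step I expect to require the most care is verifying the conditional independences underlying the two latent decompositions \emph{under the behavior measure}: because $\pi_b$ introduces arrows $u_j\to a_j$, conditioning on $a_j$ is informative about the latent state, so one must d-separate in the full behavior DAG rather than rely on a plain HMM factorization. Once those independences are in hand and invertibility of $O$ is pinned down from Assumption~\ref{invertibility}, the rest is purely algebraic telescoping.
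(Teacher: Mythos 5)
Your proof is correct, and it arrives at the identity by the same two underlying facts the paper uses---the reduction $P^e(r_t)=\sum_{\tau^o}\Pi_e(\tau^o)P(r_t,z_t,\ldots,z_0\,\|\,a_0,\ldots,a_t)$ together with the latent factorization \eqref{eq:1}---but you organize the observable-to-latent passage differently. The paper (following \citet{Tennenholtz_Shalit_Mannor_2020} and \citet{miao2018identifying}) proves the pairwise identities \eqref{eq:2}--\eqref{eq:4}, which express a product of \emph{two consecutive} latent operators in terms of observables plus a trailing latent factor $P^b(Z_{i-1}|a_{i-1},U_{i-1})$ that is carried through an induction. You instead factor \emph{each individual} observable matrix as $O\cdot(\text{latent operator})\cdot P^b(U_\cdot|\cdot)$ and let the invertible link matrices cancel in a single telescoping product, in the style of observable-operator/PSR derivations. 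Your route is arguably cleaner and makes explicit where invertibility of $O$ and of the posterior matrices $P^b(U_i|a_i,Z_{i-1})$ enters (correctly deducing $|\mathcal{U}|=|\mathcal{Z}|$ from Assumption \ref{invertibility} and $|\mathcal{U}|\le|\mathcal{Z}|$); the paper's pairwise-identity formulation is the one that generalizes to the causal-proxy viewpoint and to the spectral relaxations in Theorems \ref{one-step-thm}--\ref{history-thm}, where no single emission matrix can be inverted. Your boundary computations ($P^b(U_0|a_0,Z_{-1})P^b(Z_0|a_0,Z_{-1})^{-1}=O^{-1}$ and $O^{-1}P^b(Z_0)=P(U_0)$) and your caution about verifying the conditional independences under the behavior DAG (where $u_j\to a_j$) are both on point.
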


We give a sketch of their proof (for details, see \citet{Tennenholtz_Shalit_Mannor_2020}).
\begin{proof}[Proof Sketch]
Note that \begin{equation}\label{interventions}P^e(r_t) = \sum_{\tau^o \in \mathcal{T}^o_t}\Pi^e_t(\tau^o)P^b(r_t, z_t, z_{t-1}, \ldots, z_0||a_0, \ldots, a_t).\end{equation} The analysis rests on non-parametric identification of the causal effect of $a_i$ on the outcome $(u_{i+1}, z_i)$ given confounded proxies $z_{i-1}$ and $z_i$ at each time step. This is done by first noting that \begin{multline} \label{eq:1}
P(r_t, z_t, z_{t-1}, \ldots, z_0||a_0, \ldots, a_t) \\ = P^b(r_t, z_t|a_t, U_t)\left(\prod_{i=t-1}^0P^b(U_{i+1}, z_i|a_i, U_i)\right)P^b(U_0),
\end{multline} and extending off of the static causal identification setting, to show that \begin{multline} \label{eq:2}
P^b(U_{i+1}, z_i|a_i, U_i)P^b(U_i, z_{i-1}|a_{i-1}, U_{i-1}) \\ = P^b(U_{i+1}, z_i|a_i, Z_{i-1})P^b(Z_i|a_i, Z_{i-1})^{-1} \\ \cdot P^b(Z_i, z_{i-1}|a_{i-1}, Z_{i-2})P^b(Z_{i-1}|a_{i-1}, Z_{i-2})^{-1} \\ \cdot P^b(Z_{i-1}|a_{i-1}, U_{i-1})
\end{multline} and \begin{multline} \label{eq:3}
P^b(r_t, z_{t}|a_{t}, U_{t})P^b(U_{t}, z_{t-1}|a_{t-1}, U_{t-1}) \\ = P^b(r_t, z_{t}|a_{t}, Z_{t-1})P^b(Z_{t}|a_{t}, Z_{t-1})^{-1} \\ \cdot P^b(Z_{t}, z_{t-1}|a_{t-1}, Z_{t-2})P^b(Z_{t-1}|a_{t-1}, Z_{t-2})^{-1} \\ \cdot P^b(Z_{t-1}|a_{t-1}, U_{t-1})
\end{multline} while also observing that \begin{multline} \label{eq:4}
    P^b(Z_i|a_i, U_i)P^b(U_i, z_{i-1}|a_{i-1}, Z_{i-2}) \\= P^b(Z_i, z_{i-1}|a_{i-1}, Z_{i-2}).
\end{multline} The result then follows from these four core identities and induction.
\end{proof}

In the language of causal inference, the proof is made possible by, at each time step $i$, statically viewing $z_{i-1}$ as an observable proxy emitted by the latent state $u_i$, giving the causal diagram in Figure \ref{fig:scms}(a) illustrating that the proxies $z_i$ and $z_{i-1}$ are conditionally independent given $u_i$, allowing for the application of the identification scheme of \citet{miao2018identifying}. 

\begin{figure}
    \centering
    \includegraphics[scale=0.8]{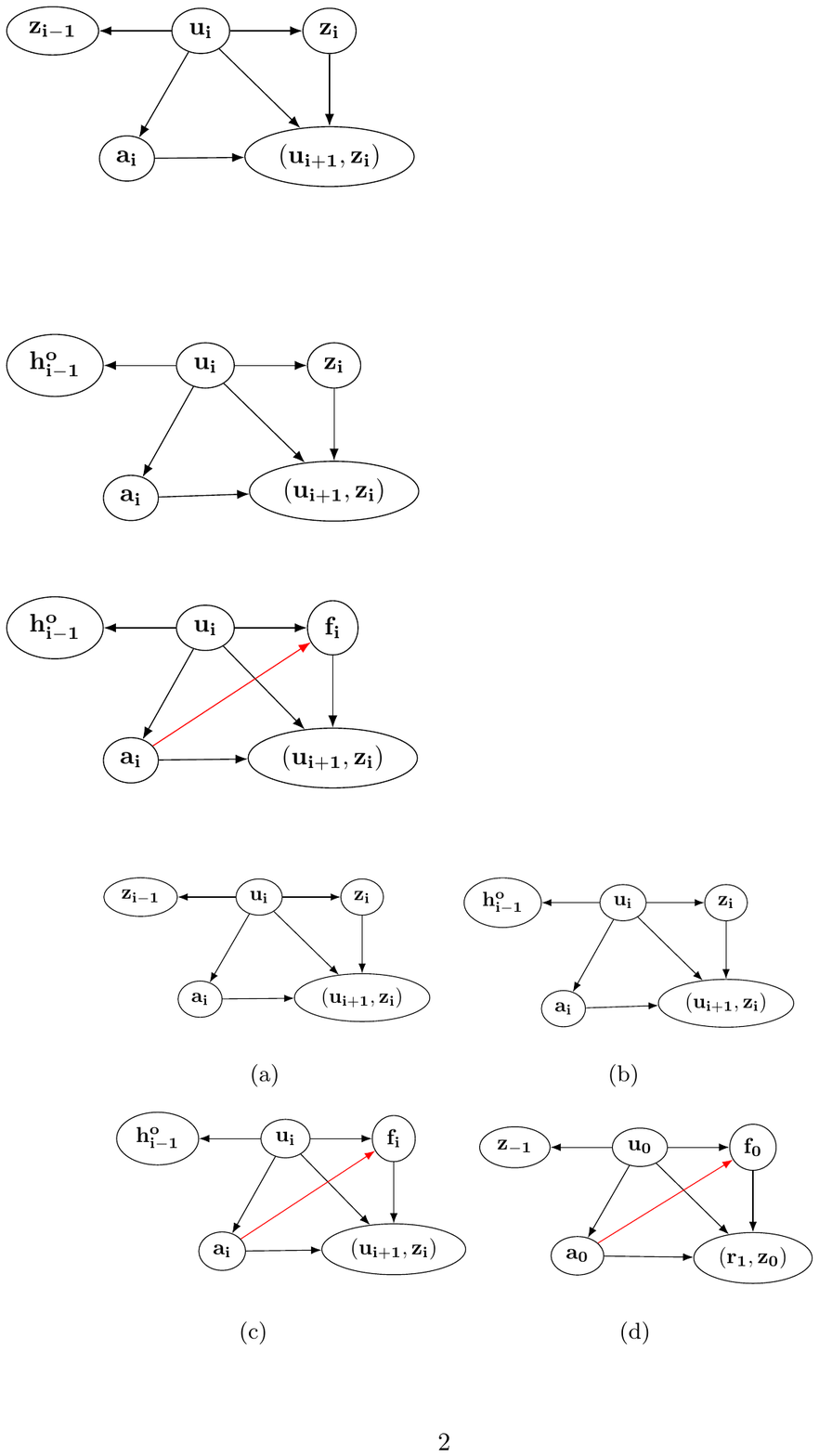}
    \caption{Causal diagrams for proxies of the confounder $\mathbf{u_i}$: one-step proxies (a), multi-step histories (b), multi-step histories, multi-step futures (c), and multi-step futures with reward-observation outcome (d).}
    \label{fig:scms}
\end{figure}

\section{Relaxing Rank Assumptions} 
\label{sec:relax}
The proof of Theorem \ref{original} requires that $|\mathcal{U}| = |\mathcal{Z}|$ which is unrealistic in most real-world settings, where the hidden state space is often much smaller than the rich observation space. Inspired by spectral learning of PSRs and POMDPs \citep{boots,insufficient_stat,hsu20121460}, we show that we can allow for the more general assumption that $|\mathcal{U}| \leq |\mathcal{Z}|$ by requiring that $\rank(P^b(Z_{i}|a_{i}, Z_{i-1})) = |\mathcal{U}|$. We then further relax rank assumptions by using the entire observable history as a proxy, rather than just the observation from the previous time step. The results in the following two sections hold using matrices of conditional probabilities, however we state them using joint probabilities as doing so is more general by allowing for probability zero events which would otherwise be conditioned on. Additionally, all matrix rank assumptions (\ref{rank}--\ref{future-rank}) are stated with equality rather than inequality (i.e., rank equal to $|\mathcal{U}|$ rather than at least $|\mathcal{U}|$) as all such matrices have rank upper bounded by $|\mathcal{U}|$; assuming rank $|\mathcal{U}|$ should be thought of as a ``full rank'' assumption. 

\subsection{Spectral Relaxation of One-Step Proxies}
\begin{assumption}\label{rank}
$\rank(P^b(Z_i,a_i, Z_{i-1})) = |\mathcal{U}|, \forall i \leq H, a_i \in \mathcal{A}$. 
\end{assumption}
In contrast to Theorem~\ref{invertibility} which requires $|\mathcal{Z}|=|\mathcal{U}|$ and $P^b(Z_i|a_i, Z_{i-1})$ to be invertible, here we only require $|\mathcal{Z}| \ge |\mathcal{U}|$ and the matrix to have the maximal possible rank $|\mathcal{U}|$. Moreover, we place the maximum rank condition on the joint probability matrix $P^b(Z_i,a_i, Z_{i-1})$ instead of its conditional version in Assumption~\ref{invertibility}, which is strictly more general: Assumption~\ref{rank} automatically implies that $\rank(P^b(Z_i|a_i, Z_{i-1})) = |\mathcal{U}|$ as $P^b(Z_i,a_i, Z_{i-1}) = P^b(Z_i|a_i, Z_{i-1})\text{diag}(P^b(Z_{i-1},a_i))$ (for matrices, $A$, $B$, $\rank(AB) \leq \min(\rank(A), \rank(B))$). 

With this Assumption, we are able to derive the following estimator of $P^e(r_t)$.
\begin{theorem}\label{one-step-thm}
Under Assumption \ref{rank}, there exists matrices $M_{i,a_i} \in \mathbb{R}^{|\mathcal{U}|\times |\mathcal{Z}|}$ and $M_{i, a_i}' := (M_{i,a_i}P(Z_i,a_i, Z_{i-1}))^+$ such that $P^e(r_t)$ is equal to \begin{multline}
    \sum_{\tau^o \in \mathcal{T}^o_t}\Pi_e(\tau^o)P^b(r_t, z_t,a_t, Z_{t-1})M'_{t,a_t} \\ \cdot \prod_{i=t-1}^0\left(M_{i+1,a_{i+1}}P^b(Z_{i+1}, z_i,a_i, Z_{i-1})M'_{i,a_i}\right)M_{0,a_0}P^b(Z_0)
\end{multline}
\end{theorem}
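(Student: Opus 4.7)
The plan is to mimic the proof of Theorem~\ref{original}, replacing each one-step inverse $P^b(Z_i|a_i, Z_{i-1})^{-1}$ by a spectral ``observable-operator'' block built from $M_{i,a_i}$ and $M'_{i,a_i}$. The starting point is the same hidden-state factorization used in Theorem~\ref{original}: by the POMDP's conditional-independence structure, every observable joint moment appearing in the formula decomposes as a product of an ``emission'' matrix and a ``belief'' matrix. For instance, $P^b(Z_i, a_i, Z_{i-1}) = P^b(Z_i, a_i \mid U_i)\, P^b(U_i, Z_{i-1})$. Since the left-hand side has rank $|\mathcal{U}|$ by Assumption~\ref{rank}, both factors attain their maximal rank $|\mathcal{U}|$; in particular $P^b(Z_i, a_i \mid U_i) \in \mathbb{R}^{|\mathcal{Z}|\times|\mathcal{U}|}$ is full column rank (which also forces $\pi_b(a_i\mid u) > 0$ for every $u$, since a zero column would drop the rank), and $P^b(U_i, Z_{i-1}) \in \mathbb{R}^{|\mathcal{U}|\times|\mathcal{Z}|}$ is full row rank.

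Given this, I would take $M_{i,a_i}$ to be any $|\mathcal{U}|\times|\mathcal{Z}|$ matrix for which $M_{i,a_i}\, P^b(Z_i, a_i \mid U_i)$ is an invertible $|\mathcal{U}|\times|\mathcal{U}|$ matrix; existence follows from the full column rank above, with concrete choices such as $M_{i,a_i} = P^b(Z_i, a_i \mid U_i)^{+}$ or the top-$|\mathcal{U}|$ left singular vectors of $P^b(Z_i, a_i, Z_{i-1})$. Because $P^b(U_i, Z_{i-1})$ is wide and full row rank, applying the identity $(AB)^{+} = B^{+}A^{-1}$ (valid when $A$ is square-invertible and $B$ is full row rank, as a short check against the four Moore--Penrose axioms shows) gives
$M'_{i,a_i} = P^b(U_i, Z_{i-1})^{+}\bigl(M_{i,a_i}\, P^b(Z_i, a_i \mid U_i)\bigr)^{-1}.$

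The core of the proof is then an ``observable-operator'' cancellation identity,
$M_{i+1,a_{i+1}}\, P^b(Z_{i+1}, z_i, a_i, Z_{i-1})\, M'_{i,a_i} = N_{i+1,a_{i+1}}\, P^b(U_{i+1}, z_i \mid a_i, U_i)\, N_{i,a_i}^{-1},$
where $N_{i,a_i} := M_{i,a_i}\, P^b(Z_i\mid U_i)$ is invertible (equal to $M_{i,a_i}$ times the pure emission matrix). This follows by expanding $P^b(Z_{i+1}, z_i, a_i, Z_{i-1}) = P^b(Z_{i+1}\mid U_{i+1})\, P^b(U_{i+1}, z_i, a_i \mid U_i)\, P^b(U_i, Z_{i-1})$, killing the middle pseudoinverse via $P^b(U_i, Z_{i-1})P^b(U_i, Z_{i-1})^{+} = I_{|\mathcal{U}|}$, and canceling the diagonal $\pi_b(a_i\mid\cdot)$ factors that appear when converting between the joint form $P^b(\,\cdot\,, a_i \mid U_i)$ and the conditional form $P^b(\,\cdot\, \mid a_i, U_i)$. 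Entirely analogous manipulations on the boundary blocks yield $P^b(r_t, z_t, a_t, Z_{t-1})\, M'_{t,a_t} = P^b(r_t, z_t \mid a_t, U_t)\, N_{t,a_t}^{-1}$ and $M_{0,a_0}\, P^b(Z_0) = N_{0,a_0}\, P^b(U_0)$.

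Substituting these three identities into the expression in the theorem, the internal $N_{i,a_i}^{-1} N_{i,a_i}$ pairs telescope across adjacent blocks, collapsing everything to $P^b(r_t, z_t \mid a_t, U_t)\, \prod_{i=t-1}^0 P^b(U_{i+1}, z_i \mid a_i, U_i)\, P^b(U_0)$, which, when summed against $\Pi_e(\tau^o)$ over $\mathcal{T}_t^o$, equals $P^e(r_t)$ by equations~(\ref{interventions}) and~(\ref{eq:1}) from the Theorem~\ref{original} proof sketch. The main obstacle I anticipate is carefully tracking the $\pi_b$-dependent diagonal reweightings so that adjacent $M \cdot (\text{observable}) \cdot M'$ sandwiches conjugate exactly as $N \cdot (U\text{-matrix}) \cdot N^{-1}$; once that algebra is set up, the telescoping argument is essentially automatic.
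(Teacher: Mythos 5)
Your proposal is correct and follows essentially the same route as the paper's proof (Lemma~\ref{one-step-lemma} plus induction): both factor each observable moment matrix through the hidden state, show that $M'_{i,a_i}$ absorbs the ``belief'' factor and leaves behind the inverse of the compressed emission matrix (your $N_{i,a_i}^{-1}$, the paper's $(M_{i,a_i}P^b(Z_i|a_i,U_i))^{-1}$), and telescope the resulting conjugations down to the hidden-state factorization of $P^b(r_t, z_t, \ldots, z_0||a_0,\ldots,a_t)$. The only differences are cosmetic --- you slice the joint as $P^b(Z_i,a_i|U_i)P^b(U_i,Z_{i-1})$ and obtain the closed form for $M'_{i,a_i}$ via $(AB)^+ = B^+A^{-1}$ rather than the paper's SVD-specific computation, and your explicit observation that Assumption~\ref{rank} forces $\pi_b(a_i|u)>0$ (so the diagonal policy factors cancel) is a nice touch the paper leaves implicit.
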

\begin{remark}[Choice of $M_{i, a_i}$] \label{remark1}
The matrices $M_{i,a_i}$ must be chosen so as to satisfy the non-degenerate conditions that $\rank(M_{i,a_i}) = |\mathcal{U}|$ and $\rank(M_{i,a_i}P(Z_i,a_i, Z_{i-1})) = |\mathcal{U}|$. One such choice is that $M_{i,a_i}$ is the top $|\mathcal{U}|$ left singular vectors of $P(Z_i,a_i, Z_{i-1})$. Alternatively, such a non-degenerate $M_{i,a_i}$ may be drawn from any continuous probability distribution over matrices of the appropriate dimension, almost surely.
\end{remark}

The proof of Theorem \ref{one-step-thm} relies on deriving the following analogs to equations \eqref{eq:2} -- \eqref{eq:3} and using equations \eqref{eq:1} and \eqref{eq:4} to similarly proceed by induction (see Appendix \ref{one-step-proof} for proof).

\begin{lemma}\label{one-step-lemma}
Given Assumption \ref{rank}, define $M_{i,a_i}$ and $M'_{i,a_i}$ as above. Then \begin{multline}\label{eq:2_analog}
    P^b(U_{i+1}, z_i|a_i, U_i)P^b(U_i, z_{i-1}|a_{i-1}, U_{i-1}) \\ = P^b(U_{i+1}, z_i,a_i, Z_{i-1})M'_{i,a_i}M_{i,a_i}P^b(Z_i, z_{i-1},a_{i-1}, Z_{i-2}) \\ \cdot M'_{i-1,a_{i-1}} M_{i-1,a_{i-1}}P^b(Z_{i-1}|a_{i-1}, U_{i-1})
\end{multline} and \begin{multline}\label{eq:3_analog}
    P^b(r_{t}, z_t|a_t, U_t)P^b(U_t, z_{t-1}|a_{t-1}, U_{t-1}) \\ = P^b(r_t, z_t,a_t, Z_{t-1})M'_{t,a_t}M_{t,a_t}P^b(Z_t, z_{t-1},a_{t-1}, Z_{t-2}) \\ \cdot M'_{t-1,a_{t-1}} M_{t-1,a_{t-1}}P^b(Z_{t-1}|a_{t-1}, U_{t-1})
\end{multline}
\end{lemma}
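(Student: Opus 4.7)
My plan is to mirror the derivation of \eqref{eq:2} and \eqref{eq:3} from the proof sketch of Theorem~\ref{original}, with every $P^b(Z_i|a_i, Z_{i-1})^{-1}$ replaced by the rectangular spectral surrogate $M'_{i,a_i} M_{i,a_i}$. The identities \eqref{eq:2_analog} and \eqref{eq:3_analog} have the same structure and differ only in whether the leftmost ``outcome'' is $(U_{i+1}, z_i)$ or $(r_t, z_t)$, so one calculation dispatches both.

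First, I would collect the POMDP factorizations that follow from conditional independence. Since $Z_i$ depends only on $U_i$, we have $P^b(Z_i, a_i, Z_{i-1}) = O_i\, P^b(U_i, a_i, Z_{i-1})$ and $P^b(Z_i, z_{i-1}, a_{i-1}, Z_{i-2}) = O_i\, P^b(U_i, z_{i-1}, a_{i-1}, Z_{i-2})$, where $O_i := P^b(Z_i|U_i)$ is the emission matrix. Using the Markov property together with $a_i \indep Z_{i-1} \mid U_i$ one also obtains $P^b(U_{i+1}, z_i, a_i, Z_{i-1}) = P^b(U_{i+1}, z_i | a_i, U_i)\, P^b(U_i, a_i, Z_{i-1})$ and $P^b(U_i, z_{i-1}, a_{i-1}, Z_{i-2}) = P^b(U_i, z_{i-1} | a_{i-1}, U_{i-1})\, P^b(U_{i-1}, a_{i-1}, Z_{i-2})$, as well as $P^b(Z_{i-1}|a_{i-1}, U_{i-1}) = O_{i-1}$. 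Assumption~\ref{rank}, applied to the first factorization, forces $O_i$ to have full column rank $|\mathcal{U}|$ and $P^b(U_i, a_i, Z_{i-1})$ to have full row rank $|\mathcal{U}|$.

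Second, I would analyze the spectral surrogate. Setting $K_i := M_{i, a_i} O_i$, the non-degeneracy conditions of Remark~\ref{remark1} imply that $K_i \in \mathbb{R}^{|\mathcal{U}| \times |\mathcal{U}|}$ is invertible. Because $M_{i, a_i} P^b(Z_i, a_i, Z_{i-1}) = K_i\, P^b(U_i, a_i, Z_{i-1})$ is the product of an invertible matrix and a full-row-rank matrix, the Moore--Penrose identity $(AB)^+ = B^+ A^{-1}$ (valid in this regime) yields $M'_{i, a_i} = P^b(U_i, a_i, Z_{i-1})^+ K_i^{-1}$, and hence $M'_{i, a_i} M_{i, a_i} = P^b(U_i, a_i, Z_{i-1})^+ L_i$ where $L_i := K_i^{-1} M_{i, a_i}$ satisfies $L_i O_i = I$.

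Third, substituting all of the above into the RHS of \eqref{eq:2_analog} triggers a telescoping cancellation: the $P^b(U_i, a_i, Z_{i-1})$ appearing on the right of $P^b(U_{i+1}, z_i, a_i, Z_{i-1})$ is annihilated by its Moore--Penrose right-inverse (full row rank), after which $L_i$ meets $O_i$ from the factorization of $P^b(Z_i, z_{i-1}, a_{i-1}, Z_{i-2})$ and collapses to $I$; the same pair of cancellations happens one step later, ending with $L_{i-1} O_{i-1} = I$ via $P^b(Z_{i-1}|a_{i-1}, U_{i-1}) = O_{i-1}$. What remains is precisely $P^b(U_{i+1}, z_i | a_i, U_i)\, P^b(U_i, z_{i-1} | a_{i-1}, U_{i-1})$, the LHS of \eqref{eq:2_analog}. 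The proof of \eqref{eq:3_analog} is identical after substituting the parallel factorization $P^b(r_t, z_t, a_t, Z_{t-1}) = P^b(r_t, z_t | a_t, U_t)\, P^b(U_t, a_t, Z_{t-1})$ for the $(U_{i+1}, z_i)$ one, which holds because $(r_t, z_t) \indep Z_{t-1} \mid (U_t, a_t)$. The main delicate step is invoking $(KP)^+ = P^+ K^{-1}$ under the asymmetric rank conditions on $K$ and $P$; once that identity is in hand, everything else reduces to careful bookkeeping of POMDP conditional independences.
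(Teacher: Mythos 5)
Your proposal is correct and follows essentially the same route as the paper's proof: both arguments telescope through the latent factorizations $P^b(Z_i,a_i,Z_{i-1}) = P^b(Z_i|a_i,U_i)P^b(U_i,a_i,Z_{i-1})$ and $P^b(Z_i,z_{i-1},a_{i-1},Z_{i-2}) = P^b(Z_i|a_i,U_i)P^b(U_i,z_{i-1},a_{i-1},Z_{i-2})$, using the rank-$|\mathcal{U}|$ assumption to cancel the full-row-rank history factor against its pseudoinverse and the invertible $|\mathcal{U}|\times|\mathcal{U}|$ matrix $M_{i,a_i}P^b(Z_i|a_i,U_i)$ against its inverse. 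The only real difference is cosmetic: where you invoke the reverse-order law $(K_iB)^+ = B^+K_i^{-1}$ (valid here since $K_i$ is invertible and $B$ has full row rank, so your ``delicate step'' is sound and, pleasantly, choice-agnostic in $M_{i,a_i}$), the paper verifies the same cancellation $M_{i,a_i}P^b(Z_i,a_i,Z_{i-1})M'_{i,a_i} = I$ by explicit SVD computation for the particular choice $M_{i,a_i} = I_{|\mathcal{U}|\times|\mathcal{Z}|}U_{i,a_i}^T$.
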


\subsection{Multi-Step Histories}
We now show that, by extending histories, even weaker rank assumptions can be made. Specifically, letting $\mathcal{H}^o_{t}$ denote the set of observable histories of length $t$, the matrices, $P^b(Z_i,a_i, Z_{i-1})$ and $P^b(Z_{i+1}, z_i,a_i, Z_{i-1})$, in Lemma \ref{one-step-lemma} and Theorem \ref{one-step-thm} can be replaced with $P^b(Z_i,a_i, \mathcal{H}^o_{i-1})$ and $P^b(Z_{i+1}, z_i,a_i, \mathcal{H}^o_{i-1})$, respectively. We make the following rank assumption:

\begin{assumption}\label{history-rank}
$\rank(P^b(Z_i,a_i, \mathcal{H}^o_{i-1})) = |\mathcal{U}|, \forall i \leq H, a_i \in \mathcal{A}$.
\end{assumption}
This assumption is strictly weaker than our previous Assumption~\ref{rank}. This is because $P^b(Z_i,a_i, Z_{i-1})$ can be written as $P^b(Z_i,a_i, \mathcal{H}^o_{i-1})J_i$, where $J_i \in \mathbb{R}^{|\mathcal{H}^o_{i-1}|\times|\mathcal{Z}_{i-1}|}$ marginalizes out all parts of the history prior to timestep $i-1$ for each $z_{i-1}$. Since $P^b(Z_i,a_i, Z_{i-1})$ is a projection of $P^b(Z_i,a_i, \mathcal{H}^o_{i-1})$, the rank of the latter cannot be smaller than that of the former. We then obtain the following:
\begin{theorem}\label{history-thm}
Under Assumption \ref{history-rank}, there exists $M_{i,a_i}$ and $M'_{i,a_i} := (M_{i,a_i}P^b(Z_i,a_i, \mathcal{H}^o_{i-1}))^+$, such that the probability $P^e(r_t)$ is equal to \begin{multline}
    \sum_{\tau^o \in \mathcal{T}^o_t}\Pi_e(\tau^o)P^b(r_t, z_t,a_t, \mathcal{H}^o_{t-1})M'_{t,a_t} \\ \cdot \prod_{i=t-1}^0\left(M_{i+1,a_{i+1}}P^b(Z_{i+1}, z_i,a_i, \mathcal{H}^o_{i-1})M'_{i,a_i}\right)M_{0,a_0}P^b(Z_0)
\end{multline}
\end{theorem}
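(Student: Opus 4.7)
The plan is to carry through the same argument as in Theorem \ref{one-step-thm} but with the one-step past proxy $z_{i-1}$ replaced by the entire observable history $h^o_{i-1}$. Specifically, I would first establish a multi-step analog of Lemma \ref{one-step-lemma}, namely the two identities \eqref{eq:2_analog} and \eqref{eq:3_analog} with every occurrence of $Z_{i-1}$ and $Z_{i-2}$ replaced by $\mathcal{H}^o_{i-1}$ and $\mathcal{H}^o_{i-2}$, and with $M_{i, a_i}$ and $M'_{i, a_i} := (M_{i, a_i} P^b(Z_i, a_i, \mathcal{H}^o_{i-1}))^{+}$ defined using these new, larger joint-probability matrices. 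Starting from equation \eqref{eq:1} and applying these identities alongside a history version of \eqref{eq:4}, I would then induct on $t$ to telescope the product exactly as in the one-step proof.

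The main conceptual point is that $h^o_{i-1}$ is still a valid conditionally independent proxy for $u_i$. This follows from the POMDP Markov structure: conditional on $u_i$, the entire observable past $h^o_{i-1}$ is independent of $z_i$, $r_i$, and all later variables, since $z_i$ is emitted solely from $u_i$ and $u_{i+1}$ is generated from $u_i, a_i$ alone. This is precisely the single property of $z_{i-1}$ that the one-step proof uses. Summing out $u_i$ then gives the factorization
\[
P^b(Z_i, a_i, \mathcal{H}^o_{i-1}) = P^b(Z_i \mid U_i)\, P^b(U_i, a_i, \mathcal{H}^o_{i-1}),
\]
with the emission factor of size $|\mathcal{Z}| \times |\mathcal{U}|$ and the latent-past factor of size $|\mathcal{U}| \times |\mathcal{H}^o_{i-1}|$. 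Assumption \ref{history-rank} forces both to have rank exactly $|\mathcal{U}|$. One may thus choose $M_{i, a_i} \in \mathbb{R}^{|\mathcal{U}| \times |\mathcal{Z}|}$, for instance the top-$|\mathcal{U}|$ left singular vectors of $P^b(Z_i, a_i, \mathcal{H}^o_{i-1})$ as in Remark \ref{remark1}, so that $M_{i, a_i} P^b(Z_i, a_i, \mathcal{H}^o_{i-1})$ has full row rank $|\mathcal{U}|$ and its pseudoinverse $M'_{i, a_i}$ acts as a right inverse, $M_{i, a_i} P^b(Z_i, a_i, \mathcal{H}^o_{i-1}) M'_{i, a_i} = I_{|\mathcal{U}|}$. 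The algebraic manipulations from the proof of Lemma \ref{one-step-lemma} then transfer essentially verbatim.

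The main obstacle is really bookkeeping. The matrix $P^b(Z_i, a_i, \mathcal{H}^o_{i-1})$ has $|\mathcal{H}^o_{i-1}|$ columns, which grow exponentially with $i$, whereas $M_{i, a_i}$ still has $|\mathcal{U}|$ rows, so one must verify that the right-inverse identity above still lets us insert the appropriate $I_{|\mathcal{U}|}$ between consecutive factors when telescoping, and that all intermediate dimensions match. Once this is checked, the induction closes as in Theorem \ref{one-step-thm}: the boundary terms collapse against $P^b(Z_0)$ via the appropriate history analog of \eqref{eq:4}, and summing over observable trajectories with the importance weight $\Pi_e(\tau^o)$ yields the stated formula. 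That Assumption \ref{history-rank} is strictly weaker than Assumption \ref{rank} is already verified in the discussion preceding the theorem, so no further work is needed there.
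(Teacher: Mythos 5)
Your proposal is correct and follows essentially the same route as the paper's own proof: replace $Z_{i-1}$ by $\mathcal{H}^o_{i-1}$ throughout Lemma \ref{one-step-lemma}, use the factorization $P^b(Z_i,a_i,\mathcal{H}^o_{i-1}) = P^b(Z_i|a_i,U_i)P^b(U_i,a_i,\mathcal{H}^o_{i-1})$ together with the SVD-based choice of $M_{i,a_i}$ to get $M_{i,a_i}P^b(Z_i,a_i,\mathcal{H}^o_{i-1})M'_{i,a_i} = I_{|\mathcal{U}|}$, and telescope by induction starting from equation \eqref{eq:1}. The only cosmetic difference is that you write the emission factor as $P^b(Z_i\mid U_i)$ rather than $P^b(Z_i\mid a_i, U_i)$, which is equivalent here since $z_i$ is emitted from $u_i$ before $a_i$ is chosen.
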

\begin{remark}[Choice of $M_{i, a_i}$]\label{remark2}
Similar to Remark \ref{remark1}, the matrices $M_{i,a_i}$ must be chosen so that $\rank(M_{i,a_i}) = \rank(M_{i,a_i}P(Z_i,a_i, \mathcal{H}^o_{i-1}))= |\mathcal{U}|$. Again, two viable choices are for $M_{i,a_i}$ to be the top $|\mathcal{U}|$ left singular vectors of $P(Z_i,a_i, \mathcal{H}^o_{i-1})$ or to draw $M_{i,a_i}$ from a continuous probability distribution. Furthermore, $M_{i,a_i}$ can be chosen to first marginalize out multi-step histories $\mathcal{H}^o_{i-1}$ into one-step pasts $\mathcal{Z}_{i-1}$, and then act as a rank $|\mathcal{U}|$ projection matrix, as in Theorem \ref{one-step-thm}. This choice of $M_{i,a_i}$ illustrates that Theorem \ref{history-thm} is a strict generalization of Theorem \ref{one-step-thm} as the latter is recoverable with this alternate choice of $M_{i,a_i}$, which first performs marginalization.
\end{remark}

We give a proof of Theorem \ref{history-thm} in Appendix \ref{hist-proof}, but, at a high level, Figures \ref{fig:scms}(a) and (b) show the static causal structure is unchanged when extending the length of observable histories, allowing for essentially the same analysis as in the one-step case.
We also remark that Theorem \ref{history-thm} holds using conditional probability matrices $P^b(Z_i|a_i, \mathcal{H}^o_{i-1})$ and $P^b(Z_{i+1}, z_i|a_i, \mathcal{H}^o_{i-1})$ in place of $P^b(Z_i,a_i, \mathcal{H}^o_{i-1})$ and $P^b(Z_{i+1}, z_i,a_i, \mathcal{H}^o_{i-1})$, respectively. An analogous statement applies to Theorem \ref{one-step-thm}.

\section{Multi-Step Futures} \label{sec:multi-future}
In the previous section we relaxed the rank assumptions in prior work by replacing the one-step observation with \textit{histories} of observations as proxies. It is then very natural to ask: can we make a similar extension to the future? Indeed, in the PSR literature that inspire our work, it is common to use moment matrices about multi-step histories and futures \citep{boots} to relax the rank assumptions of earlier works that estimate one-step moment matrices \citep{hsu20121460}.

To this end, our goal in this section is to extend futures and replace the matrices $P^b(Z_i,a_i, \mathcal{H}^o_{i-1})$ and $P^b(Z_{i+1}, z_i,a_i, \mathcal{H}^o_{i-1})$ with matrices of the form $P^b(\mathcal{F}^o_i,a_i, \mathcal{H}^o_{i-1})$ and $P^b(\mathcal{F}^o_{i+1}, z_i,a_i, \mathcal{H}^o_{i-1})$, respectively ($\mathcal{F}^o_i$ denotes a set of futures at timestep $i$, containing subtrajectories $f_i$ of the form $(z_i, z_{i+1}, a_{i+1}, \ldots, z_H, a_H)$), so as to further relax rank assumptions by only assuming that $\rank(P^b(\mathcal{F}^o_i, a_i, \mathcal{H}^o_{i-1})) = |\mathcal{U}|$. Such an assumption would imply Assumption \ref{history-rank} and thus be even weaker than that made in the previous section.

Unfortunately, such a goal is not immediately possible to achieve via identification of $P^b(r_t, z_t, z_{t-1}, \ldots, z_0||a_0, \ldots, a_t)$ using previous techniques. 
 The primary difficulty, as indicated by the red edge in Figure \ref{fig:scms}(c), is that, when extending futures, the treatment, $a_i$, \textit{does} have a causal effect on the future $f_i$, rendering it a post-treatment variable rather than a negative control as was the case with $z_i$, in Figures \ref{fig:scms}(a) and (b), on which $a_i$ has no causal effect. 


While our approach cannot use extended futures as a proxy for the confounder under rank $|\mathcal{U}|$ assumptions on $P^b(\mathcal{F}^o_i,a_i, \mathcal{H}^o_{i-1})$ alone, we show that it is in fact possible to perform OPE by working with conditional moment matrices $P^b(\mathcal{F}^o_i|a_i, \mathcal{H}^o_{i-1})$ and making relatively weak positivity and distinctness assumptions about the reward distributions:

\begin{assumption}\label{future-rank}
$\rank(P^b(\mathcal{F}^o_i|a_i, \mathcal{H}^o_{i-1})) = |\mathcal{U}|, \forall i \leq H, a_i \in \mathcal{A}$.
\end{assumption}
\begin{assumption}\label{reward-pos-distinct}
For each $a_i \in \mathcal{A}$, there exists $r_i \in \mathcal{R}$ such that the probabilities $P^b(r_i|a_i, u_i), u_i \in \mathcal{U}$ are all distinct.
\end{assumption}

Intuitively, Assumption \ref{reward-pos-distinct} requires that the reward at each timestep---another post-treatment variable---contains enough information about hidden state so as to enable its use, in tandem with futures, to perform causal effect identification (see Appendix \ref{future-proof} for proof):

\begin{theorem}\label{future_thm}
Under Assumptions \ref{future-rank}--\ref{reward-pos-distinct}, there exist $M_{i,a_i}$ and $M'_{i,a_i}$ (chosen to be non-degenerate w.r.t. $P^b(\mathcal{F}^o_i|a_i, \mathcal{H}^o_{i-1})$, analogously to Remarks \ref{remark1}--\ref{remark2}), such that 
$P^e(r_t)$ equals \begin{multline}\label{future-eq}
    \sum_{\tau^o \in \mathcal{T}^o_t}\Pi_e(\tau^o)P^b(r_t, z_t|a_t, \mathcal{H}^o_{t-1})M'_{t,a_t} \cdot \prod_{i=t-1}^0\Big(M_{i+1,a_{i+1}}\\ \cdot P^b(\mathcal{F}^o_{i+1}, z_i|a_i, \mathcal{H}^o_{i-1}||a_{i+1})M'_{i,a_i}\Big)M_{0,a_0}P^b(Z_0),
\end{multline} where the matrices $P^b(\mathcal{F}^o_{i+1}, z_i|a_i, \mathcal{H}^o_{i-1}||a_{i+1})$ are identifiable.
\end{theorem}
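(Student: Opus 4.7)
The plan is to follow the inductive skeleton of Theorems \ref{one-step-thm} and \ref{history-thm}---expand $P^e(r_t)$ via \eqref{interventions}, factorize each interventional term with \eqref{eq:1}, and substitute identities that express every latent pair $P^b(U_{i+1},z_i|a_i,U_i)P^b(U_i,z_{i-1}|a_{i-1},U_{i-1})$ in terms of observable matrices sandwiching $M_{i,a_i}$ and $M'_{i,a_i}$---but with the rank-providing matrix replaced by $P^b(\mathcal{F}^o_i | a_i, \mathcal{H}^o_{i-1})$, which has rank $|\mathcal{U}|$ by Assumption \ref{future-rank}. The obstruction flagged in Figure \ref{fig:scms}(c) is that $f_{i+1}$ contains $a_{i+1}$, and $a_{i+1}$ under $\pi_b$ is selected as a function of $u_{i+1}$, so $f_{i+1}$ is not a valid negative-control proxy in the sense of \citet{miao2018identifying}. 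I would sidestep this by inserting the interventional matrix $P^b(\mathcal{F}^o_{i+1}, z_i | a_i, \mathcal{H}^o_{i-1} || a_{i+1})$ into the telescoping product; cutting the $u_{i+1}\to a_{i+1}$ edge restores $f_{i+1}$ to a clean rank-$|\mathcal{U}|$ proxy for $u_{i+1}$ whose only dependence on $u_{i+1}$ is through the forward dynamics.

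Showing that this interventional matrix is identifiable from behavior data alone is the heart of the argument; this is where Assumption \ref{reward-pos-distinct} and the eigendecomposition idea of \citet{kuroki} enter. The POMDP factorization gives
\[
P^b(\mathcal{F}^o_i | a_i, \mathcal{H}^o_{i-1}) = P^b(\mathcal{F}^o_i | a_i, U_i)\, P^b(U_i | a_i, \mathcal{H}^o_{i-1}),
\]
and inserting the reward yields
\[
P^b(\mathcal{F}^o_i, r_i | a_i, \mathcal{H}^o_{i-1}) = P^b(\mathcal{F}^o_i | a_i, U_i)\,\diag(P^b(r_i | a_i, U_i))\, P^b(U_i | a_i, \mathcal{H}^o_{i-1}).
\]
By the Kuroki construction, $P^b(\mathcal{F}^o_i, r_i | a_i, \mathcal{H}^o_{i-1})\, P^b(\mathcal{F}^o_i | a_i, \mathcal{H}^o_{i-1})^{+}$ is similar to $\diag(P^b(r_i|a_i,U_i))$, whose entries are distinct by Assumption \ref{reward-pos-distinct}; its eigenvectors identify the columns of $P^b(\mathcal{F}^o_i | a_i, U_i)$ up to permutation, with column scale pinned down by stochasticity over $\mathcal{F}^o_i$. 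Applying the same procedure at time $i+1$ recovers $P^b(\mathcal{F}^o_{i+1} | a_{i+1}, U_{i+1})$, and the analogous decomposition of $P^b(\mathcal{F}^o_{i+1}, z_i | a_i, \mathcal{H}^o_{i-1})$ recovers $P^b(u_{i+1}, z_i | a_i, \mathcal{H}^o_{i-1})$. The target then follows from the Markov-under-intervention identity
\[
P^b(\mathcal{F}^o_{i+1}, z_i | a_i, \mathcal{H}^o_{i-1} || a_{i+1}) = \sum_{u_{i+1}} P^b(\mathcal{F}^o_{i+1} | a_{i+1}, u_{i+1})\, P^b(u_{i+1}, z_i | a_i, \mathcal{H}^o_{i-1}).
\]

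With identifiability in hand, the inductive step is a direct analog of Lemma \ref{one-step-lemma}: the rank-$|\mathcal{U}|$ factorization above ensures $M'_{i,a_i}M_{i,a_i}$ acts as the identity when bracketed between its two latent factors, so the observable chain collapses back onto the latent product of \eqref{eq:1}, and the analogs of \eqref{eq:2_analog}--\eqref{eq:3_analog} emerge with the interventional middle matrices in place of the joint observational ones. Substituting into \eqref{interventions} and handling the boundary terms at $i=0$ and $i=t$ exactly as in Theorem \ref{history-thm} yields \eqref{future-eq}. The main obstacle is really the identifiability argument---pinning down the permutation and column-scale ambiguity of the eigendecomposition, verifying the pseudo-inverse manipulations when $|\mathcal{F}^o_i| > |\mathcal{U}|$, and confirming that the recovered $P^b(\mathcal{F}^o_{i+1} | a_{i+1}, U_{i+1})$ truly describes the interventional dynamics rather than a conditional artefact entangled with $\pi_b$. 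The telescoping calculation itself is a near-mechanical adaptation of the proof of Theorem \ref{history-thm}, with $\mathcal{F}^o_i$ replacing $Z_i$ and the interventional blocks replacing the joint observational ones.
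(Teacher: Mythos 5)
Your proposal is correct and follows essentially the same route as the paper: the same telescoping argument with interventional middle matrices $P^b(\mathcal{F}^o_{i+1}, z_i|a_i, \mathcal{H}^o_{i-1}||a_{i+1})$, and the same Kuroki-style eigendecomposition using the reward (with distinct conditional probabilities from Assumption~\ref{reward-pos-distinct}) to recover $P^b(\mathcal{F}^o_{i+1}|a_{i+1},U_{i+1})$ and then $P^b(U_{i+1},z_i|a_i,\mathcal{H}^o_{i-1})$ by pseudoinversion. The only differences are implementation details---you diagonalize $QP^+$ and fix scale by column-stochasticity, whereas the paper diagonalizes $(P^\tau L(\Lambda))^+Q^\tau L(\Lambda)$ on a selected column subset and normalizes via the augmented first column of ones.
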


The proof of equation \eqref{future-eq} in Theorem \ref{future_thm} is similar to those of Theorems \ref{one-step-thm} and \ref{history-thm}. The key difference with the previous approaches, however, is the identification of the matrices $P^b(\mathcal{F}^o_{i+1}, z_i|a_i, \mathcal{H}^o_{i-1}||a_{i+1})$, which relies on the diagonalization strategy of \citet{kuroki}. Essentially, by diagonalizing products of observable moment matrices, we are able to identify the unobserved matrices $P^b(\mathcal{F}^o_{i+1}|a_{i+1}, U_{i+1})$ and $P^b(U_{i+1}, z_i|a_{i}, \mathcal{H}^o_{i-1})$ which allows for identification of \begin{multline*}
    P^b(\mathcal{F}^o_{i+1}, z_i|a_i, \mathcal{H}^o_{i-1}||a_{i+1}) \\ = P^b(\mathcal{F}^o_{i+1}|a_{i+1}, U_{i+1})P^b(U_{i+1}, z_i|a_{i}, \mathcal{H}^o_{i-1})
\end{multline*}

\section{Experiments}\label{sec:experiments}
We now empirically show that our methods extend the range of environments in which we can successfully perform accurate OPE. In particular, we show that: our history-extension approach of Theorem \ref{history-thm} is more accurate than that of Theorem \ref{original}, the baseline estimator of \citet{Tennenholtz_Shalit_Mannor_2020}, even in environments satisfying Assumption \ref{invertibility}; using our extended-history estimator significantly outperforms the baseline in environments under which Assumption \ref{invertibility} does not hold; and that while Theorem \ref{history-thm}'s estimator \textit{still} performs well on environments in which Assumption \ref{history-rank} doesn't hold, the extended-future estimator from Theorem \ref{future_thm} can produce even more accurate results.

\paragraph{Setup} For each of our experiments, we conduct $50$ independent trials in POMDPs with $|\mathcal{U}| = |\mathcal{Z}| = |\mathcal{A}| = 2$, $\mathcal{R} = \{0,1\}$, and $H=4$. For the first experiment, comparing the estimators of Theorems \ref{original} and \ref{history-thm}, the probability parameters of the POMDP and policies are drawn uniformly at random from $[0,1]$ each trial, whereas in the two latter experiments, the parameters are chosen specifically, so as to violate Assumptions \ref{invertibility} and \ref{history-rank}, and remain constant across trials. We note however, the extended-future estimator of Theorem \ref{future_thm} can be quite sensitive to ill-conditioned matrices, and so the environment for the third experiment was chosen somewhat carefully, through adaptive search, so as to yield an environment with well-conditioned matrices. For the first experiment, during each trial for estimator $E$, we compute, at each timestep, the residual $|\hat{P}^e(r_t = 1) - P^e(r_t=1)|$, where $\hat{P}^e(r_t = 1)$ is $E$'s estimate of $P^e(r_t=1)$ after seeing $n$ trajectories for various values of $n$. For the latter experiments, the results of a given trial for estimator $E$ consist of $E$'s probability estimates of $P^e(r_0 = 1), \ldots, P^e(r_H = 1)$ after seeing $n$ trajectories, for various values of $n$. The final results for each experiment are then averaged over the $50$ trials and presented with standard error bars. 

\begin{figure}
    \centering
    \includegraphics[scale=0.52]{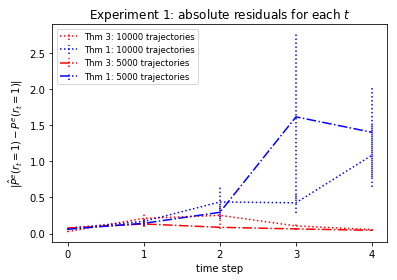}
    \caption{Average absolute residuals $|\hat{P}^e(r_t) -P^e(r_t)|$ over randomly generated POMDP environments which satisfy Assumption \ref{invertibility}.}
    \label{fig:exp_1}
\end{figure}

\begin{figure}
    \centering
    \includegraphics[scale=0.52]{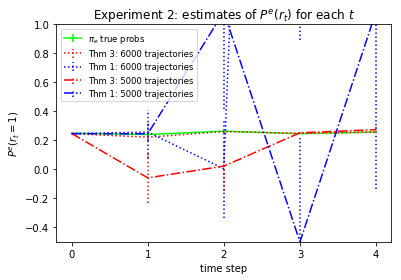}
    \caption{Estimates of $P^e(r_t)$ in POMDP environment which violates Assumption \ref{invertibility}.}
    \label{fig:hist_exp}
\end{figure}

\paragraph{Results} Figure \ref{fig:exp_1} shows that even in environments in which Assumption \ref{invertibility} \textit{does} hold, the quality of the baseline estimator of Theorem \ref{original} degrades substantially, on average, as $t$ grows for both $5000$ and $10{,}000$ trajectories. On the other hand, our history-extended estimator of Theorem \ref{history-thm} maintains accurate estimates of $P^e(r_t=1)$ even for larger values of $t$, and is in general, much more accurate than the baseline at all timesteps.

Figure \ref{fig:hist_exp} extends these results to the second experiment in which we compare our estimator from Theorem \ref{history-thm} to the baseline of Theorem \ref{original} when its Assumption \ref{invertibility} is violated. At $5000$ trajectories, both estimators incur a noticeable degree of error at multiple timesteps, although the errors of the baseline estimator are significantly worse. However, at $6000$ samples, the extended-history estimator of Theorem \ref{history-thm} incurs negligible error in its estimates at \textit{all} timesteps, especially as compared to the baseline which still performs extremely poorly, often producing estimates far outside of $[0,1]$.

Lastly, the robustness of the extended-history estimator of Theorem \ref{history-thm} to an environment in which its Assumption \ref{history-rank} is violated is illustrated in Figure \ref{fig:future_exp}. While Theorem \ref{history-thm}'s estimator is somewhat biased for both $500{,}000$ and $1{,}000{,}000$ sampled trajectories, the estimates are, in absolute terms, relatively accurate, especially when compared to the errors incurred by the baseline estimator of Theorem \ref{original} when its key Assumption \ref{invertibility} was violated in the second experiment. In Appendix \ref{exper-details}, we further show that, in fact, Theorem \ref{history-thm}'s estimator is still highly accurate on average across environments that violate its Assumption \ref{history-rank}. Figure \ref{fig:future_exp} also shows that the extended-future estimator of Theorem \ref{future_thm}---which does not require Assumption \ref{history-rank}---produces more accurate results.

\begin{figure}
    \centering
    \includegraphics[scale=0.52]{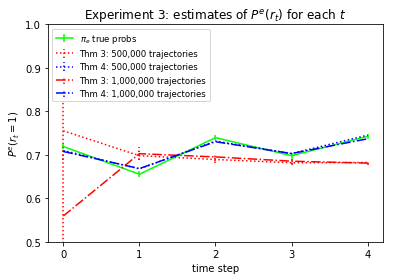}
    \caption{Estimates of $P^e(r_t)$ in POMDP environment which violates Assumption \ref{history-rank}.}
    \label{fig:future_exp}
\end{figure}

\section{Importance Sampling}
Importance Sampling \citep{precup} is another family of standard OPE methods, which complements the so-called ``direct methods'' that either estimate a model or approximate the Bellman equations \citep{dudik2011doubly, doubly_robust}. (Our approach in previous sections can be roughly categorized as a direct method.) 
\citet{Tennenholtz_Shalit_Mannor_2020} illustrate the difficulty of IS in POMDPs by showing that its immediate extension to POMDPs is neither unbiased nor consistent unless strong sufficiency assumptions are made. By extending the eigendecomposition technique of \citet{kuroki}, we show, that OPE in POMDPs \textit{is} amenable to a consistent IS estimator under much weaker assumptions involving only rank $|\mathcal{U}|$, distinctness, and positivity conditions. 

In order to satisfy certain rank assumptions of our analysis we require the following assumptions on the relative sizes of hidden state, action and reward spaces:

\begin{assumption}\label{necessary}
$|\mathcal{A}| \geq |\mathcal{U}|$ and $|\mathcal{R}| \geq |\mathcal{U}|$
\end{assumption}


Under rank $|\mathcal{U}|$ conditions on certain moment matrices, we are then able to identify the behavior policy probabilities $\pi_b(a_i|U_i)$ via eigendecompositions. 
We, additionally, make a weak assumption on the reward distribution so that the causal structure of rewards and observations are identical:

\begin{assumption}\label{reward}
The reward function $R$ depends \textit{only} latent state and not on action.
\end{assumption}

Assumption \ref{reward} allows for the reward distribution $P^b(r_i|U_i)$ to also be written in terms of observable probabilities similar to $\pi^{(i)}_b(a_i|U_i)$, under analogous rank $|\mathcal{U}|$ assumptions. Additionally, in order to ensure uniqueness in the eigendecompositions, we further require certain sets of probabilities to be distinct. As these conditions are in terms of probability matrices which lack succinct representation, we relegate the assumptions and proof to Appendix \ref{is-identification-proof} but summarize here:

\begin{theorem}\label{IS-identification}
Under Assumptions \ref{necessary}--\ref{reward}, rank $|\mathcal{U}|$ and distinctness conditions, the vectors $\pi^{(i)}_b(a_i|U_i)$, and $P^b(r_i|U_i)$ are identifiable under the behavior measure w.r.t.~an unknown, but consistent, ordering on $U_i$ for all $i \leq H$.
\end{theorem}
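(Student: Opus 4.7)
The plan is to adapt Kuroki's eigendecomposition technique time-step by time-step, identifying $\pi_b^{(i)}(a_i|U_i)$ and $P^b(r_i|U_i)$ by spectrally decomposing products of observable moment matrices built from two conditionally independent proxies of $U_i$. A convenient choice for the two proxies is the extended past $W_1 = h^o_{i-1}$ and the current observation $W_2 = z_i$, since (i) the observable past and the current emission are conditionally independent given $U_i$, (ii) the behavior action $a_i$, which depends only on $U_i$, is conditionally independent of $W_1$ given $U_i$, and (iii) $z_i$, emitted from $U_i$, is conditionally independent of $a_i$ given $U_i$. Under Assumption~\ref{reward}, the reward $r_i$ inherits the same independence structure as $a_i$, so both quantities can be identified within the same framework.

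Writing $A_{h,u} = P^b(h^o_{i-1} = h \mid u_i = u)$ and $B_{z,u} = P^b(z_i = z \mid u_i = u)$, and setting $D = \diag(P^b(U_i))$, the three observable moment matrices admit a common factorization through $U_i$:
\begin{align*}
P^b(h^o_{i-1}, z_i) &= A D B^T,\\
P^b(h^o_{i-1}, a_i, z_i) &= A D \diag(\pi_b^{(i)}(a_i|U_i)) B^T,\\
P^b(h^o_{i-1}, r_i, z_i) &= A D \diag(P^b(r_i|U_i)) B^T.
\end{align*}
The rank $|\mathcal{U}|$ assumptions stated in the appendix force $A$ and $B$ to have full column rank, and positivity makes $D$ invertible, giving the standard Kuroki-style identity
\[P^b(h^o_{i-1}, a_i, z_i) \cdot [P^b(h^o_{i-1}, z_i)]^+ = A \diag(\pi_b^{(i)}(a_i|U_i)) A^+\]
together with the analogous identity for $P^b(h^o_{i-1}, r_i, z_i)$. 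The nonzero eigenvalues of the left-hand side are exactly the entries of $\pi_b^{(i)}(a_i|U_i)$, and the corresponding eigenvectors are the columns of $A$.

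I would then invoke the distinctness conditions on some pivot variable---either a distinguished action $a^* \in \mathcal{A}$ for which $\pi_b^{(i)}(a^*|\cdot)$ separates the hidden states, or, analogously to Assumption~\ref{reward-pos-distinct}, a distinguished reward $r^* \in \mathcal{R}$ for which $P^b(r^*|\cdot)$ does---to pin down the eigendecomposition uniquely up to a permutation of $U_i$. Since the columns of $A$ are probability vectors summing to $1$, they are uniquely normalized, so the permutation is the only remaining ambiguity. Reading eigenvalues off the decompositions corresponding to every other $a \in \mathcal{A}$ and every $r \in \mathcal{R}$ via the same fixed $A$ then recovers all of $\pi_b^{(i)}(\cdot|U_i)$ and $P^b(r_i|U_i)$ with respect to a single consistent ordering on $U_i$, which is exactly what the theorem asserts. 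The main obstacle, I expect, is arranging this joint consistency across the action-side and reward-side quantities: Assumption~\ref{reward} is what makes it possible, because if the reward depended on $a_i$ the reward-side matrices would not share the eigenvector matrix $A$ of the action-side matrices. The remaining work is largely bookkeeping---verifying the conditional independences used in the factorization, checking that the pseudoinverses are well-defined under the rank and positivity hypotheses, and ruling out the degenerate eigenvalue coincidences that the distinctness assumptions are designed to preclude.
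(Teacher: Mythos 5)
Your proposal is correct, but it reaches Theorem \ref{IS-identification} by a genuinely different route than the paper. The paper conditions on the current observation $z_i$ and builds moment matrices with the one-step past $z_{i-1}$ indexing rows, the target variable ($a_i$ or $r_i$) indexing columns, and the \emph{next} observation $z_{i+1}$ supplying the eigenvalues: after diagonalizing $(P^a_{i,z_i}L(\Lambda))^+Q^a_{i,z_i,z_{i+1}}L(\Lambda)$, the vectors $\pi_b^{(i)}(a_i|U_i)$ and $P^b(r_i|U_i)$ are read off the normalized left-eigenvector matrices $U^a_i$ and $U^r_i$, distinctness is imposed on $\{P^b(z_{i+1}|u_i)\}_{u_i\in\mathcal{U}}$ (Assumption \ref{appendix_distinctness}), and the consistent ordering on $U_i$ comes from the action-side and reward-side diagonalizations sharing the same eigenvalue matrix $\Delta_{i,z_{i+1}}$. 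You instead take $(\mathcal{H}^o_{i-1}, z_i)$ as the two proxies and place the target in the middle, so that $\pi_b^{(i)}(a_i|U_i)$ and $P^b(r_i|U_i)$ appear as \emph{eigenvalues} of $P^b(\mathcal{H}^o_{i-1},a_i,Z_i)\,[P^b(\mathcal{H}^o_{i-1},Z_i)]^+ = A\,\diag(\pi_b^{(i)}(a_i|U_i))\,A^+$; you recover the eigenvector matrix $A=P^b(\mathcal{H}^o_{i-1}|U_i)$ once from a pivot variable with distinct conditional probabilities (its columns are uniquely normalized as probability vectors) and obtain the consistent ordering by reusing that single $A$ for every action and reward. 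Both arguments are sound, and the trade-offs are real: your version shifts the rank-$|\mathcal{U}|$ burden onto $P^b(\mathcal{H}^o_{i-1}|U_i)$ and $P^b(Z_i|U_i)$ rather than onto $U^a_i$ and $U^r_i$, so it does not actually need Assumption \ref{necessary}, and the multi-step history proxy is more in keeping with the relaxations of Sections 4--5 than the paper's own one-step choice here; in exchange, your distinctness condition sits on the unknown policy or reward mechanism rather than on the observation process, and you need the pivot's probabilities to be strictly positive so that a genuine zero eigenvalue of $A\diag(\cdot)A^+$ is not confused with its $(|\mathcal{H}^o_{i-1}|-|\mathcal{U}|)$-dimensional kernel (Assumption \ref{main-alg-pos} covers the action side). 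One small correction: Assumption \ref{reward} is not what makes the reward-side matrices share $A$. Under the behavior measure $a_i$ is drawn from $u_i$ alone, so $r_i\indep(\mathcal{H}^o_{i-1},z_i)\mid u_i$ and the factorization $P^b(\mathcal{H}^o_{i-1},r_i,Z_i)=AD\,\diag(P^b(r_i|U_i))\,B^T$ holds even if $R$ depends on $a_i$; that assumption earns its keep downstream in Theorem \ref{is-thm}, where $\prod_i P^b(r_i|u_i)$ must equal the conditional law of the rewards given the latent trajectory and the actions.
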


Defining $\Pi_b(u_{0:t}) = \prod_{i=0}^t\pi^{(i)}_b(a_i|u_i)$ and letting $v$ denote value, we then obtain an IS procedure under the two following additional rank and positivity assumptions, as well as positivity and distinctness assumptions (see Appendix \ref{is-proof}):

\begin{assumption}\label{expo-rank-IS}
$\rank(P^b(R_i|U_i)) = |\mathcal{U}|$ 
\end{assumption}

\begin{assumption}\label{main-alg-pos}
$P^b(v|\tau^o) > 0, \forall v,\tau^o$ and $\pi_b^{(i)}(a_i|u_i) > 0, \forall i \leq H, a_i \in \mathcal{H}, u_i \in \mathcal{U}$.
\end{assumption}

\begin{theorem}\label{is-thm}
Under Assumptions \ref{necessary}--\ref{main-alg-pos}, as well as additional rank, distinctness, and positivity conditions, we have that
$v_H(\pi_e) =  \mathbb{E}[vW_{e,b}(v,\tau^o)]$ with \\
\centerline{$W_{e,b}(v,\tau^o) := \frac{1}{P^b(v|\tau^o)}\Pi_e(\tau^o)\Gamma_b(v,\tau^o)$}
and \\
\centerline{$\Gamma_b(v,\tau^o) = \sum_{r_{0:H}: \sum r_i = v}\sum_{u_{0:H}}\frac{\prod_{i=0}^HP^b(r_i|u_i)P^b(u_{0:H}|\tau^o)}{\Pi_b(u_{0:H})}$,}
where $W_{e,b}(v,\tau^o)$ are identifiable in terms of observable probabilities.
\end{theorem}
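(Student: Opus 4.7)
The plan is to prove the identity $v_H(\pi_e) = \mathbb{E}_b[v\, W_{e,b}(v,\tau^o)]$ by a change-of-measure computation paralleling standard importance sampling, and then separately verify that $W_{e,b}(v,\tau^o)$ is identifiable from the behavior measure using Theorem \ref{IS-identification}.

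For the core identity, I would start from $v_H(\pi_e) = \sum_{\tau^o, v} v\, P^e(\tau^o, v)$ and expand $P^e(\tau^o, v)$ by summing over latent trajectories $u_{0:H}$ and per-step rewards $r_{0:H}$ with $\sum_i r_i = v$. Assumption \ref{reward} is crucial here: because the reward depends only on the latent state, the reward factor $\prod_i P^b(r_i | u_i)$ is policy-independent. Pulling out $\Pi_e(\tau^o)$ from the evaluation-policy factor and noting that the remaining dynamics/emission product equals $P^b(u_{0:H}, \tau^o)/\Pi_b(u_{0:H}) = P^b(\tau^o)\, P^b(u_{0:H}|\tau^o)/\Pi_b(u_{0:H})$ yields $P^e(\tau^o, v) = \Pi_e(\tau^o)\, P^b(\tau^o)\, \Gamma_b(v, \tau^o)$. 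Multiplying and dividing by $P^b(v|\tau^o)$ and recognizing $P^b(\tau^o)\, P^b(v|\tau^o) = P^b(\tau^o, v)$ then gives $v_H(\pi_e) = \sum_{\tau^o, v} P^b(\tau^o, v)\, v\, W_{e,b}(v, \tau^o) = \mathbb{E}_b[v\, W_{e,b}(v, \tau^o)]$, as required.

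For identifiability, $P^b(v|\tau^o)$ is directly estimable from behavior data and $\Pi_e(\tau^o)$ is known, so the substantive claim is that $\Gamma_b(v, \tau^o)$ is identifiable. Its ingredients are $P^b(r_i | u_i)$, the policy factors $\pi_b^{(i)}(a_i | u_i)$ inside $\Pi_b(u_{0:H})$, and the posterior $P^b(u_{0:H} | \tau^o)$. Theorem \ref{IS-identification}, together with the additional rank, distinctness, and positivity conditions alluded to, delivers $\pi_b^{(i)}(a_i | U_i)$ and $P^b(r_i | U_i)$ up to a common consistent ordering on $\mathcal{U}$. For the posterior, I would use the factorization $P^b(u_{0:H}|\tau^o)/\Pi_b(u_{0:H}) = P(u_0) \prod_i T(u_{i+1}|u_i, a_i) O(z_i|u_i)/P^b(\tau^o)$, so that the remaining unknowns reduce to the policy-independent interventional law $(P(u_0), T, O)$; these can be recovered using the same eigendecomposition machinery of \citet{kuroki} and \citet{hsu20121460} that underpins the earlier sections. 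Because each $u_i$ is summed out inside $\Gamma_b$, a consistent relabelling of $\mathcal{U}$ leaves the value unchanged, so the identification survives the unknown ordering.

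The main obstacle is not the change-of-measure identity, which is essentially a few lines of algebra once Assumption \ref{reward} is invoked, but rather ensuring that every latent-state quantity feeding into $\Gamma_b$ is recovered under a \emph{single} common permutation of $\mathcal{U}$, used uniformly across timesteps. If the behavior policy, reward distribution, transition kernel, and emission matrix were identified in different bases, the expression for $P^b(u_{0:H}|\tau^o)/\Pi_b(u_{0:H})$ would silently mis-match indices and fail to represent a valid probability. The spectral argument in the appendix must therefore pin down all these objects simultaneously in a shared eigenvector basis, with the distinctness hypotheses serving precisely to guarantee uniqueness of that basis.
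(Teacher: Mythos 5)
Your change-of-measure derivation of $v_H(\pi_e) = \mathbb{E}[vW_{e,b}(v,\tau^o)]$ matches the paper's: both expand over latent trajectories, use Assumption \ref{reward} to make the reward factor policy-independent, replace $\Pi_e/\Pi_b$ as the likelihood ratio, condition the latent path on $\tau^o$, and multiply--divide by $P^b(v|\tau^o)$. That part is fine and is essentially the paper's argument reorganized.

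The gap is in how you identify the posterior $P^b(u_{0:H}|\tau^o)$. You propose to recover the full interventional law $(P(u_0), T, O)$ by further spectral decompositions and then reassemble the posterior from the factorization $P^b(u_{0:H}|\tau^o)/\Pi_b(u_{0:H}) = P(u_0)\prod_i T(u_{i+1}|u_i,a_i)O(z_i|u_i)/P^b(\tau^o)$. Nothing in the stated assumptions licenses identification of $T$ and $O$; that is a strictly harder latent-variable-model recovery problem, and the paper deliberately avoids it. Instead, the paper identifies the posterior \emph{directly} by pseudoinverting the reward emission matrix: since $P^b(R_{0:H}|U_{0:H}) = \bigotimes_{h=0}^H P^b(R_h|U_h)$, Assumption \ref{expo-rank-IS} (together with $|\mathcal{R}| \geq |\mathcal{U}|$ from Assumption \ref{necessary}) gives this Kronecker product full column rank $|\mathcal{U}|^{H+1}$, and each factor is identifiable by Theorem \ref{IS-identification}, so
\begin{equation*}
P^b(U_{0:H}|\mathcal{T}^o_H) = P^b(R_{0:H}|U_{0:H})^+\,P^b(R_{0:H}|\mathcal{T}^o_H),
\end{equation*}
with the right-hand factor directly observable. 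This is precisely the role of Assumption \ref{expo-rank-IS}, which your proposal never uses --- a signal that your identification route is not the intended one and would require assumptions beyond those stated. Your closing observation about the need for a single consistent permutation of $\mathcal{U}$ across all identified quantities is correct and is exactly what the paper arranges by diagonalizing against the common matrix $\Delta_{i,z_{i+1}}$ with a fixed eigenvalue ordering; but to complete the proof you should replace the model-recovery step with the pseudoinversion argument above.
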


We give a sketch of the proof (for details, see Appendix \ref{is-proof}). 

\begin{proof}[Proof sketch]
We have that \begin{multline*}v_H(\pi_e) = \sum_{\tau \in \mathcal{T}_H}\sum_{v}vP^e(v|\tau)P^e(\tau) \\ = \sum_{\tau \in \mathcal{T}_H}\sum_v vP^b(v|u_{0:H})P^b(\tau)\left(\frac{\Pi_e(\tau^o)}{\Pi_b(u_{0:H})}\right)  \\ = \sum_{v}v\sum_{\tau^o \in \mathcal{T}^o_t}P^b(\tau^o)\Pi_e(\tau^o)\sum_{u_{0:H}}\frac{P^b(v|u_{0:H})P^b(u_{0:H}|\tau^o)}{\Pi_b(u_{0:H})} \\ = \sum_v vP^b(v|\tau^o) \sum_{\tau^o \in \mathcal{T}^o_t} W_{e,b}(v,\tau^o)P^b(\tau^o) = \mathbb{E}[vW_{e,b}(v,\tau^o)].\end{multline*} 

Identification of $W_{e,b}(v,\tau^o)$ is made possible by Theorem \ref{IS-identification} and writing $P^b(u_{0:H}|\tau^o)$ in terms observable and identifiable probabilities via eigenvalue analysis and pseudoinversion. 
\end{proof}

We also conducted experiments comparing our IS estimator to the baseline (biased) IS estimator of \citet{Tennenholtz_Shalit_Mannor_2020} and found that, in environments which meet the required conditions of Theorem \ref{IS-identification} and with sufficiently well-conditioned moment matrices that our estimator must pseudoinvert, it outperforms the baseline for horizon comprising two timesteps. However, for longer horizons our IS estimator often produces less accurate value estimates than the baseline, indicating that, while our IS estimator, in contrast to the baseline, is statistically consistent, the number of sampled trajectories required to guarantee good performance may often be quite high for longer horizons.

\section{Conclusion and Future Work}
In this paper we consider OPE in POMDPs. We extend prior work, relaxing assumptions on observable probability matrices to only have rank $|\mathcal{U}|$ rather than $|\mathcal{Z}|$, simultaneously generalizing to POMDPs in which $|\mathcal{Z}| \geq |\mathcal{U}|$. By additionally extending one of the proxy variables into the past, we further relax assumptions by assuming the matrices $P^b(Z_i,a_i, \mathcal{H}^o_{i-1})$ have rank $|\mathcal{U}|$, allowing for our estimator to apply to POMDPs in which the previous observation may not contain enough information about state, but the entire observable history does. We show that futures, however, cannot be immediately extended using this strategy, and thus make relatively weak assumptions on the reward distribution so as to allow for identification using rank $|\mathcal{U}|$ moment matrices involving both histories and futures. We then experimentally compare our main estimators and show that they both outperform the baseline given in \citet{Tennenholtz_Shalit_Mannor_2020} and perform well in a variety of different environments. Finally, we give a consistent IS algorithm for OPE in POMDPs which only depends on rank, distinctness, and positivity conditions on certain probability matrices and not on sufficiency assumptions.

\nocite{*}
\bibliographystyle{named.bst}
\bibliography{ijcai21.bib}

\newpage

\onecolumn
\appendix

\section{OPE via Observable Proxies}

We extend the identification scheme of \citet{Tennenholtz_Shalit_Mannor_2020} and \citet{miao2018identifying} to make weaker rank assumptions. The proofs in this section use the same general strategy as in \citet{Tennenholtz_Shalit_Mannor_2020}, but use spectral methods to relax their assumptions.

\subsection{One-Step Proxies}\label{one-step-proof}

\subsubsection{Proof of Lemma \ref{one-step-lemma}}
\begin{proof}
First, notice that $$P^b(U_{i+1}, z_i|a_i, U_i)P^b(U_i,a_i, Z_{i-1}) = P^b(U_{i+1}, z_i,a_i, Z_{i-1})$$ \begin{equation}\label{imp1}\implies P^b(U_{i+1}, z_i|a_i, U_i)P^b(U_i,a_i, Z_{i-1})M'_{i,a_i} = P^b(U_{i+1}, z_i,a_i, Z_{i-1})M'_{i,a_i}.\end{equation} Additionally, \begin{equation}\label{telescope}P^b(Z_i,a_i, Z_{i-1}) = P^b(Z_i|a_i, U_i)P^b(U_i,a_i, Z_{i-1})\end{equation} \begin{equation}\label{eq:one-step-rank_1}\implies P^b(Z_i,a_i, Z_{i-1})M'_{i,a_i} = P^b(Z_i|a_i, U_i)P^b(U_i,a_i, Z_{i-1})M'_{i,a_i}.\end{equation} From the definition of $M'_{i,a_i}$, we have that $$P^b(Z_i,a_i, Z_{i-1})M'_{i,a_i}$$ $$= P^b(Z_i,a_i, Z_{i-1})(M_{i,a_i}P^b(Z_i,a_i, Z_{i-1}))^+ = P^b(Z_i,a_i, Z_{i-1})V_{i,a_i}(I_{|\mathcal{U}|\times |\mathcal{Z}|} \Sigma_{i,a_i})^+$$ $$= U_{i,a_i}\Sigma_{i,a_i}(I_{|\mathcal{U}|\times |\mathcal{Z}|} \Sigma_{i,a_i})^+,$$ which has rank $|\mathcal{U}|$, thus implying, by equation \eqref{eq:one-step-rank_1}, that $\rank\left(P^b(U_i,a_i, Z_{i-1})M'_{i,a_i}\right) = |U|$, and so equation \eqref{imp1} implies that \begin{equation}\label{eq:first}P^b(U_{i+1}, z_i|a_i, U_i) = P^b(U_{i+1}, z_i,a_i, Z_{i-1})M'_{i,a_i}\left(P^b(U_i,a_i, Z_{i-1})M'_{i,a_i}\right)^{-1}.\end{equation} Now, notice that equation \eqref{telescope} implies that \begin{equation}\label{eq:rank-2}M_{i,a_i}P^b(Z_i|a_i, U_i)P^b(U_i,a_i, Z_{i-1}) = M_{i,a_i}P^b(Z_i,a_i, Z_{i-1}).\end{equation} Similarly notice that $\rank\left(M_{i,a_i}P^b(Z_i,a_i, Z_{i-1})\right) = |U|$ as, from the definition of $M_{i,a_i}$, we have $M_{i,a_i}P^b(Z_i,a_i, Z_{i-1}) = I_{|\mathcal{U}|\times |\mathcal{Z}|}\Sigma_{i,a_i}V^T_{i,a_i}$. This, in turn, implies that $\rank\left(M_{i,a_i}P^b(Z_i|a_i, U_i)\right) = |U|$ by equation \eqref{eq:rank-2}, and so equation \eqref{eq:rank-2} implies that $$P^b(U_i,a_i, Z_{i-1}) = \left(M_{i,a_i}P^b(Z_i|a_i, U_i)\right)^{-1}M_{i,a_i}P^b(Z_i,a_i, Z_{i-1}),$$ implying that \begin{equation}\label{second} P^b(U_i,a_i, Z_{i-1})M'_{i,a_i} = \left(M_{i,a_i}P^b(Z_i|a_i, U_i)\right)^{-1}M_{i,a_i}P^b(Z_i,a_i, Z_{i-1})M'_{i,a_i}.\end{equation} 

Now, we claim that $M_{i,a_i}P^b(Z_i,a_i, Z_{i-1})M'_{i,a_i} = I_{|\mathcal{U}|\times|\mathcal{U}|}$ since $$M_{i,a_i}P^b(Z_i,a_i, Z_{i-1})M'_{i,a_i} = I_{|\mathcal{U}|\times|\mathcal{Z}|}U^T_{i,a_i}U_{i,a_i}\Sigma_{i,a_i}V^T_{i,a_i}\left(I_{|\mathcal{U}|\times|\mathcal{Z}|}U^T_{i,a_i}U_{i,a_i}\Sigma_{i,a_i}V^T_{i,a_i}\right)^+$$ $$= I_{|\mathcal{U}|\times|\mathcal{Z}|}\Sigma_{i,a_i}(I_{|\mathcal{U}|\times|\mathcal{Z}|}\Sigma_{i,a_i})^+$$ and thus equation \eqref{second} becomes \begin{equation}\label{second'} P^b(U_i,a_i, Z_{i-1})M'_{i,a_i} = \left(M_{i,a_i}P^b(Z_i|a_i, U_i)\right)^{-1}.\end{equation} 

Finally, substituting \eqref{second'} into \eqref{eq:first} gives $$ P^b(U_{i+1}, z_i|a_i, U_i)$$ $$= P^b(U_{i+1}, z_i,a_i, Z_{i-1})M'_{i,a_i}M_{i,a_i}P^b(Z_i|a_i, U_i).$$ This, in combination with the fact that \begin{equation}\label{join}P^b(Z_i|a_i, U_i)P^b(U_{i}, z_{i-1},a_{i-1}, Z_{i-2}) = P^b(Z_i, z_{i-1},a_{i-1}, Z_{i-2})\end{equation} immediately implies that $$P^b(U_{i+1}, z_i|a_i, U_i)P^b(U_i, z_{i-1}|a_i, U_i) = P(U_{i+1}, z_i,a_i, Z_{i-1})M'_{i,a_i}$$ $$\cdot M_{i,a_i}P^b(Z_i, z_{i-1},a_{i-1}, Z_{i-2})M'_{a_{i-1}}M_{a_{i-1}}P^b(Z_{i-1}|a_{i-1}U_{i-1}),$$ as desired. Replacing $U_{i+1}$ with $r_i$ gives the second part of the Lemma.
\end{proof}

The proof of Theorem \ref{one-step-thm} then follows first from writing $P(r_t, z_t, \ldots, z_0||a_0, \ldots, a_t)$ as $$P^b(r_t,z_t|a_t, U_t)\left(\prod_{i=t-1}^0P^b(U_{i+1},z_i|a_i,U_i)\right)P^b(U_0),$$ using Lemma \ref{one-step-lemma} and equation \eqref{join} above to inductively show that \begin{multline}\label{observable-eq}P^b(r_t,z_t|a_t, U_t)\left(\prod_{i=t-1}^0P^b(U_{i+1},z_i|a_i,U_i)\right)P^b(U_0) \\ = P^b(r_t, z_t,a_t, Z_{t-1})M'_{t,a_t}\prod_{i=t-1}^0\left(M_{i+1,a_{i+1}}P^b(Z_{i+1}, z_i,a_i, Z_{i-1})M'_{i,a_i}\right)M_{0,a_0}P^b(Z_0),\end{multline} and then using the fact that $$P^e(r_t) = \sum_{\tau^o \in \mathcal{T}^o_t}\Pi_e(\tau^o)P(r_t, z_t, \ldots, z_0||a_0, \ldots, a_t)$$ to obtain the final result.

\subsection{Multi-Step Histories}\label{history-proof}

\subsubsection{Proof of Theorem \ref{history-thm}}\label{hist-proof}
The proof of Theorem \ref{history-thm} is essentially the same as that of Theorem \ref{one-step-thm}, except that we replace the one-step history $Z_{i-1}$ with $\mathcal{H}^o_{i-1}$. For completeness, we include it below, first proving a Lemma analogous to Lemma \ref{one-step-lemma}:

\begin{lemma}\label{history-lemma}
Given Assumption \ref{history-rank}, let $U_{i, a_i}\Sigma_{i, a_i}V^T_{i, a_i}$ be an SVD of $P^b(Z_i,a_i, \mathcal{H}^o_{i-1})$ with the diagonal entries of $\Sigma_{i,a_i}$ in descending order and define $M_{i, a_i} := I_{|\mathcal{U}|\times |\mathcal{Z}|}U_{i, a_i}^T$ and $M_{i, a_i}' := (M_{a_i}P^b(Z_i,a_i, \mathcal{H}^o_{i-1}))^+$. Then \begin{multline*}
    P^b(U_{i+1}, z_i|a_i, U_i)P^b(U_i, z_{i-1}|a_{i-1}, U_{i-1}) \\ = P^b(U_{i+1}, z_i,a_i, \mathcal{H}^o_{i-1})M'_{i,a_i}M_{i,a_i}P^b(Z_i, z_{i-1},a_{i-1}, \mathcal{H}^o_{i-2}) \\ \cdot M'_{i-1,a_{i-1}} M_{i-1,a_{i-1}}P^b(Z_{i-1}|a_{i-1}, U_{i-1})
\end{multline*} and \begin{multline*}
    P^b(r_{t}, z_t|a_t, U_t)P^b(U_t, z_{t-1}|a_{t-1}, U_{t-1}) \\ = P^b(r_t, z_t,a_t, \mathcal{H}^o_{t-1})M'_{t,a_t}M_{t,a_t}P^b(Z_t, z_{t-1},a_{t-1}, \mathcal{H}^o_{t-2}) \\ \cdot M'_{t-1,a_{t-1}} M_{t-1,a_{t-1}}P^b(Z_{t-1}|a_{t-1}, U_{t-1})
\end{multline*}
\end{lemma}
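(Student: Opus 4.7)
The plan is to mirror the proof of Lemma \ref{one-step-lemma} essentially verbatim, substituting the multi-step observable history $\mathcal{H}^o_{i-1}$ (resp.\ $\mathcal{H}^o_{i-2}$) for the one-step past $Z_{i-1}$ (resp.\ $Z_{i-2}$) everywhere. The earlier proof is structured modularly around three building blocks: (i) two Markov-style factorizations of the joint proxy moments through $(U_i, a_i)$; (ii) an SVD-based rank/inversion identity showing that $M_{i,a_i} P^b(Z_i,a_i,\cdot) M'_{i,a_i} = I_{|\mathcal{U}|\times|\mathcal{U}|}$ and that the unobserved factor is invertible after right-multiplication by $M'_{i,a_i}$; and (iii) a ``joining'' identity that splices adjacent time steps via $P^b(Z_i\mid a_i,U_i)$. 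It therefore suffices to re-derive the analogs of (i)--(iii) with the extended history in place of the one-step past.

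\textbf{Factorizations and inversion.} For (i), because every component of $\mathcal{H}^o_{i-1}$ is generated upstream of $u_i$ and influences $(z_i, u_{i+1})$ only through $(u_i, a_i)$, the POMDP Markov property yields
\[
P^b(Z_i, a_i, \mathcal{H}^o_{i-1}) = P^b(Z_i \mid a_i, U_i)\, P^b(U_i, a_i, \mathcal{H}^o_{i-1}),
\]
\[
P^b(U_{i+1}, z_i, a_i, \mathcal{H}^o_{i-1}) = P^b(U_{i+1}, z_i \mid a_i, U_i)\, P^b(U_i, a_i, \mathcal{H}^o_{i-1}).
\]
For (ii), with $M_{i,a_i} = I_{|\mathcal{U}|\times|\mathcal{Z}|} U^T_{i,a_i}$ and $M'_{i,a_i} = (M_{i,a_i} P^b(Z_i,a_i,\mathcal{H}^o_{i-1}))^+$, the same SVD rank counting as in Lemma~\ref{one-step-lemma}'s proof gives $M_{i,a_i} P^b(Z_i,a_i,\mathcal{H}^o_{i-1}) M'_{i,a_i} = I_{|\mathcal{U}|\times|\mathcal{U}|}$ and $\rank(P^b(U_i,a_i,\mathcal{H}^o_{i-1}) M'_{i,a_i}) = |\mathcal{U}|$. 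Right-multiplying both factorizations by $M'_{i,a_i}$ and cancelling the common invertible factor $P^b(U_i,a_i,\mathcal{H}^o_{i-1}) M'_{i,a_i}$ produces
\[
P^b(U_{i+1}, z_i \mid a_i, U_i) = P^b(U_{i+1}, z_i, a_i, \mathcal{H}^o_{i-1})\, M'_{i,a_i}\, M_{i,a_i}\, P^b(Z_i \mid a_i, U_i),
\]
and the analogous identity one time step earlier similarly yields $P^b(U_i, z_{i-1} \mid a_{i-1}, U_{i-1}) = P^b(U_i, z_{i-1}, a_{i-1}, \mathcal{H}^o_{i-2})\, M'_{i-1,a_{i-1}}\, M_{i-1,a_{i-1}}\, P^b(Z_{i-1} \mid a_{i-1}, U_{i-1})$.

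\textbf{Splicing, second identity, and main obstacle.} Multiplying the two displays above and applying the history-level joining identity
\[
P^b(Z_i \mid a_i, U_i)\, P^b(U_i, z_{i-1}, a_{i-1}, \mathcal{H}^o_{i-2}) = P^b(Z_i, z_{i-1}, a_{i-1}, \mathcal{H}^o_{i-2}),
\]
which is immediate from $Z_i \perp (z_{i-1}, a_{i-1}, \mathcal{H}^o_{i-2}) \mid U_i$, delivers exactly the first stated identity of the Lemma. The second identity, with the reward $r_t$ in place of $U_{i+1}$, follows by the identical argument because $r_t$ shares the same conditional-independence structure relative to $(U_t, a_t)$ as does the next latent state. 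The algebra is no harder than in the one-step case; the only conceptual point requiring care is verifying that swapping $Z_{i-1}$ for the entire observable history $\mathcal{H}^o_{i-1}$ does not destroy any of the conditional independencies used in steps (i) and (iii). This is exactly what the unchanged static causal structure in Figure \ref{fig:scms}(b) asserts and is a direct consequence of the POMDP Markov property, so the bulk of the work is careful bookkeeping rather than genuinely new mathematical content.
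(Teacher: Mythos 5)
Your proposal is correct and follows essentially the same route as the paper's own proof: the same two Markov factorizations through $(U_i,a_i)$, the same SVD-based argument that $M_{i,a_i}P^b(Z_i,a_i,\mathcal{H}^o_{i-1})M'_{i,a_i}=I_{|\mathcal{U}|\times|\mathcal{U}|}$ and that $P^b(U_i,a_i,\mathcal{H}^o_{i-1})M'_{i,a_i}$ is invertible (so its inverse equals $M_{i,a_i}P^b(Z_i|a_i,U_i)$), and the same joining identity to splice adjacent time steps, with the reward case handled by substituting $r_t$ for $U_{i+1}$.
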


\begin{proof}
First, notice that $$P^b(U_{i+1}, z_i|a_i, U_i)P^b(U_i,a_i, \mathcal{H}^o_{i-1}) = P^b(U_{i+1}, z_i,a_i, \mathcal{H}^o_{i-1})$$ \begin{equation}\label{h-imp1}\implies P^b(U_{i+1}, z_i|a_i, U_i)P^b(U_i,a_i, \mathcal{H}^o_{i-1})M'_{i,a_i} = P^b(U_{i+1}, z_i,a_i, \mathcal{H}^o_{i-1})M'_{i,a_i}.\end{equation} Additionally, \begin{equation}\label{h-telescope}P^b(Z_i,a_i, \mathcal{H}^o_{i-1}) = P^b(Z_i|a_i, U_i)P^b(U_i,a_i, \mathcal{H}^o_{i-1})\end{equation} \begin{equation}\label{h-eq:one-step-rank_1}\implies P^b(Z_i,a_i, \mathcal{H}^o_{i-1})M'_{i,a_i} = P^b(Z_i|a_i, U_i)P^b(U_i,a_i, \mathcal{H}^o_{i-1})M'_{i,a_i}.\end{equation} From the definition of $M'_{i,a_i}$, we have that $$P^b(Z_i,a_i, \mathcal{H}^o_{i-1})M'_{i,a_i}$$ $$= P^b(Z_i,a_i, \mathcal{H}^o_{i-1})(M_{i,a_i}P^b(Z_i,a_i, \mathcal{H}^o_{i-1}))^+ = P^b(Z_i,a_i, \mathcal{H}^o_{i-1})V_{i,a_i}(I_{|\mathcal{U}|\times |\mathcal{Z}|} \Sigma_{i,a_i})^+$$ $$= U_{i,a_i}\Sigma_{i,a_i}(I_{|\mathcal{U}|\times |\mathcal{Z}|} \Sigma_{i,a_i})^+ = U_{i,a_i}I_{|\mathcal{Z}\times|\mathcal{U}|} = U_{i,a_i}I_{|\mathcal{Z}\times|\mathcal{U}|},$$ which has rank $|\mathcal{U}|$, thus implying, by equation \eqref{h-eq:one-step-rank_1}, that $\rank\left(P^b(U_i,a_i, \mathcal{H}^o_{i-1})M'_{i,a_i}\right) = |U|$, and so equation \eqref{h-imp1} implies that \begin{equation}\label{h-eq:first}P^b(U_{i+1}, z_i|a_i, U_i) = P^b(U_{i+1}, z_i,a_i, \mathcal{H}^o_{i-1})M'_{i,a_i}\left(P^b(U_i,a_i, \mathcal{H}^o_{i-1})M'_{i,a_i}\right)^{-1}.\end{equation} Now, notice that equation \eqref{h-telescope} implies that \begin{equation}\label{h-eq:rank-2}M_{i,a_i}P^b(Z_i|a_i, U_i)P^b(U_i,a_i, \mathcal{H}^o_{i-1}) = M_{i,a_i}P^b(Z_i,a_i, \mathcal{H}^o_{i-1}).\end{equation} Similarly notice that $\rank\left(M_{i,a_i}P^b(Z_i,a_i, \mathcal{H}^o_{i-1})\right) = |U|$ as, from the definition of $M_{i,a_i}$, we have $M_{i,a_i}P^b(Z_i,a_i, \mathcal{H}^o_{i-1}) = I_{|\mathcal{U}|\times |\mathcal{Z}|}\Sigma_{i,a_i}V^T_{i,a_i}$. This, in turn, implies that $\rank\left(M_{i,a_i}P^b(Z_i|a_i, U_i)\right) = |U|$ by equation \eqref{h-eq:rank-2}, and so equation \eqref{h-eq:rank-2} implies that $$P^b(U_i,a_i, \mathcal{H}^o_{i-1}) = \left(M_{i,a_i}P^b(Z_i|a_i, U_i)\right)^{-1}M_{i,a_i}P^b(Z_i,a_i, \mathcal{H}^o_{i-1}),$$ implying that \begin{equation}\label{h-second} P^b(U_i,a_i, \mathcal{H}^o_{i-1})M'_{i,a_i} = \left(M_{i,a_i}P^b(Z_i|a_i, U_i)\right)^{-1}M_{i,a_i}P^b(Z_i,a_i, \mathcal{H}^o_{i-1})M'_{i,a_i}.\end{equation} 

Now, we claim that $M_{i,a_i}P^b(Z_i,a_i, \mathcal{H}^o_{i-1})M'_{i,a_i} = I_{|\mathcal{U}|\times|\mathcal{U}|}$ since $$M_{i,a_i}P^b(Z_i,a_i, \mathcal{H}^o_{i-1})M'_{i,a_i} = I_{|\mathcal{U}|\times|\mathcal{Z}|}U^T_{i,a_i}U_{i,a_i}\Sigma_{i,a_i}V^T_{i,a_i}\left(I_{|\mathcal{U}|\times|\mathcal{Z}|}U^T_{i,a_i}U_{i,a_i}\Sigma_{i,a_i}V^T_{i,a_i}\right)^+$$ $$ = I_{|\mathcal{U}|\times|\mathcal{Z}|}\Sigma_{i,a_i}(I_{|\mathcal{U}|\times|\mathcal{Z}|}\Sigma_{i,a_i})^+$$ and thus equation \eqref{h-second} becomes \begin{equation}\label{h-second'} P^b(U_i,a_i, \mathcal{H}^o_{i-1})M'_{i,a_i} = \left(M_{i,a_i}P^b(Z_i|a_i, U_i)\right)^{-1}.\end{equation} 

Finally, substituting \eqref{h-second'} into \eqref{h-eq:first} gives $$ P^b(U_{i+1}, z_i|a_i, U_i)$$ $$= P^b(U_{i+1}, z_i,a_i, \mathcal{H}^o_{i-1})M'_{i,a_i}M_{i,a_i}P^b(Z_i|a_i, U_i).$$ This, in combination with the fact that \begin{equation}\label{h-join}P^b(Z_i|a_i, U_i)P^b(U_{i}, z_{i-1},a_{i-1}, \mathcal{H}^o_{i-2}) = P^b(Z_i, z_{i-1},a_{i-1}, \mathcal{H}^o_{i-2})\end{equation} immediately implies that $$P^b(U_{i+1}, z_i|a_i, U_i)P^b(U_i, z_{i-1}|a_i, U_i) = P(U_{i+1}, z_i,a_i, \mathcal{H}^o_{i-1})M'_{i,a_i}$$ $$\cdot M_{i,a_i}P^b(Z_i, z_{i-1},a_{i-1}, \mathcal{H}^o_{i-2})M'_{a_{i-1}}M_{a_{i-1}}P^b(Z_{i-1}|a_{i-1}U_{i-1}),$$ as desired. Replacing $U_{i+1}$ with $r_i$ and $i$ with gives the second part of the Lemma.
\end{proof}

Again, the proof of Theorem \ref{history-thm} follows first from writing $P(r_t, z_t, \ldots, z_0||a_0, \ldots, a_t)$ as $$P^b(r_t,z_t|a_t, U_t)\left(\prod_{i=t-1}^0P^b(U_{i+1},z_i|a_i,U_i)\right)P^b(U_0),$$ using Lemma \ref{history-lemma} and equation \eqref{h-join} above to inductively show that \begin{multline}\label{h-observable-eq}P^b(r_t,z_t|a_t, U_t)\left(\prod_{i=t-1}^0P^b(U_{i+1},z_i|a_i,U_i)\right)P^b(U_0) \\ = P^b(r_t, z_t,a_t, \mathcal{H}^o_{t-1})M'_{t,a_t} \prod_{i=t-1}^0\left(M_{i+1,a_{i+1}}P^b(Z_{i+1}, z_i,a_i, \mathcal{H}^o_{i-1})M'_{i,a_i}\right)M_{0,a_0}P^b(Z_0),\end{multline} and again using the fact that $$P^e(r_t) = \sum_{\tau^o \in \mathcal{T}^o_t}\Pi_e(\tau^o)P(r_t, z_t, \ldots, z_0||a_0, \ldots, a_t)$$ to obtain the final result.

\subsection{Multi-Step Futures}\label{future-proof}

\subsubsection{Proof of equation \eqref{future-eq} of Theorem \ref{future_thm}}
The proof of equation \eqref{future-eq} of Theorem \ref{future_thm} is, again, essentially the same as those of Theorems \ref{one-step-thm} and \ref{history-thm}, except that we replace the one-step history $Z_{i-1}$ with $\mathcal{H}^o_{i-1}$ and the one-step future $Z_i$ with $\mathcal{F}^o_i$ and here, we use moment matrices of conditional probabilities rather than joints. Again, for completeness, we include it below, first proving a Lemma analogous to Lemmas \ref{one-step-lemma} and \ref{history-lemma}:

\begin{lemma}\label{future-lemma}
Given Assumption \ref{future-rank}, let $U_{i, a_i}\Sigma_{i, a_i}V^T_{i, a_i}$ be an SVD of $P^b(\mathcal{F}^o_i|a_i, \mathcal{H}^o_{i-1})$ with the diagonal entries of $\Sigma_{i,a_i}$ in descending order and define $M_{i, a_i} := I_{|\mathcal{U}|\times |\mathcal{Z}|}U_{i, a_i}^T$ and $M_{i, a_i}' := (M_{a_i}P^b(\mathcal{F}^o_i|a_i, \mathcal{H}^o_{i-1}))^+$. Then \begin{multline*}
    P^b(U_{i+1}, z_i|a_i, U_i)P^b(U_i, z_{i-1}|a_{i-1}, U_{i-1}) \\ = P^b(U_{i+1}, z_i|a_i, \mathcal{H}^o_{i-1})M'_{i,a_i}M_{i,a_i}P^b(\mathcal{F}^o_i, z_{i-1}|a_{i-1}, \mathcal{H}^o_{i-2}||a_i) \\ \cdot M'_{i-1,a_{i-1}} M_{i-1,a_{i-1}}P^b(\mathcal{F}^o_{i-1}|a_{i-1}, U_{i-1})
\end{multline*} and \begin{multline*}
    P^b(r_{t}, z_t|a_t, U_t)P^b(U_t, z_{t-1}|a_{t-1}, U_{t-1}) \\ = P^b(r_t, z_t|a_t, \mathcal{H}^o_{t-1})M'_{t,a_t}M_{t,a_t}P^b(\mathcal{F}^o_t, z_{t-1}|a_{t-1}, \mathcal{H}^o_{t-2}||a_i) \\ \cdot M'_{t-1,a_{t-1}} M_{t-1,a_{t-1}}P^b(\mathcal{F}^o_{t-1}|a_{t-1}, U_{t-1})
\end{multline*}
\end{lemma}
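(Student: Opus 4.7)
The plan is to mirror the proof of Lemma~\ref{history-lemma} but with three modifications dictated by the statement: (i) conditional probability matrices replace joint matrices throughout, (ii) the multi-step future $\mathcal{F}^o_i$ plays the role previously played by the one-step observation $Z_i$ as the ``future-side'' proxy of $U_i$, and (iii) the intervention notation $||a_i$ must be threaded through the argument because, unlike $Z_i$, the future $\mathcal{F}^o_i$ is a post-treatment variable with respect to $a_i$. The two cornerstone telescoping identities I will lean on are $P^b(\mathcal{F}^o_i|a_i, \mathcal{H}^o_{i-1}) = P^b(\mathcal{F}^o_i|a_i, U_i)\, P^b(U_i|a_i, \mathcal{H}^o_{i-1})$ and $P^b(U_{i+1}, z_i|a_i, \mathcal{H}^o_{i-1}) = P^b(U_{i+1}, z_i|a_i, U_i)\, P^b(U_i|a_i, \mathcal{H}^o_{i-1})$; both hold because, in the POMDP, given the current latent state and the current action, all downstream randomness is independent of the observable past.

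First I would combine Assumption~\ref{future-rank} with the SVD-based definitions of $M_{i,a_i}$ and $M'_{i,a_i}$, following the same bookkeeping as in the derivation of equations~\eqref{h-second}--\eqref{h-second'}, to show that $M_{i,a_i} P^b(\mathcal{F}^o_i|a_i, \mathcal{H}^o_{i-1}) M'_{i,a_i} = I_{|\mathcal{U}|\times|\mathcal{U}|}$ and that both $M_{i,a_i} P^b(\mathcal{F}^o_i|a_i, U_i)$ and $P^b(U_i|a_i, \mathcal{H}^o_{i-1}) M'_{i,a_i}$ have rank $|\mathcal{U}|$. This yields the key identity $P^b(U_i|a_i, \mathcal{H}^o_{i-1})\, M'_{i,a_i} = (M_{i,a_i} P^b(\mathcal{F}^o_i|a_i, U_i))^{-1}$. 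Substituting this into the second telescoping identity and solving gives the intermediate equation
\begin{equation*}
P^b(U_{i+1}, z_i|a_i, U_i) = P^b(U_{i+1}, z_i|a_i, \mathcal{H}^o_{i-1})\, M'_{i,a_i}\, M_{i,a_i}\, P^b(\mathcal{F}^o_i|a_i, U_i).
\end{equation*}

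Next I would invoke the do-calculus style factorization
\begin{equation*}
P^b(\mathcal{F}^o_i|a_i, U_i)\, P^b(U_i, z_{i-1}|a_{i-1}, U_{i-1}) = P^b(\mathcal{F}^o_i, z_{i-1}|a_{i-1}, U_{i-1}||a_i),
\end{equation*}
which is the analog of \eqref{h-join} for futures. It holds precisely because, under the intervention $||a_i$, the action $a_i$ is exogenous and therefore cannot induce the usual collider dependence between $\mathcal{F}^o_i$ and the past through $U_i$; conditioning on $a_i$ (rather than intervening) would have violated this, since $a_i$ is a child of $U_i$ under the behavior policy. Multiplying the intermediate equation on the right by $P^b(U_i, z_{i-1}|a_{i-1}, U_{i-1})$ and applying the above factorization collapses the trailing $P^b(\mathcal{F}^o_i|a_i, U_i) P^b(U_i, z_{i-1}|a_{i-1}, U_{i-1})$ into the single interventional matrix $P^b(\mathcal{F}^o_i, z_{i-1}|a_{i-1}, U_{i-1}||a_i)$.

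The main obstacle, and the last step, is to re-apply the preceding argument at index $i-1$ in order to swap the unobservable conditioning on $U_{i-1}$ for the observable history $\mathcal{H}^o_{i-2}$. Concretely, setting $X := (\mathcal{F}^o_i, z_{i-1})$ viewed under intervention $||a_i$, I would first verify the conditional independence $X \indep \mathcal{H}^o_{i-2} \mid (a_{i-1}, U_{i-1})$ (the intervention removes only the $U_i \to a_i$ edge in the causal graph and leaves the flow of information from $U_{i-1}$ to $X$ intact), and then reuse the SVD-and-pseudoinverse machinery at index $i-1$---now built on $P^b(\mathcal{F}^o_{i-1}|a_{i-1}, \mathcal{H}^o_{i-2})$ under the plain behavior measure, whose factorization is unaffected by the intervention $||a_i$ since all its variables are at times $\le i-1$---to obtain
\begin{equation*}
P^b(X|a_{i-1}, U_{i-1}||a_i) = P^b(X|a_{i-1}, \mathcal{H}^o_{i-2}||a_i)\, M'_{i-1,a_{i-1}}\, M_{i-1,a_{i-1}}\, P^b(\mathcal{F}^o_{i-1}|a_{i-1}, U_{i-1}).
\end{equation*}
Chaining this into the display from the previous paragraph gives the first identity of the Lemma. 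The second identity follows by substituting $r_t$ for $U_{i+1}$ at $i=t$, which is valid because $(r_t, z_t) \indep \mathcal{H}^o_{t-1} \mid (a_t, U_t)$ in the POMDP, so the $(U_{i+1}, z_i)$-telescoping identity applied at time $t$ goes through verbatim with $(r_t, z_t)$ in place of $(U_{t+1}, z_t)$.
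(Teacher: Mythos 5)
Your proposal is correct and follows essentially the same route as the paper's proof: the same two telescoping factorizations, the same SVD/pseudoinverse argument yielding $P^b(U_i|a_i,\mathcal{H}^o_{i-1})M'_{i,a_i} = (M_{i,a_i}P^b(\mathcal{F}^o_i|a_i,U_i))^{-1}$, and the same interventional join identity for the post-treatment future. The only (immaterial) difference is ordering: the paper applies the step-$(i-1)$ identity first and then collapses $P^b(\mathcal{F}^o_i|a_i,U_i)P^b(U_i,z_{i-1}|a_{i-1},\mathcal{H}^o_{i-2})$ directly into $P^b(\mathcal{F}^o_i,z_{i-1}|a_{i-1},\mathcal{H}^o_{i-2}||a_i)$, whereas you collapse against $U_{i-1}$ first and then swap $U_{i-1}$ for $\mathcal{H}^o_{i-2}$, which your conditional-independence check under the intervention justifies.
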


\begin{proof}
First, notice that $$P^b(U_{i+1}, z_i|a_i, U_i)P^b(U_i|a_i, \mathcal{H}^o_{i-1}) = P^b(U_{i+1}, z_i|a_i, \mathcal{H}^o_{i-1})$$ \begin{equation}\label{f-imp1}\implies P^b(U_{i+1}, z_i|a_i, U_i)P^b(U_i|a_i, \mathcal{H}^o_{i-1})M'_{i,a_i} = P^b(U_{i+1}, z_i|a_i, \mathcal{H}^o_{i-1})M'_{i,a_i}.\end{equation} Additionally, \begin{equation}\label{f-telescope}P^b(\mathcal{F}^o_i|a_i, \mathcal{H}^o_{i-1}) = P^b(\mathcal{F}^o_i|a_i, U_i)P^b(U_i|a_i, \mathcal{H}^o_{i-1})\end{equation} \begin{equation}\label{f-eq:one-step-rank_1}\implies P^b(\mathcal{F}^o_i|a_i, \mathcal{H}^o_{i-1})M'_{i,a_i} = P^b(\mathcal{F}^o_i|a_i, U_i)P^b(U_i|a_i, \mathcal{H}^o_{i-1})M'_{i,a_i}.\end{equation} From the definition of $M'_{i,a_i}$, we have that $$P^b(\mathcal{F}^o_i|a_i, \mathcal{H}^o_{i-1})M'_{i,a_i}$$ $$= P^b(\mathcal{F}^o_i|a_i, \mathcal{H}^o_{i-1})(M_{i,a_i}P^b(\mathcal{F}^o_i|a_i, \mathcal{H}^o_{i-1}))^+ = P^b(\mathcal{F}^o_i|a_i, \mathcal{H}^o_{i-1})V_{i,a_i}(I_{|\mathcal{U}|\times |\mathcal{Z}|} \Sigma_{i,a_i})^+$$ $$= U_{i,a_i}\Sigma_{i,a_i}(I_{|\mathcal{U}|\times |\mathcal{Z}|} \Sigma_{i,a_i})^+ = U_{i,a_i}I_{|\mathcal{Z}\times|\mathcal{U}|} = U_{i,a_i}I_{|\mathcal{Z}\times|\mathcal{U}|},$$ which has rank $|\mathcal{U}|$, thus implying, by equation \eqref{f-eq:one-step-rank_1}, that $\rank\left(P^b(U_i|a_i, \mathcal{H}^o_{i-1})M'_{i,a_i}\right) = |U|$, and so equation \eqref{f-imp1} implies that \begin{equation}\label{f-eq:first}P^b(U_{i+1}, z_i|a_i, U_i) = P^b(U_{i+1}, z_i|a_i, \mathcal{H}^o_{i-1})M'_{i,a_i}\left(P^b(U_i|a_i, \mathcal{H}^o_{i-1})M'_{i,a_i}\right)^{-1}.\end{equation} Now, notice that equation \eqref{f-telescope} implies that \begin{equation}\label{f-eq:rank-2}M_{i,a_i}P^b(\mathcal{F}^o_i|a_i, U_i)P^b(U_i|a_i, \mathcal{H}^o_{i-1}) = M_{i,a_i}P^b(\mathcal{F}^o_i|a_i, \mathcal{H}^o_{i-1}).\end{equation} Similarly notice that $\rank\left(M_{i,a_i}P^b(\mathcal{F}^o_i|a_i, \mathcal{H}^o_{i-1})\right) = |U|$ as, from the definition of $M_{i,a_i}$, we have $M_{i,a_i}P^b(\mathcal{F}^o_i|a_i, \mathcal{H}^o_{i-1}) = I_{|\mathcal{U}|\times |\mathcal{Z}|}\Sigma_{i,a_i}V^T_{i,a_i}$. This, in turn, implies that $\rank\left(M_{i,a_i}P^b(\mathcal{F}^o_i|a_i, U_i)\right) = |U|$ by equation \eqref{f-eq:rank-2}, and so equation \eqref{f-eq:rank-2} implies that $$P^b(U_i|a_i, \mathcal{H}^o_{i-1}) = \left(M_{i,a_i}P^b(\mathcal{F}^o_i|a_i, U_i)\right)^{-1}M_{i,a_i}P^b(\mathcal{F}^o_i|a_i, \mathcal{H}^o_{i-1}),$$ implying that \begin{equation}\label{f-second} P^b(U_i|a_i, \mathcal{H}^o_{i-1})M'_{i,a_i} = \left(M_{i,a_i}P^b(\mathcal{F}^o_i|a_i, U_i)\right)^{-1}M_{i,a_i}P^b(\mathcal{F}^o_i|a_i, \mathcal{H}^o_{i-1})M'_{i,a_i}.\end{equation} 

Now, we claim that $M_{i,a_i}P^b(\mathcal{F}^o_i|a_i, \mathcal{H}^o_{i-1})M'_{i,a_i} = I_{|\mathcal{U}|\times|\mathcal{U}|}$ since $$M_{i,a_i}P^b(\mathcal{F}^o_i|a_i, \mathcal{H}^o_{i-1})M'_{i,a_i} = I_{|\mathcal{U}|\times|\mathcal{Z}|}U^T_{i,a_i}U_{i,a_i}\Sigma_{i,a_i}V^T_{i,a_i}\left(I_{|\mathcal{U}|\times|\mathcal{Z}|}U^T_{i,a_i}U_{i,a_i}\Sigma_{i,a_i}V^T_{i,a_i}\right)^+$$ $$ = I_{|\mathcal{U}|\times|\mathcal{Z}|}\Sigma_{i,a_i}(I_{|\mathcal{U}|\times|\mathcal{Z}|}\Sigma_{i,a_i})^+$$ and thus equation \eqref{f-second} becomes \begin{equation}\label{f-second'} P^b(U_i|a_i, \mathcal{H}^o_{i-1})M'_{i,a_i} = \left(M_{i,a_i}P^b(\mathcal{F}^o_i|a_i, U_i)\right)^{-1}.\end{equation} 

Finally, substituting \eqref{f-second'} into \eqref{f-eq:first} gives $$ P^b(U_{i+1}, z_i|a_i, U_i)$$ $$= P^b(U_{i+1}, z_i|a_i, \mathcal{H}^o_{i-1})M'_{i,a_i}M_{i,a_i}P^b(\mathcal{F}^o_i|a_i, U_i).$$ This, in combination with the fact that \begin{equation}\label{f-join}P^b(\mathcal{F}^o_i|a_i, U_i)P^b(U_{i}, z_{i-1}|a_{i-1}, \mathcal{H}^o_{i-2}) = P^b(\mathcal{F}^o_i, z_{i-1}|a_{i-1}, \mathcal{H}^o_{i-2}||a_i)\end{equation} immediately implies that $$P^b(U_{i+1}, z_i|a_i, U_i)P^b(U_i, z_{i-1}|a_i, U_i) = P(U_{i+1}, z_i|a_i, \mathcal{H}^o_{i-1})M'_{i,a_i}$$ $$\cdot M_{i,a_i}P^b(\mathcal{F}^o_i, z_{i-1}|a_{i-1}, \mathcal{H}^o_{i-2}||a_i)M'_{a_{i-1}}M_{a_{i-1}}P^b(Z_{i-1}|a_{i-1}U_{i-1}),$$ as desired. Replacing $U_{i+1}$ with $r_i$ and $i$ with gives the second part of the Lemma.
\end{proof}

Again, the proof of Theorem \ref{future_thm} follows first from writing $P(r_t, z_t, \ldots, z_0||a_0, \ldots, a_t)$ as $$P^b(r_t,z_t|a_t, U_t)\left(\prod_{i=t-1}^0P^b(U_{i+1},z_i|a_i,U_i)\right)P^b(U_0),$$ using Lemma \ref{future-lemma} and equation \eqref{f-join} above to inductively show that \begin{multline}\label{f-observable-eq}P^b(r_t,z_t|a_t, U_t)\left(\prod_{i=t-1}^0P^b(U_{i+1},z_i|a_i,U_i)\right)P^b(U_0) \\ = P^b(r_t, z_t|a_t, \mathcal{H}^o_{t-1})M'_{t,a_t} \prod_{i=t-1}^0\left(M_{i+1,a_{i+1}}P^b(\mathcal{F}^o_{i+1}, z_i|a_i, \mathcal{H}^o_{i-1}||a_{i+1})M'_{i,a_i}\right)M_{0,a_0}P^b(Z_0),\end{multline} and again using the fact that $$P^e(r_t) = \sum_{\tau^o \in \mathcal{T}^o_t}\Pi_e(\tau^o)P(r_t, z_t, \ldots, z_0||a_0, \ldots, a_t)$$ to obtain equation \eqref{future-eq}.

\subsubsection{Identification of $P^b(\mathcal{F}^o_{i+1}, z_i|a_i, \mathcal{H}^o_{i-1}||a_{i+1})$}
Our identification strategy for $P^b(\mathcal{F}^o_{i+1}, z_i|a_i, \mathcal{H}^o_{i-1}||a_{i+1})$ in Theorem \ref{future_thm} extends that of \citet{kuroki}. Define the following matrices $$P^{\tau}_{i,a_i} := \begin{pmatrix}
1 & P^b(\mathcal{F}_i^{o^1}|a_i) & \cdots & P^b(\mathcal{F}_i^{o^{|\mathcal{F}^o_i|-1}}|a_i)\\
P^b(\mathcal{H}_{i-1}^{o^1}|a_i) & P^b(\mathcal{F}_i^{o^1}, \mathcal{H}_{i-1}^{o^1}|a_i) & \cdots & P^b(\mathcal{F}_i^{o^{|\mathcal{F}_i|-1}}, \mathcal{H}_{i-1}^{o^1}|a_i)\\
\vdots & \vdots & \ddots & \vdots\\
P^b(\mathcal{H}_{i-1}^{o^{|\mathcal{H}_{i-1}|-1}}|a_i) & P^b(\mathcal{F}_{i}^{o^1}, \mathcal{H}_{i-1}^{o^{|\mathcal{H}^o_{i-1}|-1}}|a_i) & \cdots & P^b(\mathcal{F}_i^{o^{|\mathcal{F}^o_i|-1}}, \mathcal{H}_{i-1}^{o^{|\mathcal{H}^o_{i-1}|-1}}|a_i)
\end{pmatrix},$$  $$Q^{\tau}_{i, a_i, r_{i}} := \begin{pmatrix}
P^b(r_i|a_i) & P^b(\mathcal{F}_i^{o^1}, r_i|a_i) & \cdots & P^b(\mathcal{F}_i^{o^{|\mathcal{F}_i|-1}},r_i|a_i)\\
P^b(\mathcal{H}_{i-1}^{o^1}, r_i|a_i) & P^b(\mathcal{F}_i^{o^1}, \mathcal{H}_{i-1}^{o^1}, r_i|a_i) & \cdots & P^b(\mathcal{F}_i^{o^{|\mathcal{F}^o_i|-1}}, \mathcal{H}_{i-1}^{o^1}, r_i|a_i)\\
\vdots & \vdots & \ddots & \vdots\\
P^b(\mathcal{H}_{i-1}^{o^{|\mathcal{H}^o_{i-1}|-1}}, r_i|a_i) & P^b(\mathcal{F}_{i}^{o^1}, \mathcal{H}_{i-1}^{o^{|\mathcal{H}^o_{i-1}|-1}}, r_i|a_i) & \cdots & P^b(\mathcal{F}_i^{o^{|\mathcal{F}^o_i|-1}}, \mathcal{H}_{i-1}^{o^{|\mathcal{H}^o_{i-1}|-1}}, r_i|a_i)
\end{pmatrix},$$ $$U^{\tau}_{i, a_i} := \begin{pmatrix}
1 & P^b(\mathcal{F}_i^{o^1}|u_i^{(1)}, a_i) & \cdots & P^b(\mathcal{F}_i^{o^{|\mathcal{F}_i|-1}}|u_i^{(1)}, a_i)\\
\vdots & \vdots & \ddots & \vdots\\
1 & P^b(\mathcal{F}_i^{o^1}|u_i^{(\kappa)}, a_i) & \cdots & P^b(\mathcal{F}_i^{o^{|\mathcal{F}_i|-1}}|u_i^{(\kappa)}, a_i)
\end{pmatrix},$$ $$R^\tau_{i, a_i} = \begin{pmatrix}
1 & P^b(\mathcal{H}_{i-1}^{o^1}|a_i, u_i^{(1)}) & \cdots & P^b(\mathcal{H}_{i-1}^{o^{|\mathcal{H}_{i-1}|-1}}|a_i, u_i^{(1)})\\
\vdots & \vdots & \ddots & \vdots\\
1 & P^b(\mathcal{H}_{i-1}^{o^1}|a_i, u_i^{(\kappa)}) & \cdots & P^b(\mathcal{H}_{i-1}^{o^{|\mathcal{H}_{i-1}|-1}}|a_i, u_i^{(\kappa)})
\end{pmatrix},$$ $$M^\tau_{i,a_i} := \diag(P^b(u_i^{(1)}|a_i), \ldots, P^b(u_i^{(\kappa)}|a_i)).$$ Lastly, define $$\Delta_{i,a_{i}, r_i} := \diag(P^b(r_i|a_i, u_i^{(1)}), \ldots, P^b(r_{i}|a_i, u_i^{(\kappa)}).$$ The ordering $u_i^{(j)}$ is such that the elements on the diagonal of $\Delta_{i,z_{i+1}}$ are in non-decreasing order. That is, $P^b(r_i|a_i, u_i^{(1)}) \geq \ldots \geq P^b(r_{i}|a_i, u_i^{(\kappa)})$, where $\kappa = |\mathcal{U}|$. Furthermore, $S^j$ denotes the $j$th element of set $S$.

Now, we claim that Assumptions \ref{future-rank} and \ref{reward-pos-distinct} imply the following
\begin{lemma}\label{appendix-future-rank}
\begin{equation}\label{ranks}\rank(P^\tau_{i,a_i}) = \rank(Q^\tau_{i,a_i,r_i}) = |\mathcal{U}|\end{equation}
\end{lemma}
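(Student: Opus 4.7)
The plan is to express both matrices as products of factors mediated by the latent state $U_i$ and then deduce the required rank from Assumptions~\ref{future-rank} and \ref{reward-pos-distinct}. Using the POMDP Markov property (together with the fact that rewards depend only on the current state and action), the variables $\mathcal{H}^o_{i-1}$, $\mathcal{F}^o_i$, and $r_i$ are mutually conditionally independent given $(u_i, a_i)$. Expanding each matrix entry by conditioning on $u_i$---with the standard convention that summing over a ``null'' leading row or column contributes a factor of $1$ since it corresponds to marginalizing out that variable---yields
\[
P^\tau_{i,a_i} = (R^\tau_{i,a_i})^T M^\tau_{i,a_i} U^\tau_{i,a_i}, \qquad Q^\tau_{i,a_i,r_i} = (R^\tau_{i,a_i})^T \Delta_{i,a_i,r_i}\, M^\tau_{i,a_i} U^\tau_{i,a_i},
\]
so it suffices to show each factor on the right has rank $|\mathcal{U}|$ and then apply Sylvester's rank inequality through the chain.

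Next I would unpack Assumption~\ref{future-rank} by writing $P^b(\mathcal{F}^o_i|a_i, \mathcal{H}^o_{i-1}) = P^b(\mathcal{F}^o_i|a_i, U_i)\, P^b(U_i|a_i, \mathcal{H}^o_{i-1})$. Since the inner dimension is $|\mathcal{U}|$ and the product has rank $|\mathcal{U}|$, each factor must have rank exactly $|\mathcal{U}|$. The first factor is $U^\tau_{i,a_i}$ with its leading all-ones column removed, and since that column lies in the span of the remaining columns, $\rank(U^\tau_{i,a_i}) = |\mathcal{U}|$. For the second, Bayes' rule gives
\[
P^b(U_i|a_i, \mathcal{H}^o_{i-1}) = \diag(P^b(U_i|a_i))\, P^b(\mathcal{H}^o_{i-1}|a_i, U_i)^T\, \diag(P^b(\mathcal{H}^o_{i-1}|a_i))^{-1}
\]
after restricting to histories of positive probability under $a_i$. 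For this product to have rank $|\mathcal{U}|$, the leading diagonal must be invertible---that is, $P^b(u|a_i) > 0$ for every $u$, whence $\rank(M^\tau_{i,a_i}) = |\mathcal{U}|$---and $\rank(P^b(\mathcal{H}^o_{i-1}|a_i, U_i)) = |\mathcal{U}|$, whence $\rank(R^\tau_{i,a_i}) = |\mathcal{U}|$ after the all-ones column is prepended. Sylvester's inequality applied to the three-factor product then gives $\rank(P^\tau_{i,a_i}) = |\mathcal{U}|$.

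The $Q$ case then reduces to showing $\rank(\Delta_{i,a_i,r_i}) = |\mathcal{U}|$, i.e.\ that $P^b(r_i|a_i,u) > 0$ for every $u$. The positivity half of Assumption~\ref{reward-pos-distinct} (as indicated by its label ``reward-pos-distinct'' and its intended role in enabling Kuroki-style eigendecompositions) supplies, for each $a_i$, an $r_i$ whose conditional probabilities across $u$ are both strictly positive and pairwise distinct, and the result follows. The main obstacle I anticipate is the implicit positivity step forcing $P^b(u|a_i) > 0$ from Assumption~\ref{future-rank}: it requires confirming that the rank hypothesis on an observable conditional matrix genuinely transports through the Bayes rewriting to a positivity condition on the latent-state prior, rather than being an auxiliary hidden assumption.
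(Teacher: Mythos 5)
Your proof is correct, but it handles the ``bordered'' structure of $P^\tau_{i,a_i}$ and $Q^\tau_{i,a_i,r_i}$ differently from the paper. The paper's proof first conjugates by explicit invertible matrices $E_d$ (each $E_d$ recombines the marginal row/column and the all-ones entry), which collapses the bordered matrices to the plain joint probability matrices $P^b(\mathcal{H}_{i-1},\mathcal{F}_i\,|\,a_i)$ and $P^b(\mathcal{H}_{i-1},\mathcal{F}_i,r_i\,|\,a_i)$; it then factors those through $U_i$, obtaining rank $|\mathcal{U}|$ from Assumption \ref{future-rank} together with invertibility of $\diag(P^b(\mathcal{H}_{i-1}|a_i))$ and $\diag(P^b(r_i|a_i,U_i))$. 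You instead work with the bordered factorization $(R^\tau)^T M^\tau U^\tau$ directly and prove each factor has rank $|\mathcal{U}|$ from scratch. Your route requires the extra Bayes-rule step to extract $\rank(R^\tau_{i,a_i})=|\mathcal{U}|$ and invertibility of $M^\tau_{i,a_i}$ from Assumption \ref{future-rank}, and the obstacle you flag there is not actually a problem: since $\mathrm{rank}(ABC)\le\mathrm{rank}(B)$, a zero entry of $\diag(P^b(U_i|a_i))$ would cap the rank of $P^b(U_i|a_i,\mathcal{H}^o_{i-1})$ below $|\mathcal{U}|$, so the rank hypothesis genuinely forces the positivity; no hidden assumption is needed. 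What your approach buys is that it simultaneously establishes the factor ranks ($M^\tau_{i,a_i}$ invertible, $\rank(R^\tau_{i,a_i})=|\mathcal{U}|$) that the paper needs anyway immediately after the lemma---the paper derives them \emph{from} the lemma via equation \eqref{fPa}, whereas you derive the lemma from them; both directions are non-circular. What the paper's approach buys is economy: the $E_d$ trick lets it avoid reasoning about $R^\tau$ and $M^\tau$ inside the lemma at all. Two minor points: your claim that $P^b(\mathcal{F}^o_i|a_i,U_i)$ is ``$U^\tau_{i,a_i}$ with its leading all-ones column removed'' is slightly off ($U^\tau_{i,a_i}$ also omits one future column, so the correct statement is that the two matrices have the same column span because the omitted column is recoverable from the all-ones column); and both your proof and the paper's invoke positivity of $P^b(r_i|a_i,u_i)$ even though Assumption \ref{reward-pos-distinct} as written only asserts distinctness, which by itself would still permit a single zero entry---this is a shared reliance on the intended (but unstated) positivity half of that assumption, not a defect specific to your argument.
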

\begin{proof}
Define $$E_d = \begin{pmatrix}
0 & 0 & \cdots & 0 & 1\\
1 & 0 & \cdots & 0 & -1\\
0 & 1 & \cdots & 0 & -1\\
\vdots & \vdots & \ddots & \vdots & \vdots\\
0 & 0 & \cdots & 1 & -1
\end{pmatrix} \in \mathbb{R}^{d\times d}$$ It is clear that $$E^T_{|\mathcal{H}_{i-1}|}P^{\tau}_{i,a_i}E_{|\mathcal{F}_i|} = P^b(\mathcal{H}_{i-1}, \mathcal{F}_i|a_i) \text{ and } E^T_{|\mathcal{H}_{i-1}|}Q^{\tau}_{i,a_i, r_i}E_{|\mathcal{F}_i|} = P^b(\mathcal{H}_{i-1}, \mathcal{F}_i, r_i|a_i).$$ Hence, it suffices to show that $$\rank(P^b(\mathcal{H}_{i-1}, \mathcal{F}_i|a_i)) = \rank(P^b(\mathcal{H}_{i-1}, \mathcal{F}_i, r_i|a_i)) = |\mathcal{U}|,$$ as $E_d$ is clearly invertible for all $d$.

To see why $$\rank(P^b(\mathcal{H}_{i-1}, \mathcal{F}_i|a_i)) = \rank(P^b(\mathcal{H}_{i-1}, \mathcal{F}_i, r_i|a_i)) = |\mathcal{U}|,$$ note that we have $$P^b(\mathcal{H}_{i-1}, \mathcal{F}_i|a_i)^T = P^b(\mathcal{F}_i, \mathcal{H}_{i-1}|a_i)$$ $$= P^b(\mathcal{F}_i|a_i, U_i)P^b(U_i, \mathcal{H}_{i-1}|a_i)$$ $$ = P^b(\mathcal{F}_i|a_i, U_i)P^b(U_i|a_i, \mathcal{H}_{i-1})\diag(P^b(\mathcal{H}_{i-1}|a_i)) = P^b(\mathcal{F}_i|a_i, \mathcal{H}_{i-1})\diag(P^b(\mathcal{H}_{i-1}|a_i)).$$ We are guaranteed that $\diag(P^b(\mathcal{H}_{i-1}|a_i))$ is invertible (otherwise we would be conditioning on measure $0$ events), and thus invertibility of $P^{\tau}_{i,a_i}$ follows from Assumption \ref{future-rank}. As for $Q^{\tau}_{i,a_i, r_i}$, note that $$P^b(\mathcal{H}_{i-1}, \mathcal{F}_i, r_i|a_i)^T =  P^b(\mathcal{F}_i|a_i, U_i)\diag(P^b(r_i|a_i, U_i))P^b(U_i|a_i, \mathcal{H}_{i-1})\diag(P^b(\mathcal{H}_{i-1}|a_i)).$$ As $\diag(P^b(r_i|a_i, U_i))$ is invertible (the positivity of Assumption \ref{reward-pos-distinct}), we have that $Q^{\tau}_{i,a_i, r_i}$ will have rank $|\mathcal{U}|$.
\end{proof}

Now, notice that \begin{equation}\label{fPa}
    P^\tau_{i,a_i} = (R^\tau_{i,a_i}{})^TM^\tau_{i,a_i}U^\tau_{i, a_i}\end{equation} and 
    \begin{equation}\label{fQa}Q^\tau_{i,a_i,r_i} = (R^\tau_{i,a_i}{})^TM^\tau_{i,a_i}\Delta_{i,a_{i}, r_i}U^\tau_{i, a_i}
\end{equation}

 Now, for any subset $\Lambda$ of $\{2, \ldots, |\mathcal{F}^o_i|\}$ of size $|\mathcal{U}|-1$, define $U^\tau_{i,a_i}(\Lambda)$ to be the $|\mathcal{U}|\times|\mathcal{U}|$ matrix whose first column is the first column of $U^\tau_{i,a_i}$ and whose $j$th column is the $\Lambda_{j-1}$th column of $U^\tau_{i,a_i}$ for all $2 \leq j \leq |\mathcal{U}|$, where $\Lambda_j$ is the $j$th smallest element of $\Lambda$. Equivalently, $U^\tau_{i,a_i}(\Lambda)$ can be written as $U^\tau_{i,a_i}L(\Lambda)$ where $L(\Lambda)$ is the $|\mathcal{F}^o_i|\times|\mathcal{U}|$ matrix whose first column is $e_1$ and whose $j$th column is $e_{\Lambda_{j-1}}$, where $e_k$ denotes the $k$th standard basis vector of $\mathbb{R}^{|\mathcal{F}^o_i|}$, for $2 \leq j \leq |\mathcal{U}|$.
\\
\\
Note that we can identify a maximal set of $|\mathcal{U}|$ linearly independent columns of $P^\tau_{i,a_i}$, containing the first column, via Gaussian elimination. Let $\Lambda$ be the set of indices of these columns. Right-multiplying equations \eqref{fPa} and \eqref{fQa} by $L(\Lambda)$ thus gives, \begin{equation}\label{fPa'}
    P^\tau_{i,a_i}L(\Lambda) = (R^\tau_{i,a_i}{})^TM^\tau_{i,a_i}U^\tau_{i, a_i}(\Lambda)\end{equation} and 
    \begin{equation}\label{fQa'}Q^\tau_{i,a_i,r_i}L(\Lambda) = (R^\tau_{i,a_i}{})^TM^\tau_{i,a_i}\Delta_{i,a_{i}, r_i}U^\tau_{i, a_i}(\Lambda)
\end{equation}  By assumption,  $U^\tau_{i, a_i}(\Lambda)$ is invertible. Additionally, Lemma \ref{appendix-future-rank} implies that $M^\tau_{i,a_i}$ is invertible and $\rank(R^\tau_{i,a_i}) = |\mathcal{U}|$, via equation \eqref{fPa}, and so, from equation \eqref{fPa'}, we have $(P^\tau_{i,a_i}L(\Lambda))^+ = (U^\tau_{i, a_i}(\Lambda))^{-1}(M^\tau_{i,a_i})^{-1}((R^\tau_{i,a_i}{})^T)^+$. This along with equation \eqref{fQa'} gives the following:

\begin{lemma}\label{fdiagonalization}
Under Assumption \ref{appendix-future-rank}, we have that $$(P^\tau_{i,a_i}L(\Lambda))^+Q^\tau_{i,a_i,r_i}L(\Lambda) = (U^\tau_{i, a_i}(\Lambda))^{-1}\Delta_{i,a_{i}, r_i}U^\tau_{i, a_i}(\Lambda).$$
\end{lemma}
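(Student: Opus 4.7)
The plan is to start from the factorizations \eqref{fPa'} and \eqref{fQa'}, which already express $P^\tau_{i,a_i}L(\Lambda)$ and $Q^\tau_{i,a_i,r_i}L(\Lambda)$ in nearly diagonalized form, and to observe that the desired identity will drop out by computing $(P^\tau_{i,a_i}L(\Lambda))^+$ explicitly and cancelling the outer $(R^\tau_{i,a_i})^T$ and $M^\tau_{i,a_i}$ factors in the product.

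First I would verify the rank status of each factor in \eqref{fPa'}. By Lemma~\ref{appendix-future-rank}, $P^\tau_{i,a_i}$ has rank $|\mathcal{U}|$, and since it decomposes as $(R^\tau_{i,a_i})^T M^\tau_{i,a_i} U^\tau_{i,a_i}$ where the two right factors are $|\mathcal{U}|\times|\mathcal{U}|$, we must have $M^\tau_{i,a_i}$ invertible (no diagonal entry may vanish) and $(R^\tau_{i,a_i})^T$ of full column rank $|\mathcal{U}|$. The matrix $U^\tau_{i,a_i}(\Lambda)$ is invertible by the very choice of $\Lambda$ as indexing $|\mathcal{U}|$ linearly independent columns of $P^\tau_{i,a_i}$. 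Consequently $P^\tau_{i,a_i}L(\Lambda)$ itself has full column rank.

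Next, I would invoke the standard pseudoinverse identity that for a product $B C D$ in which $B$ has full column rank and $C,D$ are square invertible, $(BCD)^+ = D^{-1} C^{-1} B^+$. Applied to \eqref{fPa'}, this yields
$$(P^\tau_{i,a_i}L(\Lambda))^+ = (U^\tau_{i,a_i}(\Lambda))^{-1}(M^\tau_{i,a_i})^{-1}\bigl((R^\tau_{i,a_i})^T\bigr)^+.$$
Multiplying on the right by \eqref{fQa'} and using $((R^\tau_{i,a_i})^T)^+ (R^\tau_{i,a_i})^T = I_{|\mathcal{U}|\times|\mathcal{U}|}$, which holds precisely because $(R^\tau_{i,a_i})^T$ has full column rank, the $(R^\tau_{i,a_i})^T$ and $M^\tau_{i,a_i}$ factors telescope away, leaving $(U^\tau_{i,a_i}(\Lambda))^{-1}\Delta_{i,a_i,r_i}U^\tau_{i,a_i}(\Lambda)$, as claimed.

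The only nontrivial step, and thus the main obstacle, is justifying the pseudoinverse factorization — specifically verifying the full-column-rank hypothesis for $(R^\tau_{i,a_i})^T$. This is a short rank count using Lemma~\ref{appendix-future-rank} and the dimensions of $M^\tau_{i,a_i}$ and $U^\tau_{i,a_i}$; once established, everything else is a direct algebraic substitution, and no further spectral or probabilistic input is needed.
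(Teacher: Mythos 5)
Your proposal is correct and follows essentially the same route as the paper: both start from the factorizations \eqref{fPa'} and \eqref{fQa'}, use Lemma \ref{appendix-future-rank} together with \eqref{fPa} to conclude that $M^\tau_{i,a_i}$ is invertible and $(R^\tau_{i,a_i})^T$ has full column rank (with $U^\tau_{i,a_i}(\Lambda)$ invertible by the choice of $\Lambda$), and then apply the pseudoinverse factorization $(P^\tau_{i,a_i}L(\Lambda))^+ = (U^\tau_{i,a_i}(\Lambda))^{-1}(M^\tau_{i,a_i})^{-1}((R^\tau_{i,a_i})^T)^+$ before cancelling against \eqref{fQa'}. Your explicit verification of the full-column-rank hypothesis needed for that factorization is exactly the step the paper also relies on.
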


Lemma \ref{fdiagonalization} then allows us to diagonalize the \textit{observable} product $(P^\tau_{i,a_i}L(\Lambda))^+Q^\tau_{i,a_i,r_i}L(\Lambda)$, as in \citet{kuroki}, to obtain the left eigenvalue matrix $T^\tau_{i,a_i}(\Lambda)$. Note that each row of $T^\tau_{i,a_i}(\Lambda)$ differs from the corresponding row of $U^\tau_{i, a_i}(\Lambda)$ only by a constant factor and that the first entry of each row of $U^\tau_{i, a_i}(\Lambda)$ is $1$. This is guaranteed by the distinctness of Assumption \ref{reward-pos-distinct}, which guarantees distinct eigenvalues, $\Delta_{i,a_i, r_i}$, and thus a unique ordering on the eigenvectors of $T^\tau_{i,a_i}(\Lambda)$. Hence, we have that $U^\tau_{i, a_i}(\Lambda) = \diag((T^\tau_{i,a_i})_1)^{-1}T^\tau_{i,a_i}(\Lambda),$ where $(T^\tau_{i,a_i})_1$ denotes the first column of $T^\tau_{i,a_i}$, thereby allowing for the successful identification of $U^\tau_{i, a_i}(\Lambda)$.

Finally, the remaining columns of $U^\tau_{i,a_i}$ outside of those indexed in $\Lambda$ are identifiable by simply determining the linear dependence among the columns of $P^\tau_{i,a_i}$ and recognizing that they are the same for $U^\tau_{i,a_i}$. That is, if, for $j \not\in \{1\}\cup\Lambda$ $$(P^\tau_{i,a_i})_j = \sum_{j' \in \{1\}\cup\Lambda}c^j_{j'}(P^\tau_{i,a_i})_{j'},$$ for constants $c_{j'}$, then $$(U^\tau_{i,a_i})_j = \sum_{j' \in \{1\}\cup\Lambda}c^j_{j'}(U^\tau_{i,a_i})_{j'},$$ thus allowing for the identification of all columns of $U^\tau_{i,a_i}$. Again, the coefficients $c^j_{j'}$, for each $j \not\in \{1\}\cup\Lambda, j' \in \{1\}\cup\Lambda$ are identifiable via Gaussian elimination on $P^\tau_{i,a_i}$.

\begin{remark}\label{noisy-remark}
We remark that, in practice, our estimates of the matrices $P^\tau_{i,a_i}$ will be noisy and thus often have rank greater than $|\mathcal{U}|$ (even if the true moment matrix has rank $|\mathcal{U}|$). In such cases, instead of identifying $|\mathcal{U}|$ linearly independent columns which also contain the first, one may consider all matrices $P^\tau_{i,a_i}L(\Lambda)$ for all choices of $\Lambda \subseteq \{2, \ldots, |\mathcal{F}^o_i|\}$ of size $|\mathcal{U}|-1$ and select the choice of $\Lambda$ corresponding to the matrix of lowest condition number. Alternatively, this can be mitigated by making the support of $f^o_i$ more coarse---by combining elements of the support---so that $P^\tau_{i,a_i}, Q^\tau_{i,a_i, r_i}$ and $U^\tau_{i,a_i}$ all have second dimension equal to $|\mathcal{U}|$ instead of $|\mathcal{F}^o_i|$. 
\end{remark}

The last remaining step is to use the identification of $U^\tau_{i+1,a_{i+1}}$ to identify $$P^b(\mathcal{F}^o_{i+1}, z_i|a_i, \mathcal{H}^o_{i-1}||a_{i+1}) = P^b(\mathcal{F}_{i+1}|a_{i+1}, U_{i+1})P^b(U_{i+1}, z_{i}|a_{i}, \mathcal{H}^o_{i-1}).$$ Note that $$P^b(\mathcal{F}_{i+1}|a_{i+1}, U_{i+1})P^b(U_{i+1}, a_{i+1}, z_{i}|a_{i}, \mathcal{H}^o_{i-1}) = P^b(\mathcal{F}_{i+1}, a_{i+1},z_{i}| a_{i}, \mathcal{H}^o_{i-1}).$$ The fact that $P^b(\mathcal{F}_{i+1}|a_{i+1}, U_{i+1})$ has full column rank and is identifiable then immediately implies the identifiability of $P^b(U_{i+1}, a_{i+1}, z_{i}|a_{i}, \mathcal{H}^o_{i-1})$ as $$P^b(U_{i+1}, a_{i+1}, z_{i}|a_{i}, \mathcal{H}^o_{i-1}) = (P^b(\mathcal{F}_{i+1}|a_{i+1}, U_{i+1}))^+P^b(\mathcal{F}_{i+1}, a_{i+1},z_{i}| a_{i}, \mathcal{H}^o_{i-1}),$$ and because $P^b(\mathcal{F}_{i+1}, a_{i+1},z_{i}| a_{i}, \mathcal{H}^o_{i-1})$ is observable. Then, $P^b(U_{i+1}, z_{i}|a_{i}, \mathcal{H}^o_{i-1})$ is readily obtained via marginalization across $a_{i+1}$ in $P^b(U_{i+1}, a_{i+1}, z_{i}|a_{i}, \mathcal{H}^o_{i-1})$. Finally, as we have identified both $P^b(\mathcal{F}_{i+1}|a_{i+1}, U_{i+1})$ and $P^b(U_{i+1}, z_{i}|a_{i}, \mathcal{H}^o_{i-1})$, the identification of $P^b(\mathcal{F}^o_{i+1}, z_i|a_i, \mathcal{H}^o_{i-1}||a_{i+1})$ follows from the fact that $$P^b(\mathcal{F}^o_{i+1}, z_i|a_i, \mathcal{H}^o_{i-1}||a_{i+1}) = P^b(\mathcal{F}_{i+1}|a_{i+1}, U_{i+1})P^b(U_{i+1}, z_{i}|a_{i}, \mathcal{H}^o_{i-1}).$$




\section{Experimental Details}\label{exper-details}
\subsection{Choice of Parameters in Experiment 2}
In these experiments, we give a choice of parameters violating Assumption \ref{invertibility} by making $\rank(P^b(U_i|a_i, Z_{i-1})) = 1 < |\mathcal{U}|$.

Define $$P^b(Z_i|U_i) = \begin{pmatrix}
\rho_0 & \rho_1\\
1 - \rho_0 & 1 - \rho_1
\end{pmatrix}, P^b(U_{i+1}|a_i=0, U_i) = \begin{pmatrix}
a & c\\
1 - a & 1 - c
\end{pmatrix}, P^b(U_{i+1}|a_i=1, U_i) = \begin{pmatrix}
b & d\\
1 - b & 1 - d
\end{pmatrix},$$ $$\pi_b^{(i)}(A_i|U_i) = \begin{pmatrix}
\epsilon & \delta\\
1-\epsilon & 1-\delta
\end{pmatrix}, \text{ and }P^b(U_i) = \begin{pmatrix}
s_i\\
1-s_i
\end{pmatrix}.$$ Note that $s_i$ for $i > 0$ are not actually free to choose (but rather determined by the other parameters). Also, define $$P^b(Z_{i-1}|U_i) = \begin{pmatrix}
\zeta^i_0 & \zeta^i_1\\
1 - \zeta^i_0 & 1 - \zeta^i_1
\end{pmatrix}.$$ Noting that $$P^b(u_i|a_i, z_{i-1}) = \frac{P^b(a_i, z_{i-1}|u_i)P^b(u_i)}{\sum_{u_i'}P^b(a_i, z_{i-1}|u'_i)P^b(u'_i)},$$ we have that $$P^b(U_i|a_i=0, Z_{i-1}) = \begin{pmatrix}
\frac{\epsilon \zeta_0 s_i}{\epsilon\zeta_0 s_i + \delta\zeta_1(1-s_i)} & \frac{\epsilon(1-\zeta_0)s_i}{\epsilon(1-\zeta_0)s_i + \delta(1-\zeta_1)(1-s_i)}\\
\frac{\delta\zeta_1(1-s_i)}{\epsilon\zeta_0 s_i + \delta\zeta_1(1-s_i)} & \frac{\delta(1-\zeta_1)(1-s_i)}{\epsilon(1-\zeta_0)s_i + \delta(1-\zeta_1)(1-s_i)}
\end{pmatrix}$$ and $$P^b(U_i|a_i=1, Z_{i-1}) = \begin{pmatrix}
\frac{(1-\epsilon) \zeta_0 s_i}{(1-\epsilon)\zeta_0 s_i + (1-\delta)\zeta_1(1-s_i)} & \frac{(1-\epsilon)(1-\zeta_0)s_i}{(1-\epsilon)(1-\zeta_0)s_i + (1-\delta)(1-\zeta_1)(1-s_i)}\\
\frac{(1-\delta)\zeta_1(1-s_i)}{(1-\epsilon)\zeta_0 s_i + (1-\delta)\zeta_1(1-s_i)} & \frac{(1-\delta)(1-\zeta_1)(1-s_i)}{(1-\epsilon)(1-\zeta_0)s_i + (1-\delta)(1-\zeta_1)(1-s_i)}
\end{pmatrix}.$$ 

A sufficient condition for $\rank(P^b(U_i|a_i=0, Z_{i-1})) = \rank(P^b(U_i|a_i=1, Z_{i-1})) = 1 < 2$ is that $\zeta_0 = \zeta_1$. We now write $P^b(Z_{i-1}|U_i)$ in terms of the other parameters. 

Using the fact that $$P^b(u_{i-1}, a_{i-1}|u_i) = \frac{P^b(u_i|u_{i-1}, a_{i-1})\pi_b^{(i-1)}(a_{i-1}|u_{i-1})P^b(u_{i-1})}{\sum_{a_{i-1}', u_{i-1}'}P^b(u_i|u'_{i-1}, a'_{i-1})\pi_b^{(i-1)}(a'_{i-1}|u'_{i-1})P^b(u'_{i-1})}$$ we have that $$P^b(U_{i-1}, a_{i-1} = 0|U_i) = \begin{pmatrix}
\frac{a\epsilon s_{i-1}}{a\epsilon s_{i-1} + b(1-\epsilon)s_{i-1} + c\delta(1-s_{i-1}) + d(1-\delta)(1-s_{i-1})} & \frac{(1-a)\epsilon s_{i-1}}{1-(a\epsilon s_{i-1} + b(1-\epsilon)s_{i-1} + c\delta(1-s_{i-1}) + d(1-\delta)(1-s_{i-1}))}\\
\frac{c\delta(1-s_{i-1})}{a\epsilon s_{i-1} + b(1-\epsilon)s_{i-1} + c\delta(1-s_{i-1}) + d(1-\delta)(1-s_{i-1})} & \frac{(1-c)\delta (1-s_{i-1})}{1-(a\epsilon s_{i-1} + b(1-\epsilon)s_{i-1} + c\delta(1-s_{i-1}) + d(1-\delta)(1-s_{i-1}))}
\end{pmatrix}$$ and $$P^b(U_{i-1}, a_{i-1} = 1|U_i) = \begin{pmatrix}
\frac{b(1-\epsilon)s_{i-1}}{a\epsilon s_{i-1} + b(1-\epsilon)s_{i-1} + c\delta(1-s_{i-1}) + d(1-\delta)(1-s_{i-1})} & \frac{(1-b)(1-\epsilon) s_{i-1}}{1-(a\epsilon s_{i-1} + b(1-\epsilon)s_{i-1} + c\delta(1-s_{i-1}) + d(1-\delta)(1-s_{i-1}))}\\
\frac{d(1-\delta)(1-s_{i-1})}{a\epsilon s_{i-1} + b(1-\epsilon)s_{i-1} + c\delta(1-s_{i-1}) + d(1-\delta)(1-s_{i-1})} & \frac{(1-d)(1-\delta) (1-s_{i-1})}{1-(a\epsilon s_{i-1} + b(1-\epsilon)s_{i-1} + c\delta(1-s_{i-1}) + d(1-\delta)(1-s_{i-1}))}
\end{pmatrix}.$$ Marginalizing across $a_{i-1}$ gives that $$P^b(U_{i-1}|U_i) = \begin{pmatrix}
\frac{a\epsilon s_{i-1} + b(1-\epsilon)s_{i-1}}{a\epsilon s_{i-1} + b(1-\epsilon)s_{i-1} + c\delta(1-s_{i-1}) + d(1-\delta)(1-s_{i-1})} & \frac{(1-a)\epsilon s_{i-1} + (1-b)(1-\epsilon) s_{i-1}}{1-(a\epsilon s_{i-1} + b(1-\epsilon)s_{i-1} + c\delta(1-s_{i-1}) + d(1-\delta)(1-s_{i-1}))}\\
\frac{c\delta(1-s_{i-1}) + d(1-\delta)(1-s_{i-1})}{a\epsilon s_{i-1} + b(1-\epsilon)s_{i-1} + c\delta(1-s_{i-1}) + d(1-\delta)(1-s_{i-1})} & \frac{(1-c)\delta (1-s_{i-1}) + (1-d)(1-\delta) (1-s_{i-1})}{1-(a\epsilon s_{i-1} + b(1-\epsilon)s_{i-1} + c\delta(1-s_{i-1}) + d(1-\delta)(1-s_{i-1}))}
\end{pmatrix}$$

Finally, using the fact that $$P^b(z_{i-1}|u_i) = \sum_{u_{i-1}'}P^b(z_{i-1}|u_{i-1}')P^b(u_{i-1}'|u_i),$$ we have that $$\zeta_0 = \frac{\rho_0(a\epsilon s_{i-1} + b(1-\epsilon)s_{i-1}) + \rho_1(c\delta(1-s_{i-1}) + d(1-\delta)(1-s_{i-1}))}{a\epsilon s_{i-1} + b(1-\epsilon)s_{i-1} + c\delta(1-s_{i-1}) + d(1-\delta)(1-s_{i-1})}$$ and $$\zeta_1 = \frac{\rho_0((1-a)\epsilon s_{i-1} + (1-b)(1-\epsilon)s_{i-1}) + \rho_1((1-c)\delta(1-s_{i-1}) + (1-d)(1-\delta)(1-s_{i-1}))}{1 - (a\epsilon s_{i-1} + b(1-\epsilon)s_{i-1} + c\delta(1-s_{i-1}) + d(1-\delta)(1-s_{i-1}))}.$$ Finally,  \begin{multline}\label{code-equation}\zeta_0 = \zeta_1 \impliedby (\rho_0 s_{i-1} + \rho_1(1-s_{i-1}))(a\epsilon s_{i-1} + b(1-\epsilon)s_{i-1} + c\delta(1-s_{i-1}) + d(1-\delta)(1-s_{i-1})) \\ = \rho_0(a\epsilon s_{i-1} + b(1-\epsilon)s_{i-1}) + \rho_1(c\delta(1-s_{i-1}) + d(1-\delta)(1-s_{i-1})).\end{multline} Thus, for Experiment 2, we chose a setting of parameters so as to satisfy equation \eqref{code-equation}, thus violating Assumption \ref{invertibility}.

\subsection{Coarsening the Support in Experiment 3}
As mentioned in Remark \ref{noisy-remark}, one way of dealing with noisy estimates is by coarsening the support of the future distribution. This is done in our third experiment by taking the future, $f_i$, of $u_i$ to consist only of $a_{i+1}$. That is, the set $\mathcal{F}^o_i$ is simply $\mathcal{A}_{i+1}$. This has the benefit of having the same cardinality as $|\mathcal{U}|$ and thus allowing us to bypass identification of a strict subset of $|\mathcal{U}|$ (nearly) linearly independent columns. Additionally, the violation of Assumption \ref{history-rank} in Experiment 3---via way of making $\rank(P^b(Z_i|U_i)) = 1$ as mentioned in Appendix \ref{violate}---does not render $\rank(P^b(A_{i+1}|U_i)) = 1$, thus allowing for the use of this choice of future in Experiment 3 to guarantee that $\rank(P^b(\mathcal{F}^o_i|a_i, \mathcal{H}^o_{i-1})) = |\mathcal{U}|$.

\subsection{Adaptive Search in Experiment 3}\label{adaptive-search}
In the adaptive search conducted in Experiment 3, we first generate the POMDP parameters uniformly at random until the matrices $P^{\tau}_{i,a_i}$ have sufficiently low condition number and the observable product, $(P^\tau_{i,a_i})^+Q^\tau_{i,a_i,r_i}$, to be diagonalized has sufficiently low eigenvector condition number, which, in the case of $2\times 2$ matrices is simply the inverse of the spectral gap \citep{stewart}. After such a setting of parameters has been identified, we then again randomly generate parameters from a uniform distribution biased in the direction of what appears to be the optimal setting, with stricter stopping conditions on the condition numbers. For example, if, after the first round, one of the probability parameter settings is $0.9$, we then draw that parameter from $\text{Unif}[0.9,1]$ in the next round so as to find a setting of parameters which yields even better conditioned matrices.

\subsection{Robustness of Theorem \ref{history-thm} in Environments Violating Assumption \ref{history-rank}}
The setup for this experiment is essentially the same as that of Experiment 1 described in Section \ref{sec:experiments}, wherein the residual of each estimate of $P^e(r_t = 1)$ is averaged across $50$ trials. Figure \ref{fig:future_exp2} shows results for the estimators of Theorems \ref{history-thm} and \ref{future_thm}, where the average is over environments violating Theorem \ref{history-thm}'s Assumption \ref{history-rank}. In particular, note that, even though the environments violate Assumption \ref{history-rank}, the average residuals produced by Theorem \ref{history-thm}'s estimators are very low, indicating the high-quality estimates produced by Theorem \ref{history-thm} even in environments which violate its assumptions. Theorem \ref{future_thm}, however, produces, on average, much larger residuals. This is in contrast to Experiment 3 in Figure \ref{fig:future_exp}, wherein Theorem \ref{future_thm}'s estimator performs much better---in fact, even better than Theorem \ref{history-thm}'s. The reason for the diminished quality of Theorem \ref{future_thm} in Experiment 4 is due to the fact that, on average, the matrices that need to be inverted and diagonalized have extremely poor matrix and eigenvector condition numbers, resulting in severe degradation of Theorem \ref{future_thm}, as mentioned in Section \ref{sec:experiments}, even though its assumptions are not explicitly violated.

\begin{figure}
    \centering
    \includegraphics[scale=0.52]{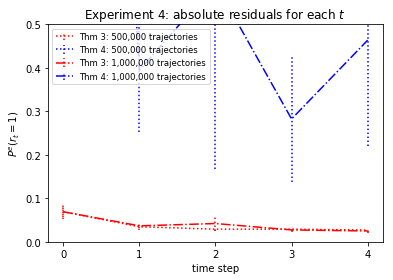}
    \caption{Average absolute residuals $|\hat{P}^e(r_t) -P^e(r_t)|$ over randomly generated POMDP environments which violate Assumption \ref{history-rank}.}
    \label{fig:future_exp2}
\end{figure}

\subsection{Choice of Parameters in Experiments 3 and 4}\label{violate}
In Experiments 3 and 4, in order to violate Assumption \ref{history-rank}, we simply chose POMDPs in which $\rank(P^b(Z_i|U_i)) < |\mathcal{U}|$. As $|\mathcal{Z}| = |\mathcal{U}| = 2$ in our experiments, this meant that we chose $$P^b(Z_i|U_i) = \begin{pmatrix}
\rho & \rho\\
1-\rho & 1-\rho
\end{pmatrix},$$ for $\rho \in [0,1]$.

\subsection{Importance Sampling Experiments and Results}
\begin{figure}[h]
\begin{center}
\begin{tabular}{||c c c c||} 
 \hline
 Number of trajectories & True value $(\pm \text{standard error})$ & Theorem \ref{is-thm} estimated value & Baseline IS estimated value \\ [0.5ex] 
 \hline\hline
 $1{,}000{,}000$ & $1{.}119 \pm 0.002$ & $157{.}639$ & $\mathbf{1{.}430}$ \\ 
 \hline
  $5{,}000{,}000$ & $1{.}119 \pm 0.002$ & $\mathbf{1{.}116}$ & $1{.}445$ \\ [1ex] 
 \hline
\end{tabular}
\end{center}
\caption{Estimated values compared to true value of estimators of Theorem \ref{is-thm} and the baseline for horizon $H = 1$.}
\label{fig:is1}
\end{figure}

\begin{figure}[h]
\begin{center}
\begin{tabular}{||c c c c||} 
 \hline
 Number of trajectories & True value $(\pm \text{standard error})$ & Theorem \ref{is-thm} estimated value & Baseline IS estimated value \\ [0.5ex] 
 \hline\hline
 $1{,}000{,}000$ & $1{.}863 \pm 0.002$ & $309{.}309$ & $\mathbf{2.314}$ \\ 
 \hline
 $5{,}000{,}000$ & $1{.}863 \pm 0.002$ & $2{.}878$ & $\mathbf{2{.}166}$ \\ [1ex] 
 \hline
\end{tabular}
\end{center}
\caption{Estimated values compared to true value of estimators of Theorem \ref{is-thm} and the baseline for horizon $H = 2$.}
\label{fig:is2}
\end{figure}

We also conducted experiments comparing our IS algorithm to that of \citet{Tennenholtz_Shalit_Mannor_2020} in an environment---again, chosen through adaptive search, similar to that described in Appendix \ref{adaptive-search}, so as to produce well-conditioned matrices---meeting the assumptions of Theorem \ref{is-thm} but not the sufficiency conditions of the baseline IS estimator of \citet{Tennenholtz_Shalit_Mannor_2020}. Figures \ref{fig:is1} and \ref{fig:is2} shows the results of the two estimators compared at both $1{,}000{,}000$ and $5{,}000{,}000$ number of sampled trajectories at horizons $H = 1$ and $H = 2$. In Figure \ref{fig:is1}, we see that at horizon $1$ (i.e., two time-steps), our estimator produces very inaccurate results for $1{,}000{,}000$ sampled trajectories, but is much more accurate with $5{,}000{,}000$ trajectories than even the baseline, which is known to be biased and inconsistent.

However, for $H = 2$, it appears that even $5{,}000{,}000$ number of trajectories is not enough to beat the baseline IS estimator of \citet{Tennenholtz_Shalit_Mannor_2020}. While the quality of Theorem \ref{is-thm}'s estimator improves greatly from $1{,}000{,}000$ to $5{,}000{,}000$ number of trajectories, Figure \ref{fig:is2} illustrates that our estimator requires even more trajectories in order to produce more accurate results than the biased and inconsistent baseline.

\section{OPE via Importance Sampling}\label{is-proof}
We now give a derivation for the IS estimator given in Section 6. As shown in the proof sketch of Theorem 6, we can write \begin{equation}v_H(\pi_e) =  \mathbb{E}[vW(v,\tau^o)]\end{equation} with \begin{equation}W(v,\tau^o) := \frac{1}{P^b(v|\tau^o)}\Pi_e(\tau^o)\Gamma_b(v,\tau^o)\end{equation} and \begin{equation}\Gamma_b(v,\tau^o) = \sum_{r_{0:H}: \sum r_i = v}\sum_{u_{0:H}}\frac{\prod_{i=0}^HP^b(r_i|u_i)P^b(u_{0:H}|\tau^o)}{\Pi_b(u_{0:H})}.\end{equation} Hence, all that must be shown is the identifiability of $\Gamma_b(v,\tau^o)$, as the terms of $\frac{1}{P^b(v|\tau^o)}\Pi_e(\tau^o)$ are all either given or directly estimable from observable data.

\subsection{Identifiability of $\Gamma_b(v,\tau^o)$}\label{is-identification-proof}
First, we show that under certain rank and distinctness assumptions, the vectors $\pi_b^{(i)}(a_i|U_i)$ and $P^b(r_i|U_i)$ are identifiable by extending the diagonalization method of \citet{kuroki}. We will treat both of the above vectors individually. Again, letting $\kappa = |\mathcal{U}|$, we first define a matrix which will be used in both derivations: $$\Delta_{i,z_{i+1}} := \diag(P^b(z_{i+1}| u_i^{(1)}), \ldots, P^b(z_{i+1}|u_i^{(\kappa)}).$$ The ordering $u_i^{(j)}$ is such that the elements on the diagonal of $\Delta_{i,z_{i+1}}$ are in non-decreasing order. That is, $P^b(z_{i+1}| u_i^{(1)}) \geq \ldots \geq P^b(z_{i+1}|u_i^{(\kappa)}))$. Furthermore, we will let $z^j, r^j$, and $a^j$ denote the $j$th elements of the sets $\mathcal{Z}$, $\mathcal{R}$, and $\mathcal{A}$, respectively. Henceforth, the vectors $\pi_b^{(i)}(a_i|U_i)$ and $P^b(z_i|U_i)$ are defined such that their $j$th entries are $\pi_b^{(i)}(a_i|u^{(j)}_i)$ and $P^b(z_i|u^{(j)}_i)$, respectively. Throughout our identification procedure for both of these vectors, we make the following crucial distinctness assumption, which will allow for unique eigendecomposition (up to constants):

\begin{assumption}[Distinctness]\label{appendix_distinctness}
For each $z_{i+1} \in \mathcal{Z}$, the probabilities $\{P^b(z_{i+1}|u_i): u_i \in \mathcal{U}\}$ are all distinct.
\end{assumption}

\subsubsection{Identifying $\pi_b^{(i)}(a_i|U_i)$}\label{policy}
Again, similar to those defined by \citet{kuroki}, we first define the following matrices which will be key to the analysis, explicitly state all assumptions in terms of them, and then show the result. 

Define the matrix following matrices $$P^a_{i,z_i} := \begin{pmatrix}
1 & P^b(a_i^1|z_i) & \cdots & P^b(a_i^{|\mathcal{A}|-1}|z_i)\\
P^b(z_{i-1}^1|z_i) & P^b(a_i^1, z_{i-1}^1|z_i) & \cdots & P^b(a_i^{|\mathcal{A}|-1}, z_{i-1}^1|z_i)\\
\vdots & \vdots & \ddots & \vdots\\
P^b(z_{i-1}^{|\mathcal{Z}|-1}|z_i) & P^b(z_{i-1}^{|\mathcal{Z}|-1}, a_i^1|z_i) & \cdots & P^b(a_i^{|\mathcal{A}|-1}, z_{i-1}^{|\mathcal{Z}|-1}|z_i)
\end{pmatrix},$$  $$Q^a_{i, z_i, z_{i+1}} := \begin{pmatrix}
P^b(z_{i+1}|z_i) & P^b(a_i^{1}, z_{i+1}|z_i) & \cdots & P^b(a_i^{|\mathcal{A}|-1}, z_{i+1}|z_i)\\
P^b(z_{i-1}^1, z_{i+1}|z_i) & P^b(a_i^1, z_{i-1}^1, z_{i+1}|z_i) & \cdots & P^b(a_i^{|\mathcal{A}|-1}, z_{i-1}^1, z_{i+1}|z_i)\\
\vdots & \vdots & \ddots & \vdots\\
P^b(z_{i-1}^{|\mathcal{Z}|-1}, z_{i+1}|z_i) & P^b(a_i^1, z_{i-1}^{|\mathcal{Z}|-1}, z_{i+1}|z_i) & \cdots & P^b(a_i^{|\mathcal{A}|-1}, z_{i-1}^{|\mathcal{Z}|-1}, z_{i+1}|z_i)
\end{pmatrix},$$ $$U^a_i := \begin{pmatrix}
1 & \pi_b^{(i)}(a_i^1|u_i^{(1)}) & \cdots & \pi_b^{(i)}(a_i^{|\mathcal{A}|-1}|u_i^{(1)})\\
\vdots & \vdots & \ddots & \vdots\\
1 & \pi^{(i)}_b(a_i^1|u_i^{(\kappa)}) & \cdots & \pi_b^{(i)}(a_i^{|\mathcal{A}|-1}|u_i^{(\kappa)})
\end{pmatrix},$$ $$R^a_{i, z_i} = \begin{pmatrix}
1 & P^b(z_{i-1}^1|z_i, u_i^{(1)}) & \cdots & P^b(z_{i-1}^{|\mathcal{Z}|-1}|z_i, u_i^{(1)})\\
\vdots & \vdots & \ddots & \vdots\\
1 & P^b(z_{i-1}^1|z_i, u_i^{(\kappa)}) & \cdots & P^b(z_{i-1}^{|\mathcal{Z}|-1}|z_i, u_i^{(\kappa)})
\end{pmatrix},$$ $$M^a_{i,z_i} := \diag(P^b(u_i^{(1)}|z_i), \ldots, P^b(u_i^{(\kappa)}|z_i)).$$

In addition to Assumptions \ref{necessary}, \ref{reward}, and \ref{appendix_distinctness}, we also make the following rank assumption:

\begin{assumption}[Rank]\label{appendix_rank}
The matrices $P^a_{i,z_i}$ and $Q^a_{i,z_i}$ have rank at least $|\mathcal{U}|$.
\end{assumption}

With these conditions, we now show how to obtain the matrix $U^a_i$, which contains the desired probabilities, via diagonalization of observable matrices.

Again, we have that \begin{equation}\label{Pa}
    P^a_{i,z_i} = R^a_{i,z_i}{}^TM^a_{i,z_i}U^a_i\end{equation} and 
    \begin{equation}\label{Qa}Q^a_{i,z_i,z_{i+1}} = R^a_{i,z_i}{}^TM^a_{i,z_i}\Delta_{i,z_{i+1}}U^a_i
\end{equation}

The identification of $U^a_i$ from equations \eqref{Pa} and \eqref{Qa} is then essentially the same as that of $P^b(\mathcal{F}^o_{i+1}, z_i|a_i, \mathcal{H}^o_{i-1}||a_{i+1})$ in Appendix \ref{future-proof}, wherein we can either identify $|\mathcal{U}|$ independent columns of $P^a_{i,z_i}$ (by looking at condition numbers of the noisey estimates of submatrices of $P^a_{i,z_i}$) or coarsen the support of $|\mathcal{A}|$ to make $P^a_{i,z_i}, Q^a_{i,z_i,z_{i+1}}$, and $U^a_i$ have second dimension equal to $|\mathcal{U}|$ and then employ the eigendecomposition strategy of \citet{kuroki} by diagonalizing $(P^a_{i,z_i}L(\Lambda))^+Q^a_{i,z_i,z_{i+1}}L(\Lambda)$ in the former case or $(P^a_{i,z_i})^+Q^a_{i,z_i,z_{i+1}}$ in the latter, and renormalizing by the appropriate constants to make the first column of the left eigenvector matrix equal to $\vec{1}$. 

\subsubsection{Identifying $P^b(r_i|U_i)$}
The identification procedure for $P^b(r_i|U_i)$ is essentially the same as that in the previous section, except that we use the following matrices:

$$P^r_{i,a_i, z_i} = \begin{pmatrix}
1 & P^b(r_i^1|z_i) & \cdots & P^b(r_i^{|\mathcal{R}|-1}|z_i)\\
P^b(z_{i-1}^1, a_i|z_i) & P^b(r_i^1, z_{i-1}^1, a_i|z_i) & \cdots & P^b(r_i^{|\mathcal{R}|-1}, z_{i-1}^1, a_i|z_i)\\
\vdots & \vdots & \ddots & \vdots\\
P^b(z_{i-1}^{|\mathcal{Z}|-1}, a_i|z_i) & P^b(r_i^{1}, z_{i-1}^{|\mathcal{Z}|-1}, a_i|z_i) & \cdots & P^b(r_i^{|\mathcal{R}|-1}, z_{i-1}^{|\mathcal{Z}|-1}, a_i|z_i)
\end{pmatrix},$$ $$Q^r_{i,z_i, z_{i+1}, a_i} = \begin{pmatrix}
P^b(z_{i+1}|z_i) & P^b(r_i^{1}, z_{i+1}|z_i) & \cdots & P^b(r_i^{|\mathcal{R}|-1}, z_{i+1}|z_i)\\
P^b(z_{i-1}^1, z_{i+1}, a_i|z_i) & P^b(r_i^1, z_{i-1}^1, z_{i+1}, a_i|z_i) & \cdots & P^b(r_i^{|\mathcal{R}|-1}, z_{i-1}^1, z_{i+1}, a_i|z_i)\\
\vdots & \vdots & \ddots & \vdots\\
P^b(z_{i-1}^{|\mathcal{Z}|-1}, z_{i+1}, a_i|z_i) & P^b(r_i^1, z_{i-1}^{|\mathcal{Z}|-1}, z_{i+1}, a_i|z_i) & \cdots & P^b(r_i^{|\mathcal{R}|-1}, z_{i-1}^{|\mathcal{Z}|-1}, z_{i+1}, a_i|z_i)
\end{pmatrix},$$ $$U^r_i = \begin{pmatrix}
1 & P^b(r_i^1|u_i^{(1)}) & \cdots & P^b(r_i^{|\mathcal{R}|-1}|u_i^{(1)}\\
\vdots & \vdots & \ddots & \vdots\\
1 & P^b(r_i^1|u_i^{(k)}) & \cdots & P^b(r_i^{|\mathcal{R}|-1}|u_i^{(k)})
\end{pmatrix},$$ $$R^r_{i, z_i, a_i} = \begin{pmatrix}
1 & P^b(a_i, z_{i-1}^1|z_i, u_i^{(1)}) & \cdots & P^b(a_i, z_{i-1}^{|\mathcal{Z}|-1}|z_i, u_i^{(1)})\\
\vdots & \vdots & \ddots & \vdots\\
1 & P^b(a_i, z_{i-1}^1|z_i, u_i^{(k)}) & \cdots & P^b(a_i, z_{i-1}^{|\mathcal{Z}|-1}|z_i, u_i^{(k)})
\end{pmatrix},$$ and $$M^r_{i,z_i} = \diag(P^b(u_i^{(1)}|z_i), \ldots, P^b(u_i^{(k)}|z_i)).$$ Again, analogously to the previous section, in addition to Assumptions \ref{necessary}, \ref{reward}, and \ref{appendix_distinctness}, we also make the following rank assumption:

\begin{assumption}[Rank]\label{appendix_rank_reward}
The matrices $P^r_{i,z_i}$ and $Q^r_{i,z_i}$ have rank at least $|\mathcal{U}|$.
\end{assumption}

Importantly, we have the same identities on the above matrices:
\begin{equation}\label{Pr}
    P^r_{i,z_i} = R^r_{i,z_i}{}^TM^r_{i,z_i}U^a_i\end{equation} and 
    \begin{equation}\label{Qr}Q^r_{i,z_i,z_{i+1}} = R^r_{i,z_i}{}^TM^r_{i,z_i}\Delta_{i,z_{i+1}}U^r_i
\end{equation}

The procedure for identifying $U^r_i$ is then precisely the same as that described in the previous section and Appendix \ref{future-proof}. Importantly, because the diagonal matrix is the \textit{same} matrix $\Delta_{i,z_{i+1}}$ used to identify $\pi_b^{(i)}(a_i|U_i)$ and because of the distinctness prescribed by Assumption \ref{appendix_distinctness}, the ordering $u_i^{(1)}, \ldots, u_i^{(k)}$ is the same, so that when we identify the vector $P^b(r_i|U_i)$, the ordering of entries is indeed in corresponding order to our identification of $\pi_b^{(i)}(a_i|U_i)$.

We summarize the results from the previous $2$ sections here before we proceed with the rest of the identification procedure for $\Gamma_b(v,\tau^o)$.

\begin{theorem}
Under Assumptions \ref{necessary}, \ref{reward}, \ref{appendix_distinctness}, \ref{appendix_rank}, and \ref{appendix_rank_reward}, the vectors $\pi_b^{(i)}(a_i|U_i)$ and $P^b(r_i|U_i)$  are identifiable, under the consistent ordering that their $j$th entries are $\pi_b^{(i)}(a_i|u_i^{(j)})$ and $P^b(r_i|u_i^{(j)})$, respectively.
\end{theorem}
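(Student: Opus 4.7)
The plan is to apply a Kuroki-style eigendecomposition twice---once to recover $U^a_i$ encoding $\pi_b^{(i)}(a_i|U_i)$ and once to recover $U^r_i$ encoding $P^b(r_i|U_i)$---and crucially to use identical diagonal matrices $\Delta_{i,z_{i+1}}$ so that the induced ordering on latent states matches across both recoveries. Since the two prior subsections already lay out the mechanics in detail, the role of this theorem is to synthesize them and certify that the \emph{ordering} they produce is consistent. My proof proposal would therefore invoke the two constructions and then add the ordering argument on top.

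First, for $\pi_b^{(i)}(a_i|U_i)$, I would verify the factorizations $P^a_{i,z_i} = (R^a_{i,z_i})^T M^a_{i,z_i} U^a_i$ and $Q^a_{i,z_i,z_{i+1}} = (R^a_{i,z_i})^T M^a_{i,z_i} \Delta_{i,z_{i+1}} U^a_i$ by expanding entries via the law of total probability conditioned on $u_i$ and using the POMDP conditional-independence structure (in particular that $z_{i+1}$ is conditionally independent of $z_{i-1}, a_i, z_i$ given $u_i$). Under Assumption~\ref{necessary} and Assumption~\ref{appendix_rank}, the observable matrix $P^a_{i,z_i}$ has rank $|\mathcal{U}|$, so Gaussian elimination identifies a column-index set $\Lambda$ (containing index $1$) of size $|\mathcal{U}|$ such that $U^a_i(\Lambda) := U^a_i L(\Lambda)$ is invertible. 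Then
\begin{equation*}
(P^a_{i,z_i} L(\Lambda))^+ Q^a_{i,z_i,z_{i+1}} L(\Lambda) = (U^a_i(\Lambda))^{-1} \Delta_{i,z_{i+1}} U^a_i(\Lambda),
\end{equation*}
an eigendecomposition of an observable matrix. Assumption~\ref{appendix_distinctness} makes the eigenvalues of $\Delta_{i,z_{i+1}}$ distinct, so the left-eigenvector matrix is unique up to per-row scaling; normalizing so that the first entry of each row equals $1$ (the known value of the first column of $U^a_i$) pins down $U^a_i(\Lambda)$. The remaining columns of $U^a_i$ follow by propagating the identifiable linear-dependence coefficients from $P^a_{i,z_i}$.

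Second, I would run the same argument with $(P^r_{i,a_i,z_i}, Q^r_{i,z_i,z_{i+1},a_i})$ in place of $(P^a_{i,z_i}, Q^a_{i,z_i,z_{i+1}})$, obtaining $U^r_i$ from a diagonalization with \emph{the same} diagonal matrix $\Delta_{i,z_{i+1}}$. Here Assumption~\ref{reward} is what permits the analogous factorization: because $R$ depends only on $u_i$ and not on $a_i$, the reward plays the same causal role as a second emission of $u_i$, and $Q^r$ factors as $(R^r)^T M^r \Delta_{i,z_{i+1}} U^r_i$ with the diagonal piece untouched by the appearance of $a_i$ in the conditioning. The ordering of rows of $U^r_i$ is determined, exactly as for $U^a_i$, by arranging the distinct eigenvalues of $\Delta_{i,z_{i+1}}$ in non-increasing order. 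Since the eigenvalues $\{P^b(z_{i+1}|u_i^{(j)})\}$ are the \emph{same} for the two procedures, both $U^a_i$ and $U^r_i$ are indexed by the same permutation of $\mathcal{U}$, which yields the consistent ordering claimed.

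The main obstacle I anticipate is precisely this ordering argument---showing that the two eigendecompositions produce matching permutations of $\mathcal{U}$. The naive worry is that $U^a_i$ and $U^r_i$ are identified only up to independent re-orderings of their rows. What resolves this is the insistence on using one shared $\Delta_{i,z_{i+1}}$ (for the same $z_{i+1}$) in both procedures and the distinctness supplied by Assumption~\ref{appendix_distinctness}: a shared distinct spectrum gives one canonical ordering of eigenvectors, forcing the $j$th row of both $U^a_i$ and $U^r_i$ to correspond to the same latent state $u_i^{(j)}$. A minor secondary issue, already flagged in Remark~\ref{noisy-remark}, is selecting $\Lambda$ when $|\mathcal{A}|$ or $|\mathcal{R}|$ strictly exceeds $|\mathcal{U}|$; this is handled by coarsening supports or by picking the column subset minimizing condition number, and does not affect identifiability in the population.
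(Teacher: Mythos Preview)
Your proposal is correct and follows essentially the same approach as the paper: establish the factorizations $P^a = (R^a)^T M^a U^a$, $Q^a = (R^a)^T M^a \Delta_{i,z_{i+1}} U^a$ (and their reward analogues), apply the Kuroki-style eigendecomposition after selecting $|\mathcal{U}|$ linearly independent columns, renormalize eigenvectors so the first column equals $\vec{1}$, and then invoke the shared distinct spectrum of $\Delta_{i,z_{i+1}}$ to guarantee that the recovered rows of $U^a_i$ and $U^r_i$ correspond to the same ordering $u_i^{(1)},\ldots,u_i^{(\kappa)}$. The paper states the factorizations and the ordering argument with slightly less detail than you give, but the logical structure is identical.
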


We now proceed with the identification procedure for $\Gamma_b(v,\tau^o)$. The sole remaining ingredient is the identification of $P^b(U_{0:H}|\mathcal{T}_H^o)$, which we now derive.

\subsubsection{Identifying $P^b(U_{0:H}|\mathcal{T}_H^o)$}
\begin{lemma}
Under Assumptions \ref{necessary}, \ref{reward}, and \ref{expo-rank-IS}, the matrix $P^b(U_{0:H}|\mathcal{T}^o_{\mathcal{H}})$ is identifiable.
\end{lemma}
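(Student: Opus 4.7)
The plan is to recover $P^b(U_{0:H}|\mathcal{T}^o_H)$ by treating the full reward sequence as a multi-proxy of the hidden-state trajectory and inverting a Kronecker-product linear system whose factors $P^b(R_i|U_i)$ were already identified by Theorem~\ref{IS-identification}.

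First, I would exploit Assumption~\ref{reward} (reward depends only on latent state) together with the POMDP's conditional-independence structure to note that, given $u_{0:H}$, the rewards $r_0,\ldots,r_H$ are mutually independent with $r_i\sim P^b(\cdot|u_i)$, and $\tau^o$ carries no extra information about $r_i$ beyond $u_i$. This yields the factorization
\begin{equation*}
P^b(r_{0:H}, u_{0:H}, \tau^o) = \left(\prod_{i=0}^H P^b(r_i | u_i)\right) P^b(u_{0:H}, \tau^o).
\end{equation*}
Summing over $u_{0:H}$ and vectorizing $(r_{0:H})$ along the rows and $(u_{0:H})$ along the columns gives the matrix identity
\begin{equation*}
P^b(R_0,\ldots, R_H, \tau^o) = \left(\bigotimes_{i=0}^H P^b(R_i|U_i)\right) P^b(U_{0:H}, \tau^o),
\end{equation*}
where the left-hand side is a $|\mathcal{R}|^{H+1}\times |\mathcal{T}^o_H|$ matrix of directly observable joint probabilities.

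Next, Assumption~\ref{expo-rank-IS} gives $\rank(P^b(R_i|U_i))=|\mathcal{U}|$, so each factor has full column rank. Since the Kronecker product of full-column-rank matrices again has full column rank, the coefficient matrix is left-invertible via its Moore--Penrose pseudoinverse. Using Theorem~\ref{IS-identification} to identify each $P^b(R_i|U_i)$, one solves
\begin{equation*}
P^b(U_{0:H}, \tau^o) = \left(\bigotimes_{i=0}^H P^b(R_i|U_i)\right)^{+} P^b(R_0,\ldots, R_H, \tau^o),
\end{equation*}
and divides each column by the observable $P^b(\tau^o)$ to obtain $P^b(U_{0:H}|\tau^o)$.

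The one subtle point---and the main thing to address---is that Theorem~\ref{IS-identification} recovers each $P^b(R_i|U_i)$ only up to an unknown permutation of the columns (a relabeling of hidden states at time $i$), and these permutations may differ across $i$. This is harmless for the downstream use inside $\Gamma_b(v,\tau^o)$, since $P^b(u_i|\cdot)$ is paired with $\pi_b^{(i)}(a_i|u_i)$ and $P^b(r_i|u_i)$ under the \emph{same} per-timestep ordering and the summation $\sum_{u_{0:H}}$ is invariant under relabeling. The remaining work is routine bookkeeping of vectorization conventions so that the Kronecker-product identity above is literally correct.
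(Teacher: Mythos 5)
Your proposal is correct and follows essentially the same route as the paper's proof: both factor $P^b(R_{0:H}|U_{0:H})$ as the Kronecker product $\bigotimes_i P^b(R_i|U_i)$, invoke Assumption~\ref{expo-rank-IS} to get full column rank, and left-invert against the observable reward-trajectory matrix via the Moore--Penrose pseudoinverse (your use of joints followed by division by $P^b(\tau^o)$ versus the paper's direct use of conditionals is immaterial). Your explicit handling of the per-timestep ordering ambiguity is a point the paper only notes parenthetically, and your resolution---that the ordering is consistent within each timestep and that $\Gamma_b$ is invariant under relabeling---is the right one.
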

\begin{proof}
Note that $P^b(R_{0:H}|U_{0:H}) = \bigotimes_{h=0}^HP^b(R_h|U_h)$. Thus $\rank(P^b(R_{0:H}|U_{0:H})) = |\mathcal{U}|^{H+1}$ by Assumption \ref{expo-rank-IS}, and thus $P^b(R_{0:H}|U_{0:H})$ has full column rank. The identifiability of $P^b(R_h|U_h)$ (up to the unknown ordering) implies the identifiability of $P^b(R_{0:H}|U_{0:H})$ (also up to the unknown ordering). Finally, note that $$P^b(R_{0:H}|U_{0:H})P^b(U_{0:H}|\mathcal{T}^o_{\mathcal{H}}) = P^b(R_{0:H}|\mathcal{T}^o_{\mathcal{H}})$$ $$\implies P^b(U_{0:H}|\mathcal{T}^o_{\mathcal{H}}) = P^b(R_{0:H}|U_{0:H})^+P^b(R_{0:H}|\mathcal{T}^o_{\mathcal{H}}),$$ thus rendering $P^b(U_{0:H}|\mathcal{T}^o_{\mathcal{H}})$ identifiable, as $P^b(R_{0:H}|\mathcal{T}^o_{\mathcal{H}})$ is observable.
\end{proof}

\subsubsection{Combining Identifications}

Finally, recalling that \begin{equation*}\Gamma_b(v,\tau^o) = \sum_{r_{0:H}: \sum r_i = v}\sum_{u_{0:H}}\frac{\prod_{i=0}^HP^b(r_i|u_i)P^b(u_{0:H}|\tau^o)}{\Pi_b(u_{0:H})},\end{equation*} \begin{equation*} = \sum_{r_{0:H}: \sum r_i = v}\sum_{1 \leq j_0, \ldots, j_H \leq \kappa}\frac{\prod_{i=0}^HP^b(r_i|u^{(j_i)}_i)P^b(u_0^{(j_0)}, \ldots, u_H^{(j_H)}|\tau^o)}{\Pi_b(u_0^{(j_0)}, \ldots, u_H^{(j_H}))},\end{equation*} the above derivations allow for identification for \textit{each} of the terms $P^b(r_i|u^{(j_i)}_i)$, $P^b(u_0^{(j_0)}, \ldots, u_H^{(j_H)}|\tau^o)$, and $\Pi_b(u_0^{(j_0)}, \ldots, u_H^{(j_H}))$, thus implying the desired identifiability for the entire IS estimation procedure, as desired.

We summarize this result and all conditions below:

\begin{theorem}
Under Assumptions \ref{necessary}, \ref{reward}, \ref{expo-rank-IS}, \ref{main-alg-pos}, \ref{appendix_distinctness}, \ref{appendix_rank}, \ref{appendix_rank_reward}, the quantity $\Gamma_b(v,\tau^o)$ is identifiable for all values $v$ and $\tau^o \in \mathcal{T}^o_H$. Hence, the importance weights \begin{equation*}W(v,\tau^o) := \frac{1}{P^b(v|\tau^o)}\Pi_e(\tau^o)\Gamma_b(v,\tau^o)\end{equation*} are all identifiable, allowing for the IS procedure given by $v_H(\pi_e) =  \mathbb{E}[vW(v,\tau^o)]$
\end{theorem}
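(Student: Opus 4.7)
The plan is to prove Theorem \ref{is-thm} in two stages: first establish the expected-value identity for $v_H(\pi_e)$, then verify that every factor entering $W_{e,b}(v,\tau^o)$ can be recovered from behavior-policy data. The first stage essentially formalizes the sketch already in the body. Starting from $v_H(\pi_e) = \sum_{\tau} P^e(\tau) \sum_v v\, P^e(v|\tau)$, I would use that the POMDP's transition and observation kernels are policy-independent, so that $P^e(\tau)/P^b(\tau) = \Pi_e(\tau^o)/\Pi_b(u_{0:H})$, and that Assumption \ref{reward} gives $P^e(v|\tau) = P^b(v|u_{0:H})$. Decomposing $P^b(\tau) = P^b(u_{0:H}|\tau^o)\,P^b(\tau^o)$ and expanding $P^b(v|u_{0:H}) = \sum_{r_{0:H}:\,\sum r_i = v}\prod_i P^b(r_i|u_i)$ (using conditional independence of rewards across time given latent states, a consequence of Assumption \ref{reward}) groups everything into $\Gamma_b(v,\tau^o)$. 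Multiplying and dividing by $P^b(v|\tau^o)$---nonzero by Assumption \ref{main-alg-pos}---then rewrites the sum as $\mathbb{E}_{(\tau^o,v)\sim P^b}[v\,W_{e,b}(v,\tau^o)]$.

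The second stage is to argue identifiability of $W_{e,b}$ from observable behavior-policy quantities. The factor $P^b(v|\tau^o)$ is a directly observable conditional, and $\Pi_e(\tau^o)$ is a known function of the given $\pi_e$. Inside $\Gamma_b$, Theorem \ref{IS-identification} delivers the vectors $\pi_b^{(i)}(a_i|U_i)$ and $P^b(r_i|U_i)$, identified under a single consistent (but unknown) permutation of $\mathcal{U}$ at each time step $i$. What remains is to identify the latent posterior $P^b(U_{0:H}|\tau^o)$. I plan to use the fact that rewards are conditionally independent across time given states, so
\[
P^b(R_{0:H}|U_{0:H}) = \bigotimes_{i=0}^{H} P^b(R_i|U_i),
\]
which by Assumption \ref{expo-rank-IS} has rank $|\mathcal{U}|^{H+1}$ (Kronecker products preserve rank), i.e.\ full column rank. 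Applying a left pseudoinverse to the identity $P^b(R_{0:H}|\tau^o) = P^b(R_{0:H}|U_{0:H})\, P^b(U_{0:H}|\tau^o)$ yields
\[
P^b(U_{0:H}|\tau^o) = P^b(R_{0:H}|U_{0:H})^+\, P^b(R_{0:H}|\tau^o),
\]
in which the right-hand matrix is directly observable and the left factor is identified (up to the same unknown per-timestep permutation) via Theorem \ref{IS-identification}.

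The main obstacle I expect is bookkeeping of the unknown orderings returned by the eigendecompositions of Theorem \ref{IS-identification}. The key observation is that, at each time $i$, both $\pi_b^{(i)}(a_i|U_i)$ and $P^b(r_i|U_i)$ are recovered via the same eigenvalue matrix $\Delta_{i,z_{i+1}}$, so they share a single consistent ordering of $u_i^{(1)},\ldots,u_i^{(\kappa)}$. Across different time steps the permutations may differ, but the expression for $\Gamma_b$ marginalizes over all $u_{0:H}$, and the pseudoinverse construction of $P^b(U_{0:H}|\tau^o)$ inherits exactly the column ordering used to build $P^b(R_{0:H}|U_{0:H})$; thus the per-timestep relabelings cancel inside each summand of $\Gamma_b$. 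Positivity of $\pi_b^{(i)}(a_i|u_i)$ from Assumption \ref{main-alg-pos} guarantees $\Pi_b(u_{0:H}) > 0$, avoiding any $0/0$ issues in the importance weights, while the distinctness conditions invoked through Theorem \ref{IS-identification} are exactly what ensure that the required eigendecompositions produce a well-defined and unique ordering in the first place.
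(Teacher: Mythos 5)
Your proposal is correct and follows essentially the same route as the paper: the same telescoping decomposition of $v_H(\pi_e)$ into $\mathbb{E}[vW_{e,b}(v,\tau^o)]$, the same reliance on Theorem \ref{IS-identification} for $\pi_b^{(i)}(a_i|U_i)$ and $P^b(r_i|U_i)$, and the identical Kronecker-product/pseudoinverse argument $P^b(U_{0:H}|\tau^o) = P^b(R_{0:H}|U_{0:H})^+ P^b(R_{0:H}|\tau^o)$ under Assumption \ref{expo-rank-IS}. Your explicit accounting of why the unknown per-timestep orderings cancel inside the marginalization over $u_{0:H}$ is slightly more careful than the paper's brief remarks, but it is the same argument.
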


\end{document}